%% file: main.tex
\documentclass{article}

\usepackage{microtype}
\usepackage{graphicx}
\usepackage{subcaption}
\usepackage{booktabs} 

\usepackage{hyperref}

\usepackage[preprint]{icml2026}

\usepackage{amsmath}
\usepackage{amssymb}
\usepackage{pifont}
\usepackage{mathtools}
\usepackage{amsthm}
\usepackage{booktabs}
\usepackage[]{multirow}
\usepackage{makecell}
\usepackage[table]{xcolor}
\usepackage{graphicx}
\usepackage{bm}
\usepackage{pgfplots}
\usepackage{tikz}
\usepgfplotslibrary{groupplots}
\usepgfplotslibrary{fillbetween}
\usetikzlibrary{decorations.pathreplacing}
\usetikzlibrary{arrows.meta}
\usetikzlibrary{patterns}
\usepackage{xurl}

\usepackage{orcidlink}

\DeclareMathOperator{\E}{\mathbb{E}}
\usepackage{amsthm} 

\usepackage[accsupp]{axessibility}  
\usepackage{glossaries} 
\usepackage{xcolor}

\usepackage{algorithm}
\usepackage{algpseudocode}

\definecolor{Blue1}{RGB}{9, 0, 255}
\definecolor{Red1}{RGB}{204, 0, 0}
\definecolor{Green1}{RGB}{0, 92, 78}
\definecolor{DarkGreen}{RGB}{85,168,104}
\definecolor{DarkYellow}{RGB}{240,180,0}
\definecolor{DarkBlue}{RGB}{108,142,191}
\definecolor{DarkBlue2}{RGB}{51,153,255}
\definecolor{DarkBlue}{RGB}{30,64,175}
\definecolor{MediumBlue}{RGB}{59,130,246}
\definecolor{LightBlue}{RGB}{158,200,252}
\definecolor{DarkGreen}{RGB}{22,101,52}
\definecolor{MediumGreen}{RGB}{34,197,94}
\definecolor{LightGreen}{RGB}{154,234,189}
\definecolor{DarkOrange}{RGB}{154,52,18}
\definecolor{MediumOrange}{RGB}{249,115,22}
\definecolor{LightOrange}{RGB}{255,213,172}
\definecolor{DarkPurple}{RGB}{91,33,182}
\definecolor{MediumPurple}{RGB}{168,85,247}
\definecolor{LightPurple}{RGB}{237,233,254}
\definecolor{DarkYellow}{RGB}{234,179,8}
\definecolor{MediumYellow}{RGB}{255,222,100}
\definecolor{LightYellow}{RGB}{246,228,152}
\definecolor{DarkGrey}{RGB}{65,65,65}       
\definecolor{MediumGrey}{RGB}{128,128,128}  
\definecolor{LightGrey}{RGB}{200,200,200}
\definecolor{m}{RGB}{230, 230, 230} 

\definecolor{DarkGreen}{RGB}{85,168,104}
\definecolor{DarkYellow}{RGB}{240,180,0}
\definecolor{DarkBlue}{RGB}{108,142,191}
\definecolor{LightGrey}{RGB}{200,200,200}
\definecolor{m}{RGB}{230, 230, 230} 
\definecolor{nice_red}{rgb}{0.8, 0, 0}
\definecolor{icmlblue}{RGB}{31, 119, 180}
\definecolor{icmlorange}{RGB}{255, 127, 14}
\definecolor{icmlred}{HTML}{D62728}

\newcommand{\cmark}{\textcolor{green!70!black}{\ding{51}}}
\newcommand{\xmark}{\textcolor{red}{\ding{55}}}

\usepackage{ifthen}

\newboolean{shorttable}
\setboolean{shorttable}{true}

\usepackage[capitalize,noabbrev]{cleveref}

\theoremstyle{plain}
\newtheorem{theorem}{Theorem}[section]

\theoremstyle{definition}

\newtheorem{assumption}[theorem]{Assumption}
\theoremstyle{remark}

\usepackage[textsize=tiny]{todonotes}

\glsdisablehyper 
\makeglossaries

\newacronym{dl}{DL}{Deep learning}
\newacronym{fim}{FIM}{Fisher information matrix}
\newacronym{flop}{FLOP}{Floating point operation}
\newacronym{kfa}{KFA}{Kronecker-factored approximation}
\newacronym{ml}{ML}{Machine learning}
\newacronym{milp}{MILP}{Mixed integer linear programming}
\newacronym{ilp}{ILP}{Integer linear programming}
\newacronym{nas}{NAS}{neural architecture search}
\newacronym{svd}{SVD}{Singular value decomposition}
\newacronym{vit}{ViT}{Vision Transformer}
\newacronym{zcm}{ZCM}{Zero cross-moment}
\newacronym{pp}{p.p.}{percentage points}
\newacronym{pela}{PELA}{PELA: Learning Parameter-Efficient Models with Low-Rank Approximation}
\newacronym{miou}{mIoU}{mean intersection over union}

\newacronym{gfwsvd}{GFWSVD}{Generalized fisher-weighted svd: Scalable kronecker-factored fisher approximation for compressing large language models}
\newacronym{kfac}{KFAC}{Optimizing neural networks with kronecker-factored approximate curvature}
\newacronym{svd-llm}{SVD-LLM}{SVD-LLM: Truncation-aware singular value decomposition for large language model compression.}
\newacronym{ours}{FACTS}{Fisher approximation for compression of vision transformers with SVD} 
\newacronym{oursearch}{CoRS}{Constrained rank search} 

\newacronym{deit}{DeiT}{Training data-efficient image transformers \& distillation through attention} 
\newacronym{swin}{Swin}{Swin transformer: Hierarchical vision transformer using shifted windows.} 
\newacronym{convnext}{ConvNeXt}{A convnet for the 2020s} 
\newacronym{mamba}{MambaVision}{Mambavision: A hybrid mamba-transformer vision backbone.}

\newacronym{imagenet}{ImageNet}{ImageNet} 
\newacronym{coco}{COCO}{Common Objects in Context} 

\newcommand{\ours}{FACTS}
\icmltitlerunning{Mind the Approximation: Fisher-Weighted SVD Compression for Vision Transformers}

\begin{document}

\twocolumn[
  \icmltitle{Mind the Approximation: Fisher-Weighted SVD Compression for ViTs}
  \icmlsetsymbol{equal}{*}

  \begin{icmlauthorlist}
    \icmlauthor{Moritz Thoma}{tum,bmw,equal}
    \icmlauthor{Maximilian Groezinger}{bmw,equal}
    \icmlauthor{Maximilian Forstenhäusler}{bmw,uk}
    \icmlauthor{Emad Aghajanzadeh}{tum,bmw}
    \icmlauthor{Ryan Pegoud}{vienna,bmw}
    \icmlauthor{Manoj Rohit Vemparala}{bmw}
    \icmlauthor{Pierpaolo Mori}{bmw}
    \icmlauthor{Alexander Frickenstein}{bmw}
    \icmlauthor{Daniel Mueller-Gritschneder}{vienna}
    \icmlauthor{Ulf Schlichtmann}{tum}
  \end{icmlauthorlist}

  \icmlaffiliation{tum}{Technical University of Munich, DE}
  \icmlaffiliation{bmw}{BMW Group, DE}
  \icmlaffiliation{uk}{University of Glasgow, GB}
  \icmlaffiliation{vienna}{TU Wien, AT}

  \icmlcorrespondingauthor{Moritz Thoma}{moritz.thoma@tum.de}
  \icmlkeywords{Machine Learning, ICML}

  \vskip 0.3in
]
\printAffiliationsAndNotice{\icmlEqualContribution}

\begin{abstract}
  Model compression is essential to mitigate deployment challenges of growing machine learning models. In this area of research, \glslink{svd}{singular value decomposition (SVD)}\glsunset{svd}-based compression offers a compelling trade-off between computational efficiency and model accuracy.\ Fisher-weighted \gls{svd} in particular provides principled, loss-aware compression. However, we find that improving the fidelity of Fisher approximation used in the compression is poorly predictive of post-compression accuracy for \glspl{vit}. Motivated by this observation, we propose \glslink{ours}{FACTS}\glsunset{ours}, a structured \textbf{F}isher \textbf{A}pproximation tailored to \textbf{C}ompressing Vi\textbf{T}s with Fisher-weighted \textbf{S}VD, which enforces token-local aggregation while preserving within-token activation-gradient dependence.\ Additionally, we introduce a fast \glslink{oursearch}{\textbf{Co}nstrained \textbf{R}ank \textbf{S}earch (CoRS)}\glsunset{oursearch}, that optimizes layer-wise rank allocation while adhering to a fixed \glslink{flop}{floating point operation (FLOP)}\glsunset{flop} constraint. Extensive experiments across \glspl{vit} and hybrid architectures demonstrate that \gls{ours} consistently improves accuracy-efficiency trade-offs without requiring finetuning. Notably, it outperforms the strongest \gls{svd} baseline by up to +5.8 \gls{pp} Top-1 on Swin-B, with further gains driven by our search method. \textit{\href{https://github.com/MoritzTho/FACTS}{Code on Github.}}
\end{abstract}

\input{sec/1_introv2}
\input{sec/2_related_work}
\input{sec/3_methodology}
\input{sec/4_experiments}
\input{sec/5_conclusion}

\bibliography{egbib}
\bibliographystyle{icml2026}

\newpage
\appendix
\onecolumn
\input{sec/X_suppl}

\end{document}

%% file: sec/1_introv2.tex
\section{Introduction}
\label{sec:intro}
Modern \glslink{dl}{deep learning}\glsunset{dl} models~\cite{deit, swin, convnext, mambavision} have achieved state-of-the-art performance across a diverse range of applications, including common computer vision tasks like image classification and object detection~\cite{deit, maskrcnn}. However, their ever growing computational demands and memory footprints make it increasingly difficult to deploy them on resource-constrained or real-world systems. A wide range of research has sought to improve computational efficiency of large-scale models through parameter reduction~\cite{luo2025icp, fang2024isomorphicpruning, fang2024maskllm, wang2024svd-llm, DeepCompress}, model quantization~\cite{frantar2023optq, Fu_2025_CVPR}, and \gls{nas}~\cite{Chang2024FLoRA, liu2018darts}.
While all of these approaches improve efficiency, in this work we exclusively focus on parameter reduction.

Among parameter reduction approaches, unstructured pruning~\cite{luo2025icp, Agarwal_2024_CVPR} removes individual weights to achieve high sparsity with minimal performance loss; however, unstructured removal rarely translates to practical acceleration on hardware like GPUs.
Structured pruning~\cite{Yang_2023_Nvit, gao2024disp, fang2024isomorphicpruning} removes entire channels or layers, yielding significant latency benefits, but often causes severe performance degradation and introduces complex structural dependencies.\
Semi-structured sparsity~\cite{fang2024maskllm,sparsegpt} has provided a middle ground, removing weights according to fixed structural rules (e.g., removing 2 out of 4 consecutive weight values) performing pruning without severe structural changes.\ However, its speedups rely on specialized hardware and are less effective for vision workloads. 
In other recent work~\cite{DeepCompress}, weights are directly encoded in s smaller space and decoded during inference, yielding strong memory reductions at the cost of retraining and limited computation reduction.
In contrast, \textbf{\gls{svd}-based compression} offers an attractive alternative:
by decomposing large weight matrices into two smaller sequential layers, \gls{svd} creates a low-rank approximation of the original weight that substantially reduces the parameter count without complex architectural changes. Moreover, because it maintains dense matrix multiplications, \gls{svd}-based compression avoids the hardware inefficiencies of unstructured and semi-structured sparsity.

The main challenge in \gls{svd}-based compression is preserving task-relevant information while performing the rank reduction that reduces the layer size.
By incorporating the \gls{fim}, which quantifies the sensitivity of model predictions to parameter perturbations, Fisher-weighted \gls{svd} approaches~\cite{chekalina2025generalizedfisherweightedsvdscalable, hsu2022fwsvd} address this challenge.\ 
Through weighting of the \gls{svd} objective, Fisher-weighted \gls{svd} approaches better align compression and task-level performance. 
However, since direct computation of the \gls{fim} is prohibitively expensive due to its large dimensions, it necessitates the use of approximations.
A common approximation with an optimal solution for \gls{svd} is a Kronecker-factored product of the \gls{fim}~\cite{chekalina2025generalizedfisherweightedsvdscalable}. Prior works~\cite{kfac-martens15, shampoo-gupta, shampoo-squared, fisherhessianmartens} have investigated and proposed different ways of obtaining these Kronecker factors for a range of settings and conditions. 
However, our analysis of existing \gls{fim} approximations reveals a counterintuitive finding: accurately reconstructing the empirical \gls{fim} (achieving high cosine similarity) does not guarantee high post-compression accuracy in \gls{svd}-based compression for \glspl{vit}.

Inspired by this finding, we propose \textbf{\gls{ours}}, a novel approximation that is grounded in our analysis of structural priors in Fisher-weighted SVD compression. Rather than maximizing \gls{fim} approximation quality, \gls{ours} focuses directly on improving post-compression accuracy. This is achieved by explicitly modeling beneficial structural priors, i.e., preserving token-local aggregation and activation-gradient interaction. 
Additionally, we introduce \textbf{\gls{oursearch}}, an efficient rank allocation method that improves upon prior approaches~\cite{2025_flar-svd,yuan2024asvd, xiao2023comcat, azizi2024memoryViT} by framing the search as a \glslink{milp}{mixed-integer linear programming (MILP)}\glsunset{milp} problem that can globally optimize the layer-wise budget allocation based on interpolated sensitivity measurements.
Our contributions are four-fold:
\begin{itemize}
    \vspace{0.05in}
    \item We analyze common Kronecker-factored \gls{fim} approximations for Fisher-weighted SVD compression in \glspl{vit} and show that higher \textbf{cosine similarity} to the empirical \gls{fim} is \textbf{not aligned} with post-compression accuracy.
    \vspace{0.05in}
    \item Motivated by this misalignment, we propose \textbf{\gls{ours}}, a structured Kronecker \gls{fim} approximation for Fisher-weighted \gls{svd} that uses \textbf{token-local aggregation} while preserving \textbf{activation-gradient dependence} to improve post-compression accuracy.
    \vspace{0.05in}
    \item We propose \textbf{\gls{oursearch}}, a \gls{milp}-based search that finds \textbf{proxy-optimal rank allocations} across layers, outperforming heuristic or iterative methods.
    \vspace{0.05in}
    \item  We provide extensive evaluations across \glspl{vit} and hybrid backbones demonstrating that \gls{ours} consistently outperforms the strongest \gls{svd} baselines. Notably, it yields Top-1 accuracy gains of up to \textbf{+5.8}~\gls{pp} (\textbf{+9.0}~\gls{pp} w/ \gls{oursearch}) on Swin-B, with improvements generalizing to both downstream tasks and various model sizes.
\end{itemize}

%% file: sec/2_related_work.tex
\section{Related Work}
\label{sec:related_work}
\paragraph{SVD-based low-rank compression.}
The simplest approach to low-rank compression is using \gls{svd} to minimize the reconstruction error of weight matrices~\cite{pela, azizi2024memoryViT, Luo2024FastLRD}. However, this can lead to suboptimal task performance~\cite{qinsi2025dobisvd, hsu2022fwsvd,wang2024svd-llm} as simple weight reconstruction may not restore small, but task critical parts. Thus, recent work has shifted toward minimizing either the intermediate feature error or the final task loss. 
\textit{Feature reconstruction methods} minimize the $\ell_2$-error of a layer’s reconstructed output activations. ASVD~\cite{yuan2024asvd} rescales the weight with activation magnitude to preserve important features. \acrshort{svd-llm}~\cite{wang2024svd-llm} extends this approach, providing optimality guarantees for the same objective by demonstrating that the optimal solution involves whitening the weight matrix with the covariance of the input activations before truncation.\ Its successor, SVD-LLMv2~\cite{2025-svd-llmv2}, improves numerical stability.\ FLAR-SVD~\cite{2025_flar-svd} further stabilizes the whitening process through covariance shrinkage, improving robustness for low sample counts. 
\textit{Loss-aware methods} incorporate the \gls{fim} to weight the \gls{svd} objective based on the loss sensitivity to parameter changes.\ This addresses a key issue of feature reconstruction methods, which guarantee minimal intermediate feature error, but do not consider that feature dimensions contribute unequally to the final loss. FW-SVD~\cite{hsu2022fwsvd} proposed this concept using a simple and overly coarse diagonal \gls{fim} approximation. \acrshort{gfwsvd}~\cite{chekalina2025generalizedfisherweightedsvdscalable} extends Fisher-weighted SVD by fitting Kronecker factors to the empirical \gls{fim} (via a Lanczos procedure), yielding a more faithful approximation with a closed-form SVD solution. However, our analysis suggests that improving \gls{fim} approximation fidelity alone is not sufficient to ensure strong post-compression accuracy.

In contrast, we explicitly analyze how the structural assumptions in common Kronecker factorizations affect \gls{svd} compression. With \gls{ours}, we synthesize these insights into a factorization tailored to compressing \glspl{vit} to retain higher post-compression accuracy.
\vspace{-0.1in}
\paragraph{Searching layer-wise ranks.}
After a single-layer compression method is defined, a global strategy is needed to allocate ranks (i.e., the compression budget) across layers, which greatly improves performance~\cite{2025_flar-svd, yuan2024asvd, azizi2024memoryViT, xiao2023comcat}. 
\textit{Gradient-based methods} introduce learnable parameters to optimize layer ranks. ComCat~\cite{xiao2023comcat} implements this principle by introducing differentiable rank choice distributions for vision models, while other works~\cite{gao2024adaptive_ARS, qinsi2025dobisvd} have applied similar strategies to large language models.\ 
FLORA~\cite{Chang2024FLoRA} leverages a one-shot \gls{nas} approach, training a supernet to jointly optimize all rank choices.\ This strategy, while effective, is computationally expensive and sensitive to hyperparameters. 
\textit{Greedy approaches} offer a simpler alternative, iteratively pruning ranks with the lowest sensitivity. In prior work, the sensitivity estimation differs: MemViT~\cite{azizi2024memoryViT} uses layer-wise matrix energy, while FastLRD~\cite{Luo2024FastLRD} uses weight similarity. 
\textit{Equal-error strategies} balance compression across layers by targeting uniform output error, ensuring that each layer contributes equally to performance degradation. Both ASVD~\cite{yuan2024asvd} and FLAR-SVD~\cite{2025_flar-svd} adopt this principle, equalizing reconstruction errors across layers achieving strong results. Yet, such heuristics lack guarantees of global reconstruction error minimization.

Different from prior work, we formulate the rank selection problem as a \gls{milp}. This simple reframing enables efficient per-layer rank allocation that, unlike prior works, \textit{globally} minimizes the total proxy error under a hard \gls{flop} budget.

%% file: sec/3_methodology.tex
\section{Preliminaries}
\label{sec:preliminaries}
This section introduces the mathematical background to generalized Fisher-weighted \gls{svd} and summarizes Kronecker-factored \gls{fim} approximations for linear weight-sharing layers, which form the foundation of \gls{ours}.

\subsection{Generalized Fisher-Weighted SVD Compression}
\label{sec:fw-svd}
To minimize the impact of the compression on the model predictions, the perturbation $\Delta\mathbf{W}$ caused by the low-rank compression of a linear layer $\mathbf{W}\!\in\!\mathbb{R}^{m\times n}$ should minimize the expected second-order loss approximation $\mathbb{E}[\Delta \mathcal{L}] \approx \frac{1}{2}\mathrm{vec}(\Delta\mathbf{W})^\top \mathbf{F}\,\mathrm{vec}(\Delta\mathbf{W})$ \cite{chekalina2025generalizedfisherweightedsvdscalable}. Since the full \gls{fim}, $\mathbf{F}\in\mathbb{R}^{mn\times mn}$, is intractable, it is commonly approximated as a Kronecker product $\mathbf{F} \approx \mathbf{A} \otimes \mathbf{B}$ with $\mathbf{A}\in\mathbb{R}^{n\times n}$ and $\mathbf{B}\in\mathbb{R}^{m\times m}$~\cite{chekalina2025generalizedfisherweightedsvdscalable, shampoo-gupta, shampoo-squared, kfac-martens15, runa-kfacreduce}. Under this structure, the \emph{optimal low-rank solution} is obtained via Fisher-whitening~\cite{chekalina2025generalizedfisherweightedsvdscalable}, where one computes the truncated \gls{svd} of the transformed weights $\overline{\mathbf{W}}=\mathbf{B}^{1/2}\mathbf{W}\mathbf{A}^{1/2}$ ($\approx \mathbf{U}_k\boldsymbol{\Sigma}_k\mathbf{V}_k^\top$) and projects the result back via 
$\widetilde{\mathbf{W}}_k=\mathbf{B}^{-1/2}\mathbf{U}_k\boldsymbol{\Sigma}_k\mathbf{V}_k^\top\mathbf{A}^{-1/2}$. 
We refer to Supplement \cref{supp:sec:fw-svd-objective} for the detailed derivation.
For the compression of a layer, this rank-$k$ matrix is divided into two smaller consecutive linear layers $\mathbf{W}_A \in \mathbb{R}^{m\times k}$ and $\mathbf{W}_B \in \mathbb{R}^{k\times n}$ by splitting the unwhitened approximation such that $\mathbf{W}_A = \mathbf{B}^{-1/2}\mathbf{U}_k\boldsymbol{\Sigma}^{1/2}_k$ and $\mathbf{W}_B = \boldsymbol{\Sigma}^{1/2}_k\mathbf{V}_k^{\top}\mathbf{A}^{-1/2}$. 
This yields $\widetilde{\mathbf{W}}_k = \mathbf{W}_A \mathbf{W}_B$, which reduces the parameter count compared to $\mathbf{W}$ whenever $k<\frac{nm}{n+m}$. 
Ultimately, the efficacy of this \gls{svd} compression largely depends on how well $\mathbf{A}$ and $\mathbf{B}$ capture the relevant compression curvature.

\subsection{Kronecker Structure of the Fisher for Weight-Sharing Layers}
\label{sec:kronecker_fisher_ws}
In Transformers, a linear layer $\mathbf{W}$ processes a sequence of $T$ tokens. Let $\mathbf{x}_t$ and $\mathbf{g}_t$ denote the input and pre-activation gradient for token $t$. Due to weight sharing, the total layer gradient is the sum of contributions across all tokens: $\nabla_{\mathbf{W}}\mathcal{L}=\sum_{t=1}^{T}\mathbf{g}_t\mathbf{x}_t^\top$. Hence, the layer's FIM is a sum of Kronecker products over all token pairs:
\vspace{-0.1in}
\begin{multline}
\mathbf{F}
:=\mathbb{E}\!\left[\mathrm{vec}(\nabla_{\mathbf{W}}\mathcal{L})\,\mathrm{vec}(\nabla_{\mathbf{W}}\mathcal{L})^\top\right]\\
=\mathbb{E}\!\left[
\sum_{t=1}^{T}\sum_{s=1}^{T} (\mathbf{x}_t\mathbf{x}_s^\top)\otimes(\mathbf{g}_t\mathbf{g}_s^\top)
\right].
\label{eq:ws_fisher_sum}
\end{multline}
\vspace{-0.1in}

In practice one seeks a \emph{single} Kronecker approximation $\mathbf{F}\approx \mathbf{A}\otimes\mathbf{B}$. Existing estimators \cite{kfac-martens15, runa-kfacreduce, chekalina2025generalizedfisherweightedsvdscalable, shampoo-squared} differ in two structural axes to obtain $(\mathbf{A},\mathbf{B})$.

\vspace{-0.1in}
\paragraph{Token Aggregation (local vs global).}
The double sum in \cref{eq:ws_fisher_sum} contains local moments ($t=s$) and cross-token mixed moments ($t\neq s$). 
\textbf{KFAC-expand}~\cite{runa-kfacreduce, kfac-martens15} enforces locality by treating tokens as independent samples, effectively discarding cross-token terms ($t \neq s$).
Conversely, \textbf{KFAC-reduce}~\cite{runa-kfacreduce} performs global aggregation (summing $\mathbf{x}_t$ before the outer product), thereby incorporating cross-token correlations into the factors. \textbf{Shampoo$^2$} and \textbf{GFWSVD} compute a nearest Kronecker product to the \gls{fim} (via rearrangement/power-iteration or Lanczos-style routines), and therefore inherit whichever cross-token moments dominate the \gls{fim} they are fit to~\cite{shampoo-squared,chekalina2025generalizedfisherweightedsvdscalable}.

\vspace{-0.1in}
\paragraph{Activation-Gradient Coupling.}
Estimators also differ in their consideration for activation-gradient couplings.
Standard KFAC-style estimators~\cite{kfac-martens15,runa-kfacreduce} assume that activations and output gradients are independent.
In contrast, nearest Kronecker product methods (Shampoo$^2$, GFWSVD) do not impose an activation-gradient prior. Like their treatment of cross-token structure, their nearest product fit reflects dependence of activations and gradients.

Taken together, the KFAC-reduce/expand and the recent Shampoo$^2$/GFWSVD formulations define a compact design space of Kronecker \gls{fim} estimators for weight-sharing layers, which is summarized in \cref{fig:motivation:a}.

\section{Methodology}
\label{sec:methodology}
\subsection{Kronecker Fisher Approximations for \gls{svd} Compression}
\label{sec:fisher_analysis}
To design a curvature estimator tailored to Fisher-weighted \gls{svd} compression, we first examine which structural priors of Kronecker \gls{fim} approximations affect post-compression accuracy. We evaluate three representative estimators (Shampoo$^2$, KFAC-reduce, and KFAC-expand) within the same Fisher-weighted \gls{svd} pipeline, keeping the compression budget and whitening procedure fixed. This isolates the effect of how the structural priors (token aggregation, activation-gradient coupling) affect post-compression accuracy and alignment (cosine similarity) to the empirical \gls{fim}.
Additional experimental details and empirical results are provided in Supplement \cref{supp:subsec:fisher_comparison_motivation} and \ref{supp:sec:more_actgrad_and_zcm}, respectively.
\vspace{-0.1in}
\paragraph{Similarity vs.\ Accuracy.}
Table~\ref{fig:motivation:a} shows that, counterintuitively, higher cosine similarity to the empirical \gls{fim} does not reliably translate into better compression. Both Shampoo$^2$ and KFAC-reduce achieve higher cosine similarity than KFAC-expand (0.17/0.13 vs.\ 0.12), yet only Shampoo$^2$ improves accuracy, while KFAC-reduce performs substantially worse (67.0\% vs.\ 75.9\%). 
This discrepancy motivates a closer examination of the two structural differences between these estimators: token aggregation and activation-gradient coupling.
\vspace{-0.1in}
\paragraph{Token Aggregation.}

We first examine whether modeling cross-token structure is beneficial for Fisher-weighted \gls{svd}. \cref{fig:motivation:b} visualizes a cross-token coupling diagnostic derived from activations and output gradients.\ The diagnostic exhibits structured off-diagonal patterns, indicating that mixed moments between different token positions \textit{are present} in the empirical \gls{fim} for DeiT (and other vision models cf. Supplement \cref{supp:fig:extended_motivation_grid}). The observed structure is consistent with the fixed spatial organization of vision tokens: token indices often retain a stable spatial meaning across samples, so nearby positions correspond to nearby image regions whose activations and gradient responses can be correlated.\ 
Because these spatial neighborhood relations between tokens stay the same across images, the aggregated statistics can lead to prominent cross-token moments in the \gls{fim}.
This observation helps to explain why globally aggregating estimators, such as KFAC-reduce, or estimators fitted to the full empirical Fisher, such as Shampoo$^2$, can achieve higher cosine similarity to the empirical \gls{fim}: they represent cross-token structure that KFAC-expand explicitly discards. 

However, better Fisher alignment does not necessarily imply a better low-rank subspace for compression. 
When considering a perturbation \(\Delta\mathbf W\), the compression-induced first-order error at token \(t\) is \(e_t(\Delta\mathbf W)=\mathbf g_t^\top\Delta\mathbf W\mathbf x_t\), and the Fisher objective decomposes as Eq. \ref{eq:cross_moments_explain}, with two main terms. The first term measures the sensitivity to the perturbation at individual token positions, whereas the second captures mixed second moments between errors at different positions.\ Since the same weight \(\mathbf W\) is applied at every token position, the token-local terms directly characterize how the shared weight behaves under compression; the cross-token terms additionally reflect how perturbation-induced errors co-vary across positions. Thus, matching spatially structured cross-token moments can improve \gls{fim} similarity without necessarily selecting low-rank directions that best preserve post-compression accuracy.
\vspace{-0.1in}
\begin{multline}\label{eq:cross_moments_explain}
    \E\!\left[\left(\sum_t e_t(\Delta\mathbf W)\right)^2\right]
    =
    \sum_t \E[e_t(\Delta\mathbf W)^2]
    \\+
    \sum_{t\neq s} \E[e_t(\Delta\mathbf W)e_s(\Delta\mathbf W)]
\end{multline}
\vspace{-0.1in}

The comparison between KFAC-reduce and KFAC-expand isolates this effect most directly, since both use the same KFAC-style activation-gradient decoupling but differ in their treatment of the weight-sharing dimension. Despite achieving higher cosine similarity to the empirical \gls{fim}, KFAC-reduce yields substantially lower post-compression accuracy than KFAC-expand (67.0\% vs.\ 75.9\%). This suggests that, for Fisher-weighted \gls{svd} compression of weight sharing layers in vision, token-local curvature statistics are more useful than cross-token mixed moments for selecting an effective low-rank approximation.
\input{figures/method_figure}
\vspace{-0.125in}
\paragraph{Activation-Gradient Coupling.}
Token-local aggregation alone does not fully explain post-compression performance: Shampoo$^2$ outperforms KFAC-expand (77.2\% vs.\ 75.9\%) despite not enforcing the same token-local prior.\ A core difference is that KFAC-style estimators form separate activation and gradient factors, replacing within-token joint structure by a product of marginal statistics. \cref{fig:motivation:c} indicates substantial dependence between \(\mathbf x_t\) and \(\mathbf g_t\) in the investigated layer, suggesting that this decoupling may discard relevant information for Fisher-weighted \gls{svd}. Thus, the benefit of Shampoo$^2$ is consistent with the importance of within-token activation-gradient coupling.
\vspace{-0.125in}
\paragraph{Implication.}

These observations suggest two desirable properties for Fisher-weighted \gls{svd} in vision models: the approximation should emphasize token-local curvature, while preserving within-token activation-gradient dependence. 
KFAC-expand satisfies the first property but decouples activations and gradients; Shampoo$^2$ and GFWSVD can retain activation-gradient dependence, but may also fit cross-token mixed moments that improve Fisher similarity without reliably improving post-compression accuracy. This motivates \gls{ours}, combining token-local aggregation and within-token activation-gradient coupling to target curvature statistics most relevant for low-rank compression of weight sharing layers in vision.

\subsection{\gls{ours}}
The preceding analysis suggests that an effective Fisher approximation for \gls{svd} compression should retain coupled activation-gradient statistics within each token, while suppressing cross-token mixed moments. In this section, we instantiate this principle in \gls{ours}. 
Starting from the weight-sharing \gls{fim} expansion in \cref{eq:ws_fisher_sum}, we impose a token-local structural prior by removing only the cross-token \emph{mixed} moments. 
\vspace{-0.1in}
\paragraph{Zero Cross-Moment (ZCM).}
We introduce the following token-local structural prior, used as an inductive bias for compression:

\begin{assumption}[Zero Cross-Moment (ZCM)]
\label{as:zcm-assumption}
For any two distinct token indices $t\neq s$, the mixed second moments vanish in expectation:
\begin{equation}
  \mathbb{E}\!\left[(\mathbf{x}_t^\top \mathbf{x}_s)\, (\mathbf{g}_t\mathbf{g}_s^\top)\right] = \mathbf{0}.  
\end{equation}
\end{assumption}

Importantly, this is not an independence assumption over tokens. Instead, it is a compression-oriented approximation that discards the specific cross-token terms that can dominate global Fisher similarity, while leaving the within-token activation-gradient coupling intact.
\vspace{-0.1in}
\paragraph{Token-wise power iteration.}
Under ZCM, we show that the optimal Kronecker factors for a structured \gls{fim} approximation can be obtained via a token-wise power-iteration procedure that avoids cross-token mixing while preserving within-token activation-gradient dependence. 
To express this approximation compactly, let $\mathbf{X} \!=\! [\mathbf{x}_1, \dots, \mathbf{x}_T]^\top \!\in\! \mathbb{R}^{T \times n}$ and
$\mathbf{G} \!=\! [\mathbf{g}_1, \dots, \mathbf{g}_T]^\top \!\in\! \mathbb{R}^{T \times m}$
be the matrices formed by stacking the $T$ token-wise vectors.

\begin{theorem}[Token-Wise Structured \gls{fim} Approximation]
\label{thm:zcm-ours}
Under Assumption~\ref{as:zcm-assumption}, one power-iteration step from identity initialization yields the approximation:
\[
\boxed{
\mathbf{F} \approx
\frac{
\mathbb{E}\!\left[ \mathbf{X}^\top\operatorname{diag}(\mathbf{G}\mathbf{G}^\top)\mathbf{X} \right]
\otimes
\mathbb{E}\!\left[ \mathbf{G}^\top\operatorname{diag}(\mathbf{X}\mathbf{X}^\top)\mathbf{G} \right]
}{
\mathbb{E}\!\left[\operatorname{tr}\!\left(\mathbf{G}^\top\operatorname{diag}(\mathbf{X}\mathbf{X}^\top)\mathbf{G}\right)\right]
}.
}
\]
\end{theorem}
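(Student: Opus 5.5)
The plan is to cast the Kronecker approximation as a nearest-Kronecker-product problem and read off one power-iteration step. I will use the Van Loan--Pitsianis rearrangement $\mathcal{R}$, which maps $\mathbf{P}\otimes\mathbf{Q}\mapsto \mathrm{vec}(\mathbf{P})\mathrm{vec}(\mathbf{Q})^\top$. Under this map, $\min_{\mathbf{A},\mathbf{B}}\|\mathbf{F}-\mathbf{A}\otimes\mathbf{B}\|_F$ becomes a rank-one approximation of $\mathbf{R}:=\mathcal{R}(\mathbf{F})$, in the same way Shampoo$^2$ and GFWSVD are described in \cref{sec:kronecker_fisher_ws}. By linearity and \cref{eq:ws_fisher_sum},
\[
\mathbf{R}=\mathbb{E}\Big[\sum_{t,s}\mathrm{vec}(\mathbf{x}_t\mathbf{x}_s^\top)\,\mathrm{vec}(\mathbf{g}_t\mathbf{g}_s^\top)^\top\Big].
\]
The optimal factors are the top left and right singular vectors of $\mathbf{R}$. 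Power iteration approximates them by $\mathrm{vec}(\mathbf{A})\propto \mathbf{R}\,\mathrm{vec}(\mathbf{B}_0)$ and $\mathrm{vec}(\mathbf{B})\propto \mathbf{R}^\top\mathrm{vec}(\mathbf{A}_0)$.

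\textbf{Step 1 (compute the two updates from identity).} I apply $\mathbf{R}$ to identity initializations using $\mathrm{vec}(\mathbf{P})^\top\mathrm{vec}(\mathbf{I})=\mathrm{tr}(\mathbf{P})$. This gives
\[
\mathbf{R}^\top\mathrm{vec}(\mathbf{I}_n)\;\widehat{=}\;\mathbb{E}\Big[\sum_{t,s}(\mathbf{x}_t^\top\mathbf{x}_s)\,\mathbf{g}_t\mathbf{g}_s^\top\Big],
\qquad
\mathbf{R}\,\mathrm{vec}(\mathbf{I}_m)\;\widehat{=}\;\mathbb{E}\Big[\sum_{t,s}(\mathbf{g}_t^\top\mathbf{g}_s)\,\mathbf{x}_t\mathbf{x}_s^\top\Big].
\]
Assumption~\ref{as:zcm-assumption} kills exactly the $t\neq s$ terms of the first expression. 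The $\mathbf{B}$-factor therefore reduces to $\mathbb{E}[\sum_t\|\mathbf{x}_t\|^2\mathbf{g}_t\mathbf{g}_t^\top]=\mathbb{E}[\mathbf{G}^\top\mathrm{diag}(\mathbf{X}\mathbf{X}^\top)\mathbf{G}]$. For the $\mathbf{A}$-factor I need the mirrored statement $\mathbb{E}[(\mathbf{g}_t^\top\mathbf{g}_s)\mathbf{x}_t\mathbf{x}_s^\top]=\mathbf{0}$ for $t\neq s$. I would argue it is the same ZCM prior read through the rearrangement: both statements say that the $(t,s)$ block of $\mathbf{R}$, namely $\mathbb{E}[\mathrm{vec}(\mathbf{x}_t\mathbf{x}_s^\top)\mathrm{vec}(\mathbf{g}_t\mathbf{g}_s^\top)^\top]$, is discarded, and the assumption is applied at that level. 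Granting this, the $\mathbf{A}$-factor becomes $\mathbb{E}[\mathbf{X}^\top\mathrm{diag}(\mathbf{G}\mathbf{G}^\top)\mathbf{X}]$.

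\textbf{Step 2 (fix the scale).} The two one-step iterates are $\mathbf{a}=\mathbf{R}\mathbf{v}_0$ and $\mathbf{b}=\mathbf{R}^\top\mathbf{u}_0$, with $\mathbf{u}_0=\mathrm{vec}(\mathbf{I}_n)$ and $\mathbf{v}_0=\mathrm{vec}(\mathbf{I}_m)$. The natural rank-one surrogate is $\mathbf{R}\approx \mathbf{a}\mathbf{b}^\top/(\mathbf{u}_0^\top\mathbf{R}\mathbf{v}_0)$. This skeleton (cross) approximation is exact whenever $\mathbf{R}$ has rank one, and it reproduces the bilinear form at the initial vectors: $\mathbf{u}_0^\top(\cdot)\mathbf{v}_0=\mathbf{u}_0^\top\mathbf{R}\mathbf{v}_0$. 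Under ZCM the denominator is $\mathbb{E}[\sum_t\|\mathbf{x}_t\|^2\|\mathbf{g}_t\|^2]$, which equals $\mathbb{E}[\mathrm{tr}(\mathbf{G}^\top\mathrm{diag}(\mathbf{X}\mathbf{X}^\top)\mathbf{G})]$ and also the trace of the $\mathbf{A}$-factor. Undoing $\mathcal{R}$ then gives the boxed formula.

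\textbf{Main obstacle.} I expect the main obstacle to be Step 2, together with the asymmetry noted in Step 1. Strictly, one alternating power step would produce the update $\mathbf{A}\propto\mathbf{R}\,\mathrm{vec}(\mathbf{B})$ with the updated $\mathbf{B}$, not with $\mathbf{I}$. So I must state explicitly that ``one step from identity'' means the two factors are updated in parallel from $(\mathbf{I},\mathbf{I})$, with the scale chosen by the cross-approximation normalization. I would first try to justify this by showing that the resulting scaled product $\mathbf{A}\otimes\mathbf{B}$ matches $\mathbf{F}$ in the trace pairing, $\langle\mathbf{F},\mathbf{I}\otimes\mathbf{I}\rangle$. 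That pairing is the natural normalization of the first power step.
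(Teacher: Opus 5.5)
Your proposal follows the paper's proof essentially step for step: rearrange $\mathbf{F}$, take one parallel power step from $(\mathbf{I}_n,\mathbf{I}_m)$, let the zero cross-moment prior remove the $t\neq s$ terms, and normalize by the expected trace. Your cross-approximation/trace-matching argument, which needs only the trace of the stated assumption, justifies the denominator that the paper merely asserts.

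One correction to Step~1: the mirrored condition $\mathbb{E}[(\mathbf{g}_t^\top\mathbf{g}_s)\,\mathbf{x}_t\mathbf{x}_s^\top]=\mathbf{0}$ does not follow from Assumption~\ref{as:zcm-assumption}. The deterministic choice $\mathbf{x}_t=\mathbf{e}_1$, $\mathbf{x}_s=\mathbf{e}_2$, $\mathbf{g}_t=\mathbf{g}_s=\mathbf{e}_1$ satisfies the assumption yet gives $\mathbf{e}_1\mathbf{e}_2^\top$. So the two conditions are different contractions of the $(t,s)$ block of $\mathbf{R}$, not the same discarded block. You are really strengthening the assumption to its symmetric (or blockwise) form, which is exactly what the paper does silently with ``by symmetry.''
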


\begin{proof}
Using the rearrangement operator $R(\cdot)$ (cf.\ Shampoo$^2$~\cite{shampoo-squared}),
\begin{multline}
R\!\left(\mathbf{F}\right)
  = R\!\bigl(\operatorname{vec}(\mathbf{G}^\top \mathbf{X})\,\operatorname{vec}(\mathbf{G}^\top \mathbf{X})^\top\bigr) \\
  = \mathbb{E}\!\left[(\mathbf{G}^\top \mathbf{X}) \otimes (\mathbf{G}^\top \mathbf{X})\right]
\end{multline}

A single power-iteration step for the right factor, $\mathbf{B}$, starting from $\mathbf{A}_0 = \mathbf{I}$, is given by $\mathbf{B} \propto \mathbb{E}[\mathbf{G}^\top \mathbf{X} \mathbf{A}_0 \mathbf{X}^\top \mathbf{G}] = \mathbb{E}[\mathbf{G}^\top \mathbf{X} \mathbf{X}^\top \mathbf{G}]$.
We expand this term using definitions of $\mathbf{G}$ and $\mathbf{X}$:
\begin{multline}
\mathbb{E}\!\left[\mathbf{G}^\top \mathbf{X}\mathbf{X}^\top \mathbf{G}\right]
= \mathbb{E}\!\left[\Big(\sum_{t=1}^T \mathbf{g}_t \mathbf{x}_t^\top\Big)
                \Big(\sum_{s=1}^T \mathbf{x}_s \mathbf{g}_s^\top\Big)\right] \\
= \mathbb{E}\!\left[\sum_{t,s}(\mathbf{g}_t\mathbf{x}_t^\top)(\mathbf{x}_s\mathbf{g}_s^\top)\right].
\end{multline}
Under Assumption~\ref{as:zcm-assumption}, all cross-token terms ($t\neq s$) vanish in expectation. This leaves only the diagonal terms ($t=s$):
\begin{equation}
    \mathbf{B}
\propto \sum_{t=1}^{T}\mathbb{E}\!\left[(\mathbf{g}_t\mathbf{x}_t^\top)(\mathbf{x}_t\mathbf{g}_t^\top)\right]
= \sum_{t=1}^{T}\mathbb{E}\!\left[\|\mathbf{x}_t\|_2^2\,\mathbf{g}_t\mathbf{g}_t^\top\right].
\end{equation}
The sum over $T$ is equivalent to the matrix form $\mathbb{E}\!\left[\mathbf{G}^\top\operatorname{diag}(\mathbf{X}\mathbf{X}^\top)\mathbf{G}\right]$, since the $t$-th diagonal entry of $\operatorname{diag}(\mathbf{X}\mathbf{X}^\top)$ is $\mathbf{x}_t^\top\mathbf{x}_t = \|\mathbf{x}_t\|_2^2$.
By symmetry, the multiplication from the left yields
$\mathbb{E}[\mathbf{X}^\top \operatorname{diag}(\mathbf{G}\mathbf{G}^\top)\mathbf{X}]$.

\noindent Normalizing by the expected trace gives the final Kronecker approximation:
\[
\mathbf{F} \approx
\frac{
\mathbb{E}\!\left[\mathbf{X}^\top \operatorname{diag}(\mathbf{G}\mathbf{G}^\top)\mathbf{X}\right]
\otimes
\mathbb{E}\!\left[\mathbf{G}^\top \operatorname{diag}(\mathbf{X}\mathbf{X}^\top)\mathbf{G}\right]
}{
\mathbb{E}\!\left[\operatorname{tr}\!\left(\mathbf{G}^\top \operatorname{diag}(\mathbf{X}\mathbf{X}^\top)\mathbf{G}\right)\right]
}.
\]
\end{proof}
\vspace{-0.2in}
\paragraph{Intuition.} The diagonal operators $\operatorname{diag}(\mathbf{G}\mathbf{G}^\top)$ and $\operatorname{diag}(\mathbf{X}\mathbf{X}^\top)$ reweight tokens by gradient and activation energy, respectively.
This enforces token-local aggregation by suppressing cross-token mixed moments, while preserving within-token activation-gradient dependence through reweighting itself.
The full algorithm is provided in the Supplement \cref{supp:sec:method_pseudo_code}.
\subsection{\glslink{oursearch}{Constrained Rank Search (CoRS)}\glsunset{oursearch}}\label{sec:constrained-rank-search}
While \gls{ours} is designed to improve compression on individual layers, models contain layers with varying sensitivity. Uniform truncation therefore leads to suboptimal global performance~\cite{yuan2024asvd, 2025_flar-svd}. To distribute the budget, \gls{oursearch} frames rank allocation as a constrained optimization problem comprising three stages.
\vspace{-0.1in}
\paragraph{Stage 1: Error Profiling.}
Like prior works~\cite{yuan2024asvd, 2025_flar-svd} we quantify each layer’s cost-error relationship. For every linear layer $i$, we evaluate the reconstruction error $e_{ij}$ and computational cost $f_{ij}$ for a set of candidate ranks based on compression rates $j \in J$. The error is obtained as a KL-divergence measurement between the original model output and the model output with layer $i$ compressed to compression rate $j$. Using the output as reference ensures that the downstream impact is quantified and comparable between layers. Crucially, to allow the optimizer to skip sensitive layers, the uncompressed configuration $(f_{i,\mathrm{orig}}, 0)$ with zero error is always included. This yields a discrete cost-error profile $\mathcal{D}_i=\{(f_{ij}, e_{ij})\}_{j=1}^{J}$.
\vspace{-0.1in}
\paragraph{Stage 2: Granular Extension.}
To minimize profiling overhead, the initial points $\mathcal{D}_i$ are sparse. We obtain a denser approximation $\tilde{\mathcal{D}}_i$ via interpolation. 
Since a single global spline can introduce undesirable artifacts such as overshoot on sparse cost-error profiles~\cite{deboor1978practical, fritsch1980monotone}, we use a rolling-window cubic interpolation.
Specifically, we fit cubic curves to overlapping point triplets.
This yields a smooth, artifact-free profile 
$\tilde{\mathcal{D}}_i = \{(\tilde{f}_{ij}, \tilde{e}_{ij})\}_{j=1}^{n_i}$. Further algorithmic details are provided in Supplement \cref{supp:sec:interpolation_details}.
\vspace{-0.1in}
\paragraph{Stage 3: Constrained Optimization.}
By formulating the rank allocation as a \gls{milp}~\cite{nemhauser1988integer}, we can solve the rank allocation problem to proxy optimality.
Let $B_{\text{\glspl{flop}}}$ denote the global budget and $x_{ij} \!\in\! \{0,1\}$ be a binary decision variable where $x_{ij}\!=\!1$ indicates configuration $j$ is set for layer $i$. The optimization is formulated as:
\vspace{-0.1in}
\begin{align}
\min_{{x_{ij}}} \quad & \sum_{i=1}^{L} \sum_{j=1}^{n_i} e_{ij} \cdot x_{ij} \label{eq:cors_opt} \\
\text{s.t.} \quad & \sum_{i=1}^{L} \sum_{j=1}^{n_i} f_{ij} \cdot x_{ij} \leq B_{\text{FLOPs}}, \quad \sum_{j=1}^{n_i} x_{ij} = 1 \;\; \forall i. \nonumber
\end{align}
Eq.~\ref{eq:cors_opt} minimizes the total reconstruction error. The constraints below enforce a \gls{flop} budget and ensure that exactly one rank choice is active per layer to maintain a valid result. When solving with standard \gls{milp} solvers~\cite{john_forrest_2024_13347261_cbc}, this yields a proxy-optimal solution.

%% file: figures/method_figure.tex
\begin{figure*}[t]%
    \centering
    \subcaptionbox{
    Cross-token coupling diagnostic on DeiT linear layer.
    \label{fig:motivation:b}}[0.24\textwidth]{%
        \centering
        \begin{tikzpicture}
            \node[anchor=south west, inner sep=0] (containerimg) at (0,0) {%
                \includegraphics[width=\linewidth]{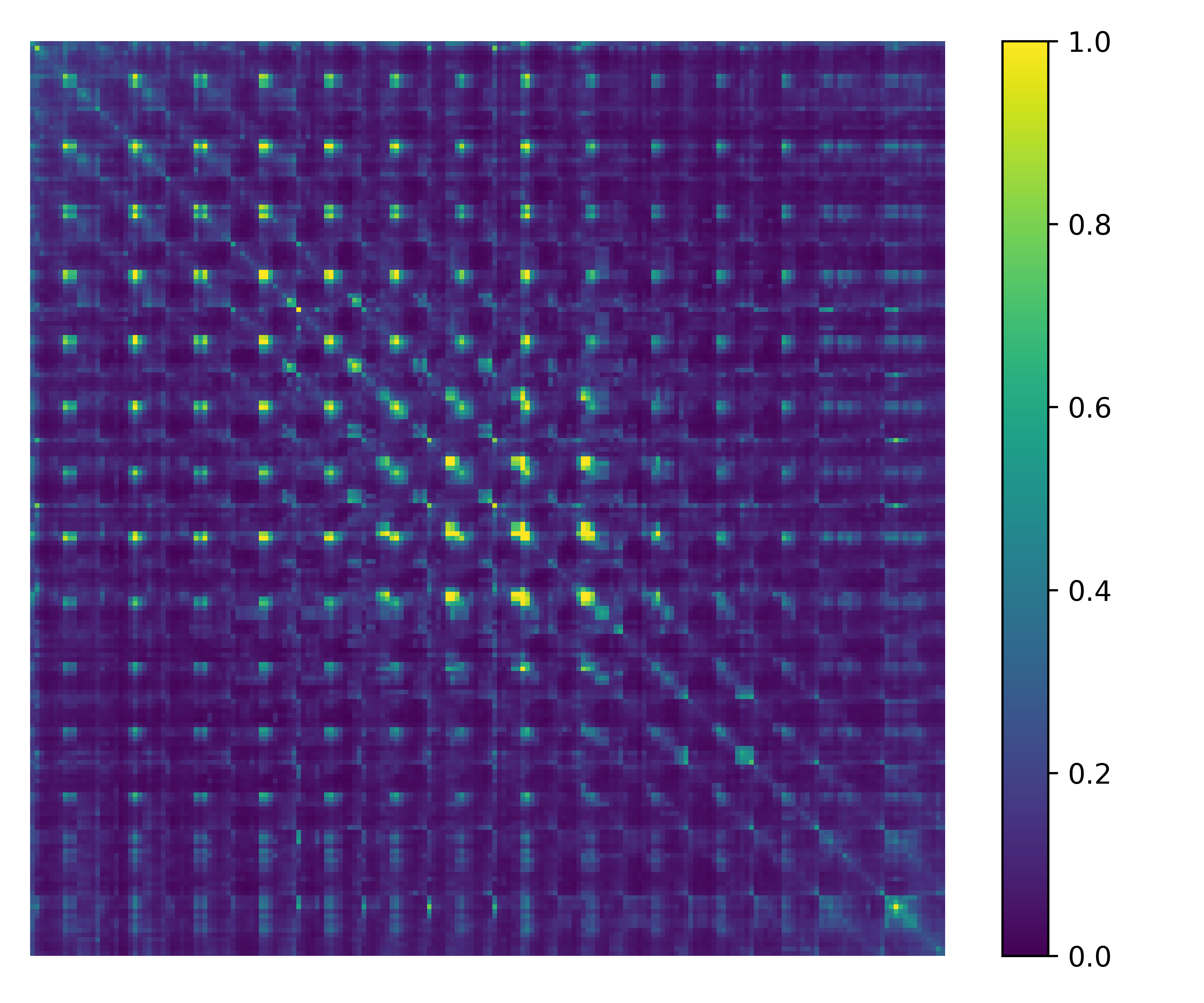}
            };
            \node[anchor=south west, text=white, font=\sffamily\tiny, inner sep=1pt] 
                  at ([xshift=2pt, yshift=3.0pt] containerimg.south west) 
                  {DeiT-B};
        \end{tikzpicture}
            }%
    \hspace{0.02\textwidth}
    \subcaptionbox{
    Input activation-output grad. distance correlation on DeiT.
    \label{fig:motivation:c}}[0.24\textwidth]{%
        \centering
        \resizebox{\linewidth}{!}{%
            \input{figures/motivation_figure_correlation}
        }%
    }
    \hspace{0.02\textwidth}
    \subcaptionbox{
    \gls{fim} similarity vs. post-compression accuracy under different structural assumptions.
    \label{fig:motivation:a}
    }[0.44\textwidth]{%
        \centering
        \resizebox{\linewidth}{!}{%
            \begin{tabular}{lcccc}
                \toprule
                \toprule
                Approach & \makecell{token-local\\aggregation} & \makecell{act-grad\\coupling} & cos-sim & top-1 \\
                \midrule
                Shampoo$^2$   & \xmark & \cmark & 0.17 & 77.2 \\
                KFAC-red.     & \xmark & \xmark & 0.13 & 67.0\\
                KFAC-exp.     & \cmark & \xmark & 0.12 & 75.9\\
                \midrule
                \textbf{Research gap} & \cmark & \cmark & - & -\\
                \bottomrule
                \bottomrule
            \end{tabular}%
        }
        \vspace{+0.12in}
    }%
    \caption{
    Structural axes motivating FACTS for DeiT-B compression of linear layers. (a) Cross-token coupling diagnostic showing structured mixed moments in the empirical \gls{fim}. (b) Distance correlation showing activation-gradient dependence. (c) Despite these structures, higher cosine similarity to the empirical \gls{fim} does not necessarily yield higher post-compression accuracy, motivating token-local aggregation with activation-gradient coupling.
    }%
    \label{fig:motivation}%
    \vspace{-0.1in}
\end{figure*}

%% file: figures/motivation_figure_correlation.tex
\pgfplotstableread[col sep=comma]{figures/data/dcorr_results.csv}\dcorrdata

\begin{tikzpicture}
\begin{axis}[
    width=\linewidth,
    height=4.2cm,
    xlabel={Token index},
    ylabel={Distance correlation},
    grid=both,
    grid style={gray!15},
    tick style={black},
    ymin=0.1,
    ymax=0.8,
    xmin=-5,
    xmax=205,
    line width=1pt,
    tick label style={font=\scriptsize},
    label style={font=\scriptsize},
    title style={font=\small},
    ylabel style={yshift=-0.4cm},
    xlabel style={yshift=+0.5cm},
    xticklabels={0, 0 , , , ,200},
]
\addplot[
    only marks,
    mark=x,
    mark size=.5pt,
    color=blue,
    fill=blue,
    forget plot,
] table [x=token, y=dcorr, filter discard warning=false, row predicate/.code={
    \pgfplotstablegetelem{#1}{significant}\of{\dcorrdata}
    \ifx\pgfplotsretval\empty
        \pgfplotstablerowfalse
    \else
        \edef\temp{\pgfplotsretval}
        \ifx\temp true\relax
            \pgfplotstablerowtrue
        \else
            \pgfplotstablerowfalse
        \fi
    \fi
}] {\dcorrdata};

\addplot[dashed, black, thick] coordinates {(-10,0.47) (220,0.47)};
\node[anchor=west, black] at (axis cs:100,0.57) {\scriptsize Mean};

\end{axis}
\end{tikzpicture}

%% file: sec/4_experiments.tex
\section{Evaluation}
\label{sec:evaluation}
We evaluate the performance of \gls{ours} and  \gls{oursearch}, and assess performance gains and hardware speedups.\ Additional details and experiments are provided in the Supplement.

\subsection{Experimental Setup}
\paragraph{Models and Dataset.}
For the experiments we rely on pretrained checkpoints of \glslink{imagenet}{\acrshort{imagenet}}\glsunset{imagenet} for \glslink{deit}{\acrshort{deit}}\glsunset{deit}~\cite{deit}, \glslink{swin}{\acrshort{swin}}\glsunset{swin}~\cite{swin}, \glslink{convnext}{\acrshort{convnext}}\glsunset{convnext}~\cite{convnext} and \glslink{mamba}{\acrshort{mamba}}\glsunset{mamba}~\cite{mambavision} architectures. \gls{deit}, \gls{swin} and \gls{convnext} are adopted from the \texttt{timm}~\cite{Wightman_PyTorch_Image_Models} library, while for \gls{mamba} we use the official code~\cite{mambavision}. In the few instances where  finetuning is performed, we use a slightly modified recipe of the baseline procedure. Refer to the Supplement \cref{supp:sec:extended_experiment_setup} for more details. We evaluate Top-1 accuracy on the \gls{imagenet}-1k~\cite{imagenet} validation set. Downstream tasks are tested on their respective metrics.
\vspace{-0.1in}
\paragraph{Decomposition.}
To evaluate the effectiveness of our approach, we compare it against six recent decomposition methods: PELA~\cite{pela}, FW-SVD~\cite{hsu2022fwsvd}, ASVD~\cite{yuan2024asvd}, SVD-LLM~\cite{wang2024svd-llm}, GFWSVD~\cite{chekalina2025generalizedfisherweightedsvdscalable}, and FLAR-SVD~\cite{2025_flar-svd}. Although some of these baselines (e.g., SVD-LLM) were originally introduced for large language models, their core optimization of linear maps, e.g. minimizing activation error, is domain-agnostic, making them directly applicable and highly competitive for \glspl{vit}.
We calibrate all methods using the same 16,384 randomly chosen samples from the \gls{imagenet} training set, with batch size 64.\ An ablation on different calibration set sizes is provided in Supplement \cref{supp:sec:calib_set_size}. Similar to prior work~\cite{ chekalina2025generalizedfisherweightedsvdscalable, 2025_flar-svd}, we apply regularization on factors $\mathbf{A}$ and $\mathbf{B}$ to ensure stability. For all approaches we consider only linear layers and exclude the head from compression.\ Additional details are provided in Supplement \cref{supp:sec:extended_experiment_setup}. 
\vspace{-0.1in}
\paragraph{Search.}
For \gls{oursearch}, error measurements are obtained with 512 randomly sampled images from \gls{imagenet} training set, with layers compressed to 5 ratios $J\in \{0.1, 0.3, 0.5, 0.7, 0.9\}$. We use our interpolation to generate 20 additional points between measurements.\ The CBC solver~\cite{john_forrest_2024_13347261_cbc} and the pulp~\cite{pulp} framework are used to optimize \gls{oursearch} and derive layer-wise rank choices in just a few seconds. Other search strategies follow a similar procedure; complete setup descriptions are provided in the Supplement \cref{supp:sec:extended_experiment_setup}.

\input{tables/main_table_imagenet}

\subsection{\gls{ours} Evaluation}
\paragraph{Image Classification Results on \gls{imagenet}.}
To investigate the performance of \gls{ours} we compare it to other \gls{svd} methods across four vision models, including both \gls{deit} and \gls{swin}, as well as other architectures that use weight sharing layers, such as \gls{convnext} and \gls{mamba}.\ 
In the evaluation every model is compressed \textit{uniformly} to the same ratio.\ Additionally, we provide results for \gls{ours} with our proposed \gls{oursearch}.\ As shown in \cref{tab:classification_50}, \gls{ours} achieves improvements over other methods across models.\ Specifically, even without \gls{oursearch} for \gls{deit}-B we improve by \textbf{2.5} \gls{pp}\ over both \acrshort{svd-llm} and FLAR-SVD.\ On \gls{swin}-B, the improvement is even more pronounced, with \gls{ours} outperforming \acrshort{svd-llm} and FLAR-SVD by \textbf{5.8} p.p.\ Interestingly, some approaches get very low accuracy on Swin, likely due to their lack of importance weighting (PELA), too simple diagonal approximations (ASVD, FW-SVD) and conditioning problems (GFWSVD). For \gls{convnext}-B and \gls{mamba}-B, \gls{ours} achieves improvements of \textbf{3.6} and \textbf{1.0} p.p., respectively, over the next-best decomposition.\ Notably, \gls{mamba} is the only architecture where GFWSVD ties 2nd best, matching the performance of \acrshort{svd-llm} and FLAR-SVD. When replacing uniform compression for \gls{ours} with \gls{oursearch}, performance further increases, with improvements ranging from \textbf{3.6} p.p. to \textbf{9.0}~p.p. Additional results for compression rates 40\% and 60\% are provided in Supplement \cref{supp:sec:additional_ratios} and show similar improvements.\ Overall,  without finetuning, \gls{ours} can preserve high accuracy across models, with \gls{oursearch} providing further gains.

\vspace{-0.1in}
\paragraph{Comparisons to other Fisher Approximations.}\label{sec:fisher_aprrox_main_content}
To further evaluate the improvements of \gls{ours}, we compare it against state-of-the-art Fisher-weighted \gls{svd} methods (FW-SVD~\cite{hsu2022fwsvd}, \acrshort{gfwsvd}~\cite{chekalina2025generalizedfisherweightedsvdscalable}) and established Kronecker curvature estimators (KFAC-expand/reduce~\cite{kfac-martens15,runa-kfacreduce}, Shampoo$^2$~\cite{shampoo-squared}) within the \gls{svd} pipeline described in \cref{sec:fw-svd}. Further details are provided in Supplement~\cref{supp:sec:extended_experiment_setup}. All methods are evaluated at uniform compression for \gls{deit}-B and \gls{swin}-B backbones. The results are summarized in \cref{tab:ablation_fishers}. 
Across both \gls{vit} models, the results align with the observation from \cref{sec:fisher_analysis}: global \gls{fim} alignment, measured by cosine similarity, is not a reliable indicator of post-compression performance. On \gls{swin}-B, Shampoo$^2$ achieves the highest cosine similarity (0.177), while \gls{ours} achieves the best Top-1 accuracy outperforming the other baselines (75.9\% vs.\ 76.6\%) despite having the second-lowest cosine similarity (0.127). Similarly, on DeiT-B, \gls{ours} outperforms the baselines despite exhibiting lower cosine similarity than the second best model (Shampoo$^2$).\ 
As the core difference between Shampoo$^2$ and \gls{ours} is the modeling of cross-token moments, the accuracy advantage strongly supports our insight to suppress the cross-token mixed moments. 
\acrshort{gfwsvd} exhibits an additional practical failure mode: although it achieves the highest cosine similarity on DeiT-B (0.174), the resulting factors can be ill-conditioned, making the whitening step numerically unstable in our pipeline and leading to poor post-compression accuracy. This effect is especially severe on Swin-B. 
Overall, by combining token-local aggregation with within-token activation-gradient coupling, \gls{ours} achieves the best Top-1 accuracy on both architectures (+0.3~\gls{pp} on DeiT-B and +0.7~\gls{pp} on Swin-B over the next best method), without achieving the highest cosine similarity. These results support that, for Fisher-weighted \gls{svd} compression, accurately approximating the \gls{fim} does not guarantee high post-compression accuracy.
\input{tables/ablation_fishers}
\vspace{-0.02in}
\subsection{\gls{oursearch} Evaluation}
We evaluate \gls{oursearch} against representative rank allocation methods across three categories: gradient-based (ComCat), greedy (MemoryViT), and equal-error search (ASVD, FLAR-SVD). To ensure a fair comparison, all methods utilize \gls{ours} for the underlying decomposition (detailed in \cref{supp:sec:extended_experiment_setup}). As shown in \cref{tab:search_approach_comparison}, \gls{oursearch} achieves the highest accuracy, outperforming the best baselines on \gls{deit} and \gls{swin} by \textbf{0.6}\,p.p.\ and \textbf{0.5}\,p.p., respectively. While MemoryViT reports the lowest search time by relying on easy-to-obtain matrix energy scores, it normalizes energy across layers. This approach neglects layer-specific sensitivities and results in lower overall accuracy. The equal-error strategies, ASVD and FLAR-SVD, yield strong results. They are 2nd on \gls{deit} and \gls{swin}, but remain less effective and slower than \gls{oursearch}. Finally, ComCat is the least efficient (requiring 40 min) and ranks last on \gls{deit} while offering limited flexibility. Overall, \gls{oursearch} provides the best trade-off between post-compression accuracy and computational efficiency.
\input{tables/search_table}

\subsection{Ablations}
\paragraph{Hardware Acceleration.}
Our previous experiments demonstrate that \gls{ours} and \gls{oursearch}, achieve excellent compression-performance trade-offs.\ However, while the \glspl{flop} savings are a significant advantage, previous work~\cite{runddontwalk} established that theoretical \glspl{flop} reduction does not necessarily translate to practical acceleration on parallel hardware, which is critical for real-world applications.\ 
To quantify the throughput speedups achieved with \gls{ours}, we benchmark the 50\% \gls{svd}-compressed \gls{deit}-B and \gls{swin}-B models on a range of different hardware targets, including Nvidia V100, A100, and H100 GPUs, as well as an Intel Xeon E5-2698 v4 CPU. The results are presented in \cref{fig:throughput}.\ The \gls{svd}-compressed models achieve consistent speedups of \textbf{1.6x} for \gls{deit}-B and \textbf{1.4x} for \gls{swin}-B across all evaluated hardware\ platforms, demonstrating practical throughput accelerations with low impact on task performance making it a great choice for real-world applications.\ We provide additional inference comparisons, including a direct comparison to semi-structured pruning approaches, in Supplement~\cref{supp:sec:extended_throughput_eval}.\

\vspace{-0.1in}
\paragraph{Comparison to Structured Pruning.}
We compare \gls{ours} against structured pruning, which offers hardware acceleration but incurs substantial architectural modifications and initial accuracy loss. We benchmark against three strong baselines on \gls{deit}-B: NViT~\cite{Yang_2023_Nvit}, isomorphic pruning~\cite{fang2024isomorphicpruning}, and DISP-LLM~\cite{gao2024disp} (\cref{tab:pruning_comparisons}). For fairness against NViT (which optimizes during pruning), we report results for the other methods after 5 epochs of finetuning (adding $\approx$1 hour/epoch).
Isomorphic pruning (fast, one-shot Taylor-based) suffers a massive accuracy drop to 25.9\% post-pruning. Conversely, NViT's iterative approach maintains high accuracy (79.2\%) but is computationally expensive. DISP-LLM (gradient-based) retains better accuracy (40.1\%) than isomorphic pruning but incurs higher latency due to indexing overhead. After five finetuning epochs, isomorphic pruning and DISP-LLM recover to 80.4\% and 79.8\% respectively, outperforming NViT.
\gls{ours} outperforms all methods in both scenarios. Without finetuning, it matches isomorphic pruning's speed while achieving \textbf{54.0}\,p.p.\ higher accuracy. With just one finetuning epoch, \gls{ours} reaches near-baseline accuracy (within 0.4\,p.p.), exceeding the 2nd-best by \textbf{1.0}\,p.p.\ with a \textbf{4x} smaller compute budget, showing its high efficiency.
\begin{table*}[t]
    \centering
    \begin{minipage}[t]{0.43\textwidth}
        \centering
        \caption{Comparison of our \gls{ours} (w/ \gls{oursearch}) to structured pruning approaches on DeiT-B. The first part is w/o finetuning; the second is w/ finetuning for X epochs ($\dagger$:X).}
        \label{tab:pruning_comparisons}
        \resizebox{\linewidth}{!}{%
            \begin{tabular}{l c c c c}
            \toprule
            \midrule
            Method & MParams & Time [h] & Top-1$\uparrow$ & Lat. [ms]\\
            \midrule
            Base~\cite{deit} & 87.3 & - & 83.3 & 58.8 \\
            \cmidrule{1-5}
            DISP-LLM~\cite{gao2024disp} & 50.1 & 1.2 & \underline{40.1} & 42.7 \\
            Isomorphic~\cite{fang2024isomorphicpruning} & 50.6 & \textbf{0.1} & 25.9 & \underline{40.5} \\
            \rowcolor{m} \textbf{\gls{ours}} & 49.2 & \underline{0.2} & \textbf{79.9} & \textbf{40.4}\\
            \cmidrule{1-5}
            DISP-LLM$^{\dagger: 5}$\cite{gao2024disp} & 50.1 & 6.2 & 79.8 & 42.7 \\
            Isomorphic$^{\dagger: 5}$~\cite{fang2024isomorphicpruning} & 50.6 & \underline{5.1} & \underline{80.4} & \underline{40.5} \\
            NViT~\cite{Yang_2023_Nvit} & 53.6 & 7.3 & 79.2 & 40.6 \\
            \rowcolor{m} \textbf{\gls{ours}}$^{\dagger: 1}$ & 49.2 & \textbf{1.2} & \textbf{81.4} & \textbf{40.4}\\
            \bottomrule
            \bottomrule
            \end{tabular}
        }
    \end{minipage}
    \hfill
    \begin{minipage}[t]{0.55\textwidth}
        \centering
        \caption{Top-1 for the best SVD approaches on large (L) and small (S) variants of the used models \textbf{uniformly} compressed to 50\% remaining size. Final column is \gls{ours} with \gls{oursearch} applied denoted as +s. }
        \label{tab:size_comparisons}
        \resizebox{\linewidth}{!}{%
            \begin{tabular}{l|c|c|c|c}
            \toprule
            \midrule
            \multirow{1}{*}{Model} & \multicolumn{1}{c|}{GFWSVD} & \multicolumn{1}{c|}{FLAR} & \multicolumn{1}{c|}{\cellcolor{m}\textbf{\ours}} & \multicolumn{1}{c}{\cellcolor{m}\textbf{\ours} \textbf{+s}} \\
            \midrule
            Swin-L~\cite{swin} & 66.4 & 76.8 & \cellcolor{m}\underline{78.9} & \cellcolor{m}\textbf{83.0} \\
            ConvNext-L~\cite{convnext} & 66.3 & 80.5 & \cellcolor{m}\underline{81.9} & \cellcolor{m}\textbf{83.9} \\
            MambaVis.-L~\cite{mambavision} & 77.8 & 79.5 & \cellcolor{m}\underline{80.5} & \cellcolor{m}\textbf{83.2} \\
            \midrule
            DeiT-S~\cite{deit} & 15.2 & 37.8 & \cellcolor{m}\underline{51.1} & \cellcolor{m}\textbf{62.2} \\
            Swin-S~\cite{swin} & 2.9 & 49.4 & \cellcolor{m}\underline{56.4} & \cellcolor{m}\textbf{67.6} \\
            ConvNext-S~\cite{convnext} & 23.4 & 64.5 & \cellcolor{m}\underline{69.2} & \cellcolor{m}\textbf{77.5} \\
            MambaVis.-S~\cite{mambavision} & 66.7 & 65.0 & \cellcolor{m}\underline{67.4} & \cellcolor{m}\textbf{76.5} \\
            \midrule
            \bottomrule
            \end{tabular}
        }
    \end{minipage}
\end{table*}
\vspace{-0.1in}
\paragraph{Scaling to Small and Large Models.}
To further investigate the generalization across different model sizes, we evaluate the large (L) and small (S) variants of all models. \cref{tab:size_comparisons} shows the results for the highest performing variants in the main table. Since there is no L version of DeiT, it is not included in the table. The results demonstrate the remarkable scalability of our approach across both small and large variants of the tested architectures, achieving improvements of up to +13.3 p.p. over the second-best baseline on the small models and +2.1 p.p. on the large models \textit{without search}, and +24.4 p.p. and +6.2 p.p. with search on the small and large models, respectively.
\vspace{-0.1in}
\input{figures/throughput_downstream_mixed_caption_pos}
\paragraph{What about LLMs?}
To determine whether \gls{ours}'s performance improvements are specific to vision, we run a brief exploratory study on a LLM. Qualitatively, the cross-token moments of Qwen3-1.7B exhibit less spatial regularity than those in vision models (cf. Supplement). This aligns with inherent modality differences: unlike image patches, language tokens lack fixed geometric positions, making their dependencies highly sequence- and context-driven. Despite this structural difference, \gls{ours} continues to perform well on WikiText perplexity and zero-shot language benchmarks. 
While this provides some indication of the performance in the language domain, addressing the complexity and challenges that larger LLMs introduce goes beyond the scope of this work. 
We refer to contemporary work \cite{thoma2026advancing} that specifically addresses these challenges. Additional evaluation details, figures and tables are provided in Supplement Section~\ref{supp:subsec:LLM_eval_details} and Section~\ref{supp:sec:llm_checks}.
\vspace{-0.1in}
\paragraph{Downstream Tasks.}
We evaluate additional downstream tasks (object detection and instance segmentation) on the \glslink{coco}{COCO}\glsunset{coco} 2017 dataset~\cite{COCO}.\ We use a Mask R-CNN~\cite{maskrcnn} model built with a \gls{swin}-B backbone and compare against \acrshort{svd-llm}~\cite{wang2024svd-llm} (most competitive \gls{svd} baseline) and \glslink{pela}{PELA}\glsunset{pela}~\cite{pela} (reference downstream pipeline for \gls{svd}-compressed backbones).\ 
We follow \gls{pela}'s setup in compressing only the backbone.\ However, whereas \gls{pela}'s approach requires a 50-epoch finetuning followed by downstream training, our method is evaluated using a zero-shot paradigm, leveraging \gls{svd}'s ability to serve as a drop-in layer replacement without introducing dependency issues or significant task performance degradation.
\gls{ours} significantly outperforms \acrshort{svd-llm}, and notably, even ties the fully retrained \gls{pela}, while avoiding their costly multi-stage retraining.\ Moreover, a single epoch of finetuning (1ep.FT) can effectively close the small remaining gap to the uncompressed baseline and surpass the converged \gls{pela}.\ The results are summarized in~\cref{tab:down_det}.
For semantic segmentation with DeiT based UPerNet on ADE20k, results follow similar trends. Details and full semantic segmentation results are provided in Appendix \ref{supp:sec:extended_experiment_setup} and \ref{supp:sec:downstream_semseg}, respectively.

%% file: tables/main_table_imagenet.tex
\begin{table*}[t]
    \centering
    \caption{Top-1 accuracy for various models on \gls{imagenet} at \textbf{uniform} compression (50\% linear layers FLOPs remaining) for related \gls{svd} methods. In addition to uniform compression, we report \gls{ours} with our \textbf{\gls{oursearch}} to isolate the effect of decomposition and search.}
    \resizebox{0.96\textwidth}{!}{%
    \begin{tabular}{l  c c   c c   c c   c c }
    \toprule
    \midrule
    \multirow{2}{*}{Method} & \multicolumn{2}{c}{\textbf{DeiT-B}} & \multicolumn{2}{c}{\textbf{Swin-B}} & \multicolumn{2}{c}{\textbf{ConvNeXt-B}} & \multicolumn{2}{c}{\textbf{MambaVis.-B}} \\
    & GFLOPs & Top-1$\uparrow$ & GFLOPs & Top-1$\uparrow$ & GFLOPs & Top-1$\uparrow$ & GFLOPs & Top-1$\uparrow$ \\
    \midrule
    Baseline & 33.9 & 83.3 & 30.3 & 85.1 & 30.8 & 85.8 & 29.9 & 83.9 \\
    \cmidrule{2-9}
    PELA~\cite{pela} & 17.1 & 66.8 & 15.5 & 11.7 & 15.9 & 19.7 & 21.5 & 69.5 \\
    FW-SVD~\cite{hsu2022fwsvd} & 17.1 & 73.0 & 15.5 & 28.5 & 15.9 & 42.7 & 21.5 & 68.2 \\
    ASVD~\cite{yuan2024asvd} & 17.1 & 72.3 & 15.5 & 25.5 & 15.9 & 49.4 & 21.5 & 69.7 \\
    SVD-LLM~\cite{wang2024svd-llm} & 17.1 & 75.0 & 15.5 & 60.1 & 15.9 & 72.2 & 21.5 & 71.3 \\
    GFWSVD~\cite{chekalina2025generalizedfisherweightedsvdscalable} & 17.1 & 71.0 & 15.5 & 16.8 & 15.9 & 18.7 & 21.5 & 71.3 \\
    FLAR-SVD~\cite{2025_flar-svd} & 17.1 & 75.0 & 15.5 & 60.1 & 15.9 & 72.2 & 21.5 & 71.3 \\
    \rowcolor{m} \textbf{\gls{ours}} & 17.1 & \underline{77.5} & 15.5 & \underline{65.9} & 15.9 & \underline{75.8} & 21.5 & \underline{72.3} \\
    \rowcolor{m} \textbf{\gls{ours}} $+$ \textbf{\gls{oursearch}} & 17.1 & \textbf{81.3} & 15.5 & \textbf{74.9} & 15.9 & \textbf{79.4} & 21.5 & \textbf{79.7} \\
    \midrule
    \bottomrule
    \end{tabular}
    }
    \label{tab:classification_50}
\end{table*}

%% file: tables/ablation_fishers.tex
\begin{table*}[t]
\centering
\captionof{table}{Comparison of Kronecker \gls{fim} estimators for Fisher-weighted SVD compression at matched GFLOPs on DeiT-B and Swin-B. We report cosine similarity to the empirical \gls{fim} and post-compression Top-1, along with whether each method uses token-local aggregation and preserves activation-gradient coupling. All results are obtained without search.
}
\resizebox{.96\textwidth}{!}{%
\begin{tabular}{l c c  c c c  c c c}
\toprule
\midrule
\multirow{2}{*}{Method} & \multirow{2}{*}{\makecell{token-local\\aggregate}} & \multirow{2}{*}{\makecell{act-grad.\\coupling}} & \multicolumn{3}{c}{\textbf{DeiT-B}\cite{deit}} & \multicolumn{3}{c}{\textbf{Swin-B}\cite{swin}} \\\cmidrule{4-9}
&  & & GFLOPs & cos-sim$\uparrow$ & Top-1$\uparrow$ & GFLOPs & cos-sim$\uparrow$ & Top-1$\uparrow$\\
\midrule
Baseline & - & - & 33.9 & - & 83.3 & 30.3 & - & 85.1\\
\cmidrule{2-9}
FW-SVD~\cite{hsu2022fwsvd} & - & - & 17.1 & - & 73.0 & 18.4 & - & 61.0\\
GFWSVD~\cite{chekalina2025generalizedfisherweightedsvdscalable} & \xmark & \cmark & 17.1 & \textbf{0.174} & 71.0 & 18.4 & 0.117 & 52.2\\
\cmidrule{2-9}
Shampoo$^2$~\cite{shampoo-squared} & \xmark & \cmark & 17.1 & \underline{0.166} & \underline{77.2} & 18.4 & \textbf{0.177} & \underline{75.9}\\
KFAC-red.~\cite{runa-kfacreduce} & \xmark & \xmark & 17.1 & 0.129 & 67.0 & 18.4 & \underline{0.138} & 56.5\\
KFAC-exp.~\cite{kfac-martens15}  & \cmark & \xmark & 17.1 & 0.119 & 75.9 & 18.4 & 0.103 & 75.3\\
\rowcolor{m}\textbf{\ours}         & \cmark & \cmark & 17.1 & 0.128 & \textbf{77.5} & 18.4 & 0.127 & \textbf{76.6}\\
\bottomrule
\bottomrule
\end{tabular}
}
\label{tab:ablation_fishers}
\end{table*}

%% file: tables/search_table.tex
\begin{table}[t]
\centering
\captionof{table}{Comparison between our search and other SVD focused searches, comparing resulting top-1 accuracy on ImageNet1k for a fixed number of FLOPs and time to search (tts).
}
\resizebox{1\linewidth}{!}{%
\begin{tabular}{c|ccc|ccc}
\midrule
\multirow{2}{*}{\textsc{Method}} & \multicolumn{3}{c}{\textsc{DeiT-B}} & \multicolumn{3}{c}{\textsc{Swin-B}} \\
& GFLOPs & Top-1$\uparrow$ & tts [min]$\downarrow$ & GFLOPs & Top-1$\uparrow$ & tts [min]$\downarrow$\\
\midrule
Baseline & 33.7 & 81.8 & - & 30.3 & 85.1 & -\\
\midrule
\textbf{uniform} & 17.0 & 71.5 & - & 18.4 & 76.4 & -\\
ASVD & 17.0 & 79.1 & 19.9 & 18.4 & 80.2 & 26.5\\
FLAR-SVD & 17.5 & 79.2 & 52.7 & 19.4 & 80.8 & 110.0\\
MemViT & 17.0 & 78.1 & \textbf{0.1} & 18.4 & 79.0 & \textbf{0.1}\\
ComCat & 17.2 & 76.2 & 40.0 & n.a. & n.a. & n.a.\\
\rowcolor{m}\textbf{\gls{oursearch}} & 17.0 & \textbf{79.8} & 6.6 & 18.4 & \textbf{81.3} & 15.0\\
\bottomrule
\end{tabular}
}
\label{tab:search_approach_comparison}
\end{table}

%% file: figures/throughput_downstream_mixed_caption_pos.tex
\begin{figure*}[t]
    \centering
    
    \pgfplotsset{
        bar_chart_style/.style={
            ybar,
            width=\linewidth,
            height=3.95cm,
            ymin=0,
            ymax=2300,
            bar width=7pt,
            symbolic x coords={V100, A100, H100, CPU},
            xtick=data,
            nodes near coords,
            nodes near coords align={vertical},
            every node near coord/.append style={font=\tiny, text=black, yshift=0.5pt},
            tick label style={font=\tiny},
            ytick={0, 500, 1000, 1500, 2000},
            ylabel style={font=\scriptsize, yshift=-0.05cm},
            ymajorgrids=true,
            grid style=dashed,
            legend style={
                draw=none, 
                font=\tiny, 
                legend columns=1, 
                at={(0.02, 0.98)}, 
                anchor=north west
            },
        }
    }
    \begin{minipage}[t]{0.34\linewidth} 
        \vspace{0pt}
        \centering
        \begin{tikzpicture}
            \begin{axis}[
                bar_chart_style,
                ylabel={Throughput (img/s)},
                enlarge x limits=0.25,
                ylabel style={font=\scriptsize, yshift=-0.25cm}
            ]
                \addplot[fill=DarkBlue, draw=black, nodes near coords={}] 
                    coordinates {(V100,350) (A100,503) (H100,1269) (CPU,30.7)};

                \addplot[fill=DarkGreen, draw=black,
                    point meta rel=per plot,
                    visualization depends on={rawy/\thisrow{base} \as \ratio},
                    nodes near coords={\pgfmathprintnumber[fixed,precision=1]{\ratio}x},
                    every node near coord/.append style={font=\tiny, text=black, anchor=south, inner sep=1pt},
                ] table[row sep=\\, meta=base] {
                    x      y      base  \\
                    V100   568    350   \\
                    A100   851    503   \\
                    H100   2005   1269  \\
                    CPU    47.7   30.7  \\
                };
                \legend{Baseline, Ours}
            \end{axis}
        \end{tikzpicture}
        
        \vspace{0.cm}
        \caption{Throughput (DeiT-B) on different platforms showing acceleration of \gls{svd}.
        }
        \label{fig:throughput}
    \end{minipage}%
    \hfill%
    \begin{minipage}[t]{0.64\linewidth}
        \vspace{0pt}
        \centering
        \makeatletter\def\@captype{table}\makeatother 
        \caption{Mask R-CNN with Swin-B object detection and instance segmentation results for different \gls{svd} compression approaches on the \gls{coco} dataset.}
        \label{tab:down_det}
        
        \vspace{0.15cm}
        \resizebox{\linewidth}{!}{%
        \begin{tabular}{l c c c c c c c}
        \toprule
        \toprule
        Method & $mAP^b$ & $mAP^b_{50}$ & $mAP^b_{75}$ & $mAP^m$ & $mAP^m_{50}$ & $mAP^m_{75}$ & GFLOPs \\
        \midrule
        Baseline & 46.6 & 68.6 & 51.3 & 42.6 & 65.9 & 46.2 & 358.5 \\
        \cmidrule{2-8}
        PELA~\cite{pela} & \textbf{45.3} & \textbf{67.1} & \textbf{50.2} & 41.3 & 64.1 & 44.6 & 290.3 \\
        SVD-LLM~\cite{wang2024svd-llm} & 42.9 &  64.2 & 47.1 & 39.5 & 61.5 & 42.6 & 290.3 \\
        \rowcolor{m}\textbf{\gls{ours}} & \textbf{45.3} & 67.0 & 49.7 & \textbf{41.6} & \textbf{64.4} & \textbf{45.2} & 290.3  \\
        \rowcolor{m}\textbf{\gls{ours}+1ep.FT} & \textbf{45.9} & \textbf{67.8} & \textbf{50.2} & \textbf{42.1} & \textbf{65.1} & \textbf{45.8} & 290.3  \\
        \bottomrule
        \bottomrule
        \end{tabular}%
        }
    \end{minipage}
\end{figure*}

%% file: sec/5_conclusion.tex
\section{Conclusion and Limitations}
\label{sec:conclusion}

We propose \gls{ours}, a structured Kronecker \gls{fim} approximation for Fisher-weighted SVD compression of \gls{vit}. Derived via token-wise power iteration, \gls{ours} combines token-local aggregation (discarding cross-token mixed moments) with preserved activation-gradient dependence, yielding a curvature model more effective for \gls{svd} compression than standard Kronecker choices. Together with our global rank allocator (\gls{oursearch}), \gls{ours} achieves the best accuracy-efficiency trade-offs across the evaluated architectures, delivering strong accuracy retention with practical inference speedups under fixed \gls{flop} budgets.

Our results also highlight an intriguing empirical phenomenon: cross-token mixed moments can substantially increase \gls{fim} cosine similarity, yet retaining them does not reliably improve Fisher-weighted \gls{svd} compression. Developing a more complete theoretical understanding of when and why these terms are misaligned with compression-induced perturbations is an interesting direction for future work. Finally, we plan to extend the structured-curvature perspective to other compression modalities (e.g., quantization) and to reduce calibration cost by exploring cheaper block- or layer-wise \gls{fim} approximations that lower the resource demands of gradient-based statistics.

%% file: sec/X_suppl.tex
\section{Extended Experimental Setup}\label{supp:sec:extended_experiment_setup}
\input{sec/supplementary/extended_experimental_setup}

\input{figures/supplementary/extended_fim_figure}

\section{Extended Evaluation}\label{sec:supp_extended_eval}
\subsection{Extended Analysis on Model Properties}\label{supp:sec:more_actgrad_and_zcm}
\input{sec/supplementary/extended_emprical_insights}

\subsection{Extended Throughput Analysis}
\input{sec/supplementary/extended_throughput_eval}

\subsection{Semantic Segmentation Results}\label{supp:sec:downstream_semseg}
\input{sec/supplementary/extended_downstream_results}

\subsection{Additional Compression Ratios}\label{supp:sec:additional_ratios}
\input{sec/supplementary/extended_main_tables}

\subsection{Impact of Calibration Set Size}\label{supp:sec:calib_set_size}
\input{sec/supplementary/calibration_data_impact}

\subsection{Individual Contribution to \gls{oursearch} Performance}
The individual contributions of \gls{oursearch}'s components to the overall accuracy and computational cost, for DeiT-B and {Swin-B}, are summarized in \cref{tab:individual_contribution}.\ 
Uniform compression (\gls{ours} without search) is fast, taking only 8.3 min for DeiT-B and 10.9 min for Swin-B on a single V100 GPU. Using ILP-based search yields substantial performance gains, improving accuracy by $+3.8$ p.p. on DeiT-B and $+4.4$ p.p.\ on Swin-B.
However, the gain comes at the cost of sensitivity measurements. A high-resolution search (9 points, $J \in \{0.1, \dots, 0.9\}$) adds 12.7 min (DeiT-B) and 28.3 min (Swin-B) \textit{on top} of the decomposition time.
To reduce this overhead, we limit the search to 5 measurements ($J \in \{0.1, 0.3, \dots, 0.9\}$). This lowers the total runtime to 14.8 min (DeiT-B) and 27.3 min (Swin-B), a reduction of roughly 50\%, at the cost of an accuracy drop ($-0.1$ p.p. on DeiT, $-0.4$ p.p. on Swin). Applying our interpolation recovers this performance, matching or improving accuracy relative to the high-resolution search.
Collectively, \gls{oursearch} attains high accuracy at minimal overhead.\ Moreover, since decomposition factors and sensitivity scores are invariant to the target compression, the search constitutes a \textbf{one-time cost}, with scores reusable across target budgets.
\input{tables/individual_contributions}

\subsection{What about LLMs?}\label{supp:sec:llm_checks}
\input{sec/supplementary/llm_stuff}

\section{Extended Methodology for \gls{ours}}
This section expands the methodology in the main section, presenting the more detailed explanations for the preliminaries, theoretical derivation of our objective, detailed algorithmic pseudocode, and the specific mechanisms of our constrained rank search.
\subsection{Extended Preliminary Explanations}\label{supp:sec:preliminaries}
\input{sec/supplementary/preliminaries}

\subsection{Decomposition (The Fisher-Weighted SVD Objective)}
\input{sec/supplementary/kroneker_fwsvd_closed_form_solution}
\input{sec/supplementary/method_algorithms}

\subsection{Search}
\input{sec/supplementary/search_extensions}

%% file: sec/supplementary/extended_experimental_setup.tex
This section details the implementation specifics, baselines, and evaluation protocols required to reproduce the reported results.

\subsection{Comparison between Kronecker Fisher Approximations}
\label{supp:subsec:fisher_comparison_motivation}

In \cref{sec:fisher_analysis} we compare different Kronecker-factored \gls{fim} estimators to assess how two structural priors (cross-token aggregation (local vs.\ global) and activation-gradient decoupling) affect downstream post-compression accuracy.
Because \gls{fim} statistics can be sensitive to the data distribution, we control both the class balance and the calibration/validation split.
We start from the ImageNet validation set (class-balanced) and sample \textbf{16k} images (16 per class) to estimate Kronecker factors for all methods using a batch size of 64.
We then sample a disjoint \textbf{2k} image set (2 per class) to evaluate cosine similarity and to compute the qualitative diagnostics shown in \cref{fig:motivation:b,fig:motivation:c}. All Kronecker factors are obtained using an identical \gls{svd} pipeline, with only method-specific factor computations changed. For post-compression accuracy, we follow the main experimental protocol (see \cref{supp:subsec:experimental_setup_decomposition}).

\paragraph{Cosine similarity.}
We report an operator cosine similarity between the Kronecker approximated \gls{fim} $\widehat{\mathbf{F}}$ and the empirical \gls{fim} $\mathbf{F}$ using only Fisher-vector products (i.e., without materializing the prohibitively large $\mathbf{F}\in\mathbb{R}^{mn\times mn}$).
Concretely, we view $\mathbf{F}$ and $\widehat{\mathbf{F}}$ as linear operators acting on a probe matrix $\mathbf{V}\in\mathbb{R}^{m\times n}$ (equivalently, on $\mathrm{vec}(\mathbf{V})$).
We draw $K$ i.i.d.\ probe matrices $\{\mathbf{V}_k\}_{k=1}^K$ with zero mean and unit variance entries, and estimate the Frobenius inner product between operators using stochastic trace identities (Hutchinson-style estimators) \cite{hutchinson1989stochastic, avron2011trace}.
In particular, we compute $\mathbf{F}(\mathbf{V}_k)$ from held-out samples and $\widehat{\mathbf{F}}(\mathbf{V}_k)$ from the Kronecker form, and report the normalized alignment
\begin{equation}
\cos(\mathbf{F},\widehat{\mathbf{F}})
\;\approx\;
\frac{\frac{1}{K}\sum_{k=1}^K \langle \mathbf{F}(\mathbf{V}_k),\, \widehat{\mathbf{F}}(\mathbf{V}_k)\rangle_F}
{\sqrt{\left(\frac{1}{K}\sum_{k=1}^K \|\mathbf{F}(\mathbf{V}_k)\|_F^2\right)
       \left(\frac{1}{K}\sum_{k=1}^K \|\widehat{\mathbf{F}}(\mathbf{V}_k)\|_F^2\right)}}.
\end{equation}
We reuse the same probe set $\{\mathbf{V}_k\}$ across all estimators to ensure a fair comparison.

\paragraph{Cross-token structure diagnostic.}
In addition to cosine similarity, we visualize cross-token structure using a diagnostic derived from activations and output gradients.
For each token pair $(t,s)$, we form cross-sample Gram matrices of activations and gradients,
$K_x^{(t,s)}[b,d]=\langle x_{b,t},x_{d,s}\rangle$ and $K_g^{(t,s)}[b,d]=\langle g_{b,t},g_{d,s}\rangle$,
compute their element wise product, and report a normalized Frobenius magnitude $\|K_x^{(t,s)}\odot K_g^{(t,s)}\|_F$
(normalized by the mean diagonal).
Pronounced off-diagonal values indicate strong cross-token coupling. We emphasize that this heatmap is a qualitative diagnostic of token interactions and is \emph{not} itself used as a \gls{fim} approximation.

\paragraph{Activation-gradient coupling.}
To quantify within-token activation-gradient coupling, we compute \emph{distance correlation} (dCor) between activations and output gradients per token on the 2k sample set \cite{szekely2007dcorr}.
For each token index $t$, we collect pairs $\{(x_{b,t}, g_{b,t})\}_{b=1}^{N}$ across samples, and then compute the distance-correlation statistic $\mathrm{dCor}(x_{t}, g_{t})$ using \texttt{Dcorr} of the \texttt{hyppo} library~\cite{panda2019hyppo}.

\subsection{Decomposition}\label{supp:subsec:experimental_setup_decomposition}
Since some official repositories for decomposition methods primarily support Large Language Models (LLMs)~\cite{yuan2024asvd, chekalina2025generalizedfisherweightedsvdscalable, hsu2022fwsvd, wang2024svd-llm}, we reimplement these baselines for vision architectures. We build upon the codebase of \cite{2025_flar-svd}, who have implemented PELA, ASVD~\cite{yuan2024asvd}, SVD-LLM~\cite{wang2024svd-llm}, FW-SVD~\cite{hsu2022fwsvd}, and FLAR-SVD~\cite{2025_flar-svd}. Additionally, we adapt the GFWSVD~\cite{chekalina2025generalizedfisherweightedsvdscalable} method that is originally designed for LLMs to computer vision tasks.
For all comparative approaches, we strictly adhere to the hyperparameter settings reported in our main manuscript. A specific adaptation was necessary for GFWSVD~\cite{chekalina2025generalizedfisherweightedsvdscalable}: due to memory constraints (its iterative method requires \textit{all} gradient samples of a layer to be present in memory at once), we follow the original implementation and use 64-sample average gradients as samples and follow their regularization strategy. 
For all other Kronecker \gls{fim} methods, we ensure computational stability during the calculation of regularization terms $\mathbf{A}$ and $\mathbf{B}$ by regularizing gradients and applying Tikhonov shrinkage.

\subsection{Search} 
\textbf{MemViT:} We reimplement the search algorithm following the original paper~\cite{azizi2024memoryViT}, utilizing a decay rate $\gamma=80$ and executing the iterative procedure for 500 iterations. Regarding the definition of matrix energy, we empirically evaluated both squared and non-squared singular values. We observed that normalizing based on non-squared singular values yielded superior convergence. Hence, we report results using non-squared singular values.
\textbf{ComCat:} We utilize the official implementation~\cite{xiao2023comcat} for DeiT. To ensure a fair comparison with our zero-shot approach, we initialize the model with weights decomposed by our method and execute the ComCat search for 30 epochs without intermediate retraining. However, because the search converges rapidly, our reported configuration was obtained within one epoch, after which it stagnated. Moreover, since authors did not provide any functionality for Swin and the framework is very rigid, no Swin values are reported.
\textbf{ASVD:} We adapt the vision-compatible implementation from the FLAR-SVD repository. We extend the method to support FLOPs-based compression targets (specifically for Swin Transformer) and refine the sensitivity measurement range. While the original work evaluated sensitivities in the $[0.4, 0.9]$ range, we broaden this to $[0.1, 0.9]$ with $0.1$ increments to better capture the compression tolerance of vision models. Furthermore, we align the sensitivity metric with our own by substituting the Cross-Entropy loss with KL-Divergence, which we find to work better.
\textbf{FLAR-SVD:} We utilize the authors provided implementation~\cite{2025_flar-svd}. However, as the original method optimizes the latency as well as possible for a given target error, hitting a specific FLOPs budget may require trial and error. To compare against fixed-FLOPs baselines, we augmented their code with an automated optimization routine designed to strictly target specific FLOPs budgets while using the error target mechanism of the original implementation. 
All implementations and settings are part of the code.

\subsection{Finetuning}
We evaluate finetuning performance using the DeiT model within the structured pruning comparison, following the recipe provided in the official code of \cite{fang2024isomorphicpruning}.
\begin{itemize}
    \item \textbf{Our Method:} To accommodate the constrained training budget of a single epoch, we employ a learning rate of $8.7 \times 10^{-6}$ without warm-up.
    \item \textbf{Structured Pruning Baselines:} As these methods suffer more significant accuracy degradation post-pruning, they require larger weight updates to recover. Consequently, we increase the learning rate to $1.0 \times 10^{-4}$ (no warm-up) as this performed better than using the same as we did for ours.
\end{itemize}
In both scenarios, we replace the standard Cross-Entropy loss with the distillation \textit{soft loss} from the original DeiT work~\cite{deit}, as we found it offers superior stability and performance in short-schedule training regimes.

\subsection{Downstream Tasks}
We assess the transferability of our compressed models on object detection (with instance segmentation) and semantic segmentation using the \texttt{mmdetection}~\cite{mmdetection} and \texttt{mmsegmentation}~\cite{mmseg2020} frameworks.

\paragraph{Datasets \& Models.}
We evaluate object detection on COCO 2017~\cite{COCO} using Mask-RCNN~\cite{maskrcnn} with a Swin-B~\cite{swin} backbone. Semantic segmentation is evaluated on ADE20k~\cite{ade20k} using UPerNet~\cite{xiao2018unified} with both Swin-B and DeiT-B~\cite{deit} backbones.

\paragraph{Baselines.}
Uncompressed baselines are established by training the full downstream models using standard schedules: the 1x schedule (12 epochs) for COCO and 160k iterations for ADE20k. Backbones are initialized with official ImageNet weights (ImageNet-22k $\to$ 1k fine-tuned for Swin-B, ImageNet-1k for DeiT-B). All training is conducted on $4 \times$ H100 GPUs.

\paragraph{Evaluation Protocols.}
Consistent with prior work~\cite{pela}, compression is applied \textit{only} to the backbone.
\begin{enumerate}
    \item \textbf{Zero-Shot (ZS):} Compression is applied directly to the converged downstream checkpoint. Rank configurations are derived via our search procedure, and the model is evaluated without further optimization. For sensitivity calibration, we employ 10 sample images with task-specific metrics: for object detection, we minimize feature map MSE at the FPN output (\texttt{neck.fpn\_convs.3.conv}). For segmentation, we retain the KL-Divergence metric used in classification.
    \item \textbf{Comparison with PELA:} We strictly follow PELA's~\cite{pela} multi-stage protocol: (1) compress the ImageNet-pretrained backbone, (2) retrain the backbone on ImageNet, (3) initialize the downstream model with this retrained backbone, and (4) train the full downstream model from scratch.
\end{enumerate}

\subsection{LLM Compression and Evaluation}\label{supp:subsec:LLM_eval_details}
For the LLM ablation, we compress \texttt{Qwen/Qwen3-1.7B} from huggingface. The model is compressed with uniform ratio for all layers to 0.7 remaining parameters. We calibrate approaches on WikiText2~\cite{merity2017pointer_wikitext2} train (256 sequences, of length
2048), and evaluate perplexity on the test set of WikiText2 with the same sequence length. We further report the average zero-shot accuracy across popular benchmark datasets (piqa~\cite{bisk2020piqa}, openbookqa~\cite{mihaylov2018openbookqa}, hellaswag~\cite{zellers2019hellaswag}, arc-challenge/-easy~\cite{clark2018arc}, winogrande~\cite{sakaguchi2021winogrande}) evaluated using the lm-eval library~\cite{eval-harness}.

%% file: figures/supplementary/extended_fim_figure.tex
\begin{figure*}[ht]
    \centering
    \subcaptionbox{Cross-token coupling diagnostic for Swin. \label{fig:motivation_swin_corr}}{%
        \parbox[b]{0.30\textwidth}{%
            \centering
        \begin{tikzpicture}
            \node[anchor=south west, inner sep=0] (containerimg) at (0,0) {%
                \includegraphics[width=\linewidth]{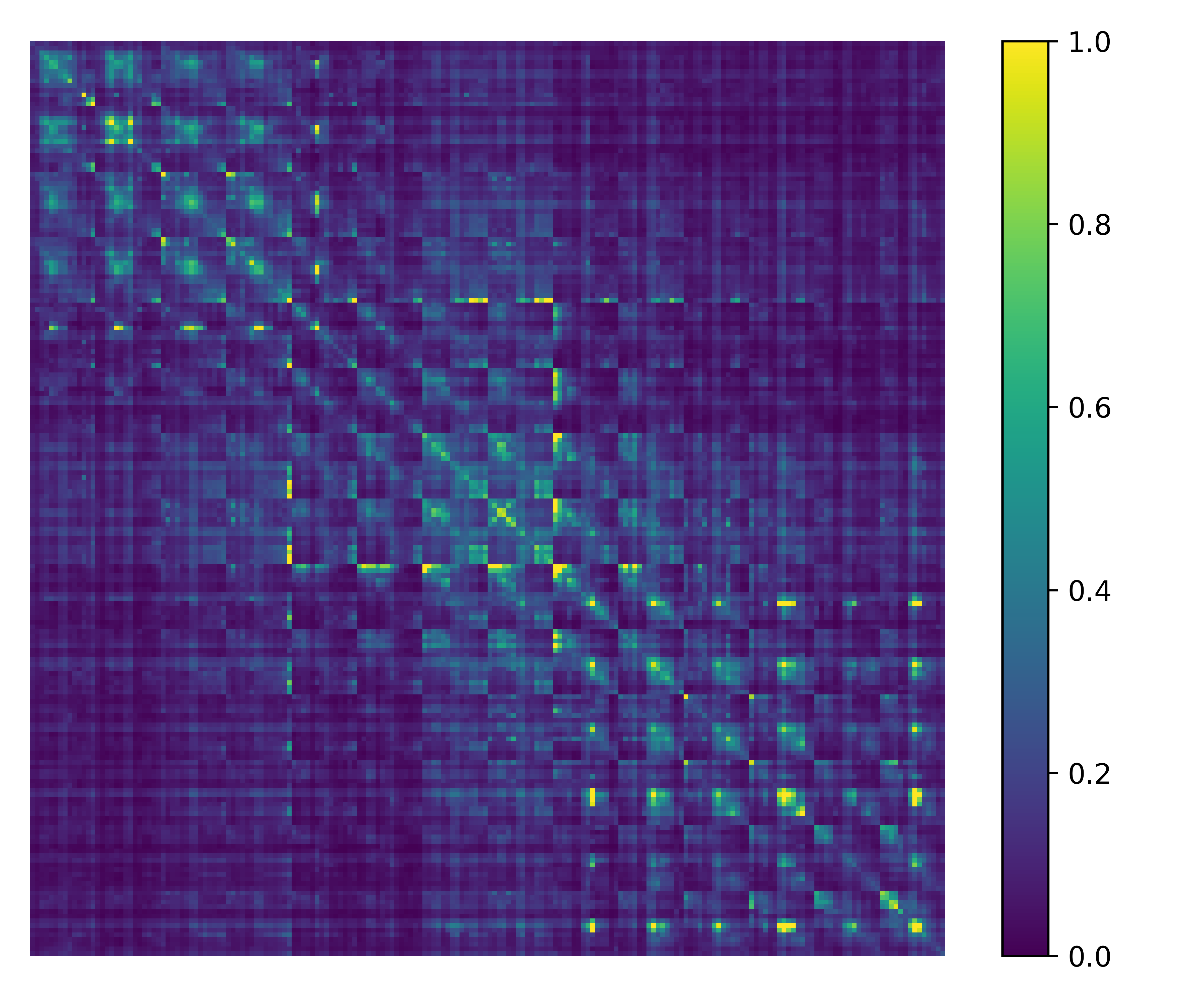}
            };
            \node[anchor=south west, text=white, font=\sffamily\tiny, inner sep=1pt] 
                  at ([xshift=3pt, yshift=5.0pt] containerimg.south west) 
                  {Swin-B};
        \end{tikzpicture}
        }%
    }
    \hspace{0.02\textwidth}
    \subcaptionbox{Cross-token coupling diagnostic for ConvNeXt. \label{fig:motivation_convnext_corr}}{%
        \parbox[b]{0.30\textwidth}{%
            \centering
        \begin{tikzpicture}
            \node[anchor=south west, inner sep=0] (containerimg) at (0,0) {%
                \includegraphics[width=\linewidth]{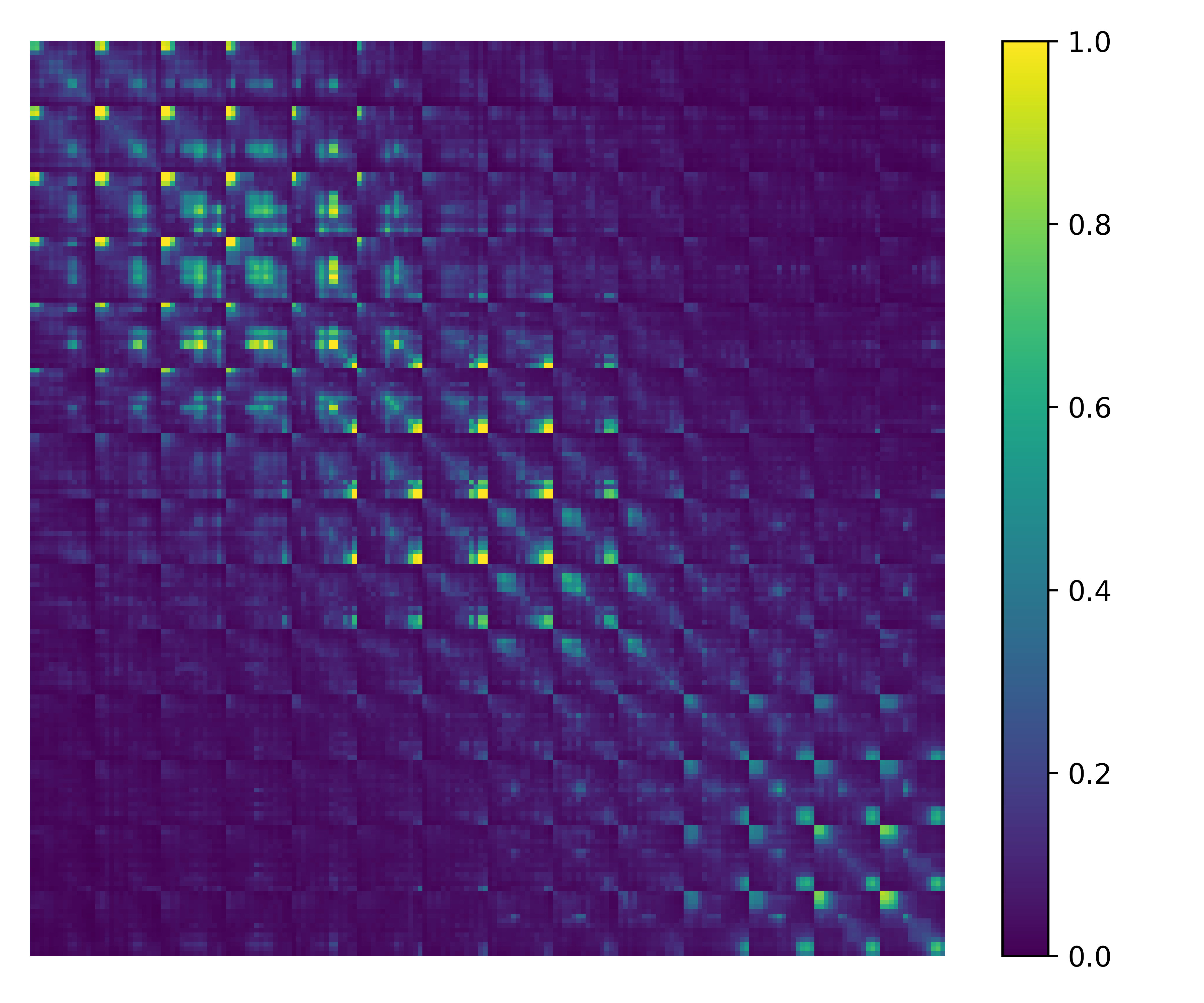}
            };
            \node[anchor=south west, text=white, font=\sffamily\tiny, inner sep=1pt] 
                  at ([xshift=3pt, yshift=5.0pt] containerimg.south west) 
                  {ConvNeXt};
        \end{tikzpicture}
        }%
    }
    \hspace{0.02\textwidth}
    \subcaptionbox{Cross-token coupling diagnostic for Mamba. \label{fig:motivation_mamba_corr}}{%
        \parbox[b]{0.30\textwidth}{%
            \centering
        \begin{tikzpicture}
            \node[anchor=south west, inner sep=0] (containerimg) at (0,0) {%
                \includegraphics[width=\linewidth]{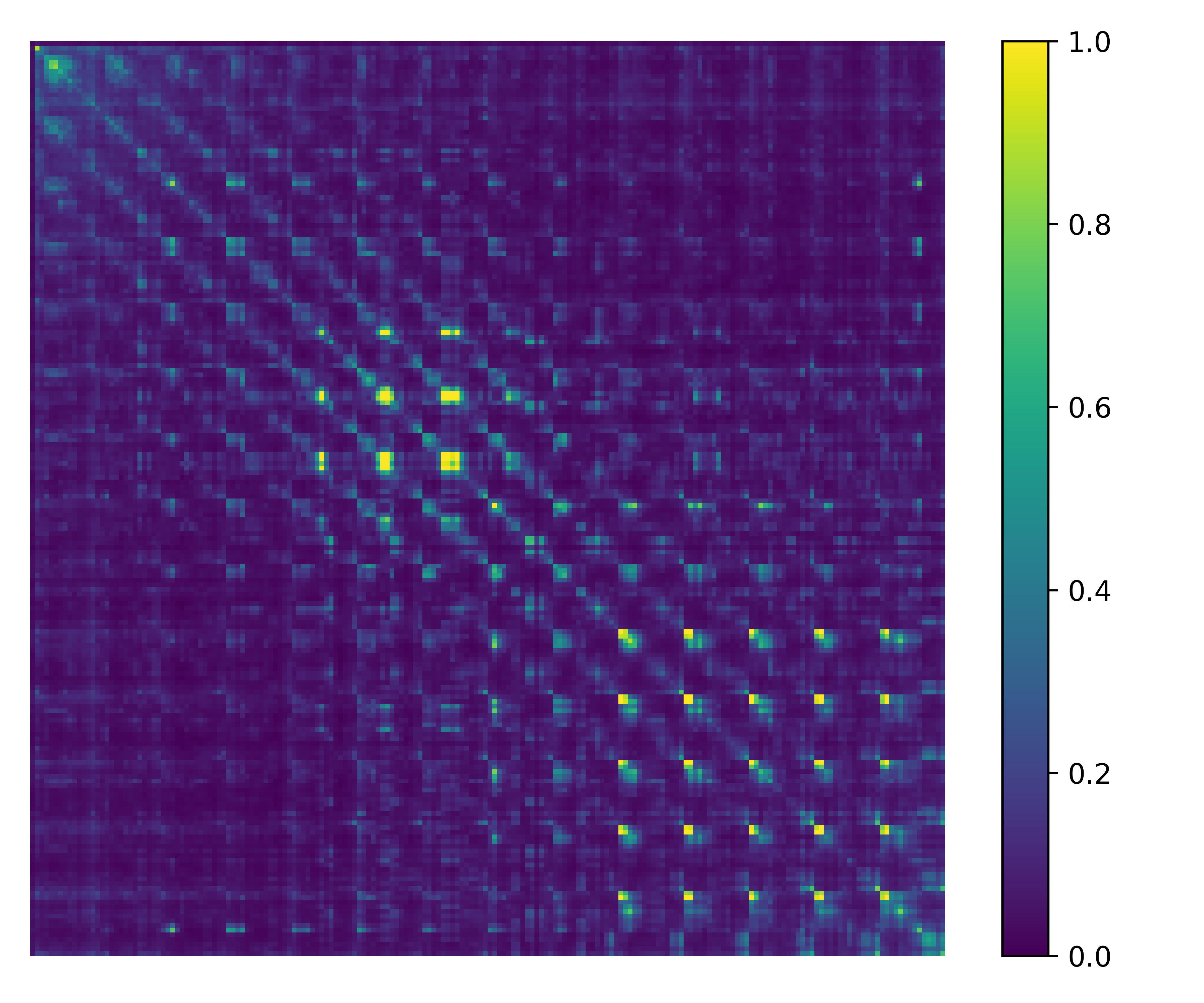}
            };
            \node[anchor=south west, text=white, font=\sffamily\tiny, inner sep=1pt] 
                  at ([xshift=3pt, yshift=5.0pt] containerimg.south west) 
                  {MambaVision};
        \end{tikzpicture}
        }%
    }

    \vspace{0.02\textwidth}

    \subcaptionbox{Activation–gradient distance correlation for Swin. \label{fig:motivation_swin_actgrad}}{%
        \parbox[b]{0.30\textwidth}{%
            \centering
            \input{figures/supplementary/FIM_subplots/swin_correlation}
        }%
    }
    \hspace{0.02\textwidth}
    \subcaptionbox{Activation–gradient distance correlation for ConvNeXt. \label{fig:motivation_convnext_actgrad}}{%
        \parbox[b]{0.30\textwidth}{%
            \centering
            \input{figures/supplementary/FIM_subplots/convnext_correlation}
        }%
    }
    \hspace{0.02\textwidth}
    \subcaptionbox{Activation–gradient distance correlation for Mamba. \label{fig:motivation_mamba_actgrad}}{%
        \parbox[b]{0.30\textwidth}{%
            \centering
            \input{figures/supplementary/FIM_subplots/mamba_correlation}
        }%
    }

    \caption{
        Comparison of token correlations (top) and activation–gradient correlations (bottom) across select linear layers of Swin, ConvNeXt, and Mamba models. The top displays cross-token coupling diagnostic, while the bottom shows input activation and output gradient distance correlations.
    }
    \label{supp:fig:extended_motivation_grid}
\end{figure*}

%% file: figures/supplementary/FIM_subplots/swin_correlation.tex
\pgfplotstableread[col sep=comma]{figures/additional_images/supplement/Swin/layers_2_blocks_16_mlp_fc2_act_grad_coup.csv}\dcorrdata

\begin{tikzpicture}
\begin{axis}[
    width=\linewidth,
    height=4.2cm,
    xlabel={Token index},
    ylabel={Distance correlation ($dCor$)},
    grid=both,
    grid style={gray!15},
    tick style={black},
    ymin=0.0,
    ymax=0.55,
    xmin=-5,
    xmax=205,
    line width=1pt,
    tick label style={font=\scriptsize},
    label style={font=\scriptsize},
    title style={font=\small},
    ylabel style={yshift=-0.5cm},
    xlabel style={yshift=+0.5cm},
    xticklabels={0, 0 , , , ,200},
]

\addplot[
    only marks,
    mark=x,
    mark size=.5pt,
    color=blue,
    fill=blue,
    forget plot,
] table [x=token, y=dcorr, filter discard warning=false, row predicate/.code={
    \pgfplotstablegetelem{#1}{significant}\of{\dcorrdata}
    \ifx\pgfplotsretval\empty
        \pgfplotstablerowfalse
    \else
        \edef\temp{\pgfplotsretval}
        \ifx\temp true\relax
            \pgfplotstablerowtrue
        \else
            \pgfplotstablerowfalse
        \fi
    \fi
}] {\dcorrdata};

\addplot[dashed, black, thick] coordinates {(-10,0.32) (220,0.32)};
\node[anchor=west, black] at (axis cs:80,0.4) {\scriptsize \textbf{Mean}};

\end{axis}
\end{tikzpicture}

%% file: figures/supplementary/FIM_subplots/convnext_correlation.tex
\pgfplotstableread[col sep=comma]{figures/additional_images/supplement/convnext/stages_2_blocks_22_mlp_fc1_act_grad_coup.csv}\dcorrdata

\begin{tikzpicture}
\begin{axis}[
    width=\linewidth,
    height=4.2cm,
    xlabel={Token index},
    ylabel={Distance correlation ($dCor$)},
    grid=both,
    grid style={gray!15},
    tick style={black},
    ymin=0.0,
    ymax=0.55,
    xmin=-5,
    xmax=205,
    line width=1pt,
    tick label style={font=\scriptsize},
    label style={font=\scriptsize},
    title style={font=\small},
    ylabel style={yshift=-0.5cm},
    xlabel style={yshift=+0.5cm},
    xticklabels={0, 0 , , , ,200},
]
\addplot[
    only marks,
    mark=x,
    mark size=.5pt,
    color=blue,
    fill=blue,
    forget plot,
] table [x=token, y=dcorr, filter discard warning=false, row predicate/.code={
    \pgfplotstablegetelem{#1}{significant}\of{\dcorrdata}
    \ifx\pgfplotsretval\empty
        \pgfplotstablerowfalse
    \else
        \edef\temp{\pgfplotsretval}
        \ifx\temp true\relax
            \pgfplotstablerowtrue
        \else
            \pgfplotstablerowfalse
        \fi
    \fi
}] {\dcorrdata};

\addplot[dashed, black, thick] coordinates {(-10,0.1) (220,0.1)};
\node[anchor=west, black] at (axis cs:100,0.17) {\scriptsize  \textbf{mean}};

\end{axis}
\end{tikzpicture}

%% file: figures/supplementary/FIM_subplots/mamba_correlation.tex
\pgfplotstableread[col sep=comma]{figures/additional_images/supplement/Mamba/levels_2_blocks_5_mixer_proj_drop_act_grad_coup.csv}\dcorrdata

\begin{tikzpicture}
\begin{axis}[
    width=\linewidth,
    height=4.2cm,
    xlabel={Token index},
    ylabel={Distance correlation ($dCor$)},
    grid=both,
    grid style={gray!15},
    tick style={black},
    ymin=0.0,
    ymax=0.55,
    xmin=-5,
    xmax=205,
    line width=1pt,
    tick label style={font=\scriptsize},
    label style={font=\scriptsize},
    title style={font=\small},
    ylabel style={yshift=-0.5cm},
    xlabel style={yshift=+0.5cm},
    xticklabels={0, 0 , , , ,200},
]
\addplot[
    only marks,
    mark=x,
    mark size=.5pt,
    color=blue,
    fill=blue,
    forget plot,
] table [x=token, y=dcorr, filter discard warning=false, row predicate/.code={
    \pgfplotstablegetelem{#1}{significant}\of{\dcorrdata}
    \ifx\pgfplotsretval\empty
        \pgfplotstablerowfalse
    \else
        \edef\temp{\pgfplotsretval}
        \ifx\temp true\relax
            \pgfplotstablerowtrue
        \else
            \pgfplotstablerowfalse
        \fi
    \fi
}] {\dcorrdata};

\addplot[dashed, black, thick] coordinates {(-10,0.275) (220,0.275)};
\node[anchor=west, black] at (axis cs:100,0.25) {\scriptsize \textbf{Mean}};

\end{axis}
\end{tikzpicture}

%% file: sec/supplementary/extended_emprical_insights.tex
\input{figures/supplementary/extended_fim_deit_layers}

This section extends the limited visualization of cross-token interactions (within the diagnostic metric explained in \cref{supp:subsec:fisher_comparison_motivation}) and  activation-gradient coupling shown in the main content (\cref{fig:motivation}).

\subsubsection{Generalization Across Architectures.}
We extend our analysis to three additional distinct architectures: Swin-B, ConvNeXt-B, and MambaVision-B. As illustrated in \cref{supp:fig:extended_motivation_grid}, all examined models exhibit cross-token pattern similar to DeiT. Furthermore, they all demonstrate couplings between input activations and output gradients. This finding empirically substantiates the limitation of standard approximations like KFAC~\cite{kfac-martens15}, which assume independence between activations and gradient terms.

\subsubsection{Generalization Across Layer Types.}
We further investigate whether this structural correlation is specific to MLP layers. \cref{supp:fig:extended_motivation_grid_deit_layertypes} presents the cross-token diagnostic for all the other layer types within the DeiT architecture, namely Query-Key-Value (QKV), Projection, and a FC1 layer. While the specific spectral signatures vary, the presence of structurally induced importance in the off-diagonals remains significant across all layer types.

\subsubsection{Summary.} From the empirical insights shown here it is clear, that both the phenomenon of cross-token interactions and couplings between input activations and output gradients are commonly found in vision models - not only \gls{vit}. While the strength of the phenomena varies between layers within models and also between architectures, both are prevalent in the examined architectures, highlighting the importance of investigating the impact of structural priors.

%% file: figures/supplementary/extended_fim_deit_layers.tex
\begin{figure*}[t]
    \centering
    \subcaptionbox{Token correlations in the FIM for QKV. \label{supp:fig:motivation_deit_qkv_corr}}{%
        \parbox[b]{0.30\textwidth}{%
            \centering
        \begin{tikzpicture}
            \node[anchor=south west, inner sep=0] (containerimg) at (0,0) {%
                \includegraphics[width=\linewidth]{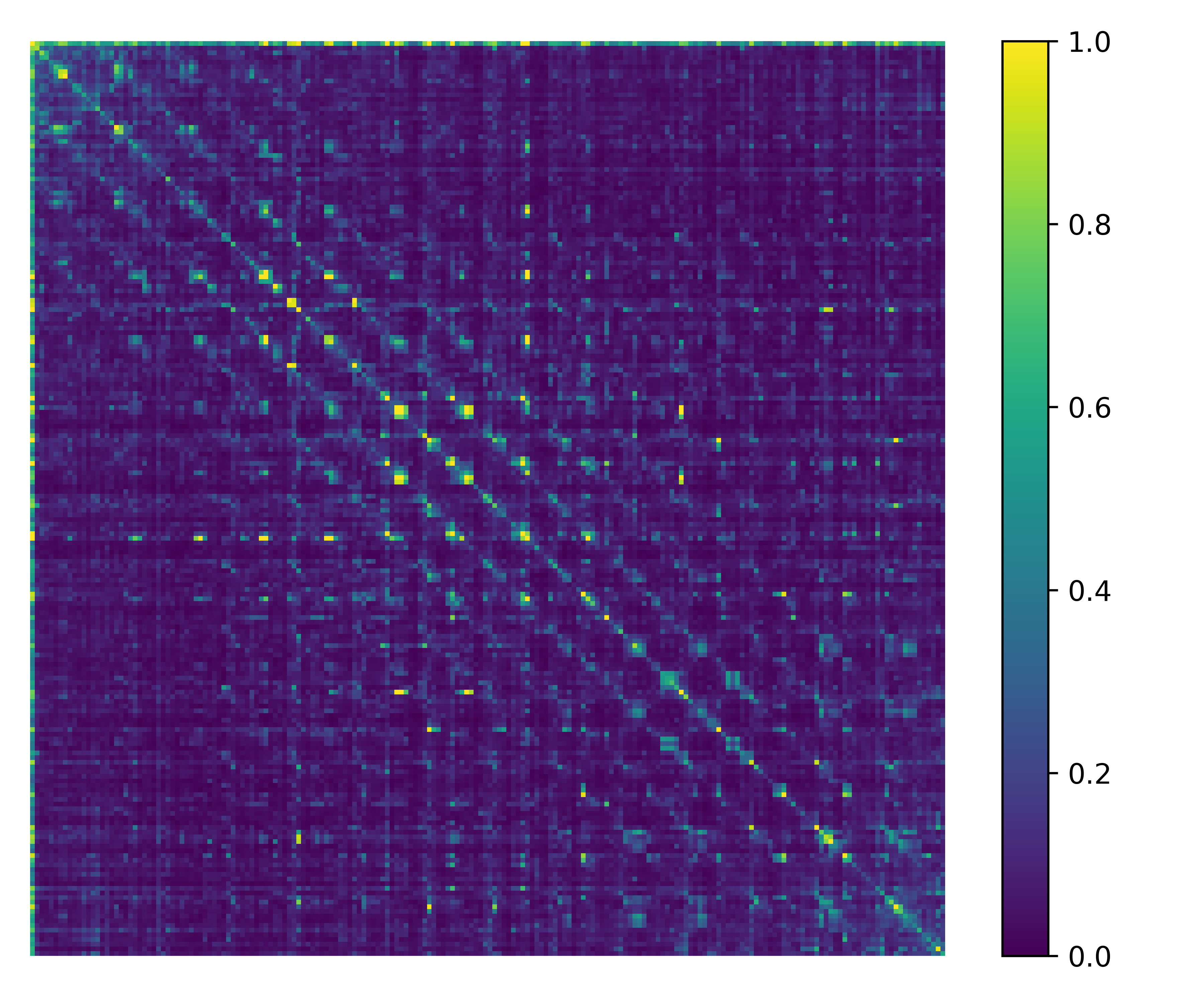}
            };
            \node[anchor=south west, text=white, font=\sffamily\tiny, inner sep=1pt] 
                  at ([xshift=3pt, yshift=5.0pt] containerimg.south west) 
                  {DeiT-B qkv};
        \end{tikzpicture}
        }%
    }
    \hspace{0.02\textwidth}
    \subcaptionbox{Token correlations in the FIM for Proj. \label{supp:fig:motivation_deit_proj_corr}}{%
        \parbox[b]{0.30\textwidth}{%
            \centering
        \begin{tikzpicture}
            \node[anchor=south west, inner sep=0] (containerimg) at (0,0) {%
                \includegraphics[width=\linewidth]{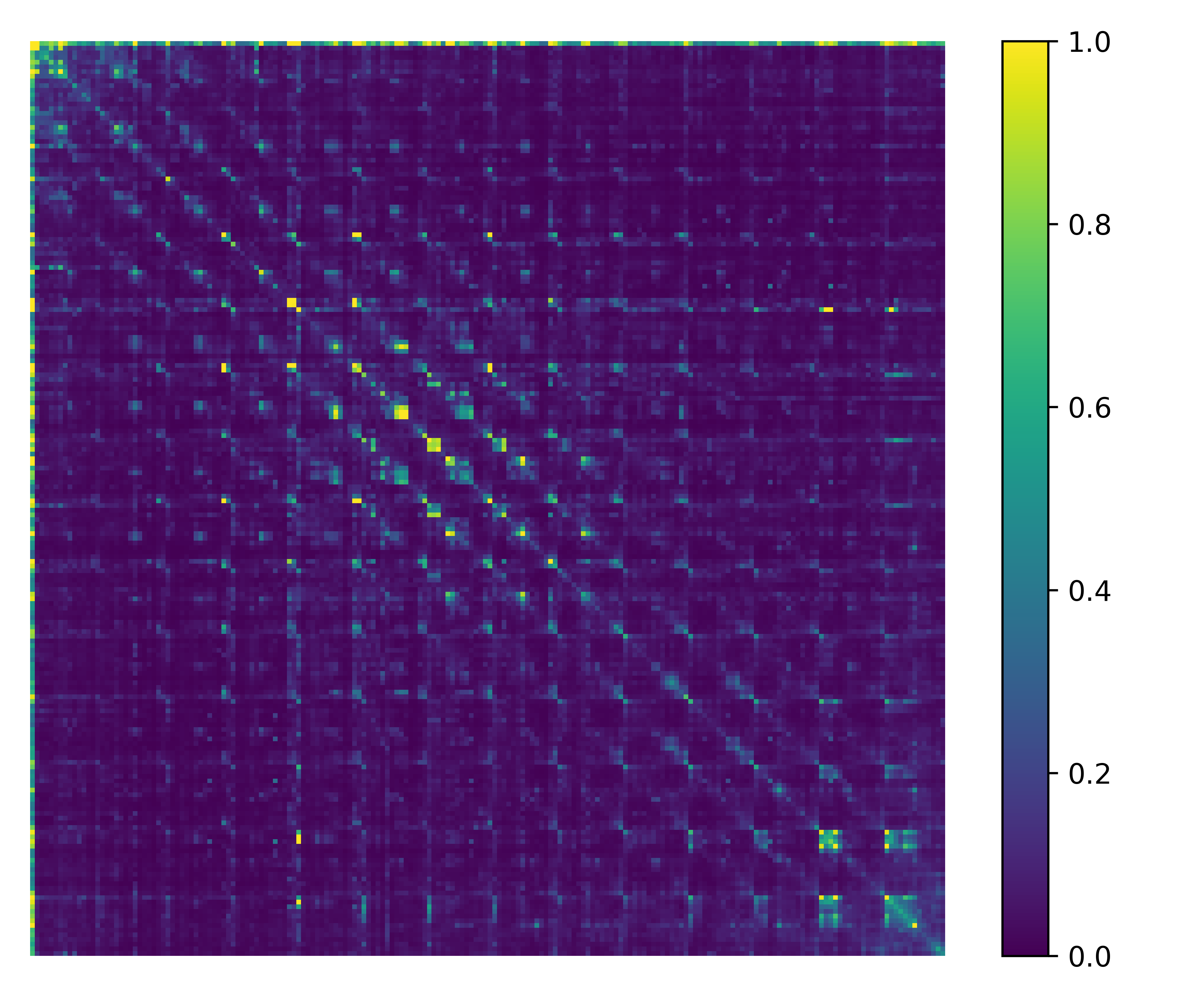}
            };
            \node[anchor=south west, text=white, font=\sffamily\tiny, inner sep=1pt] 
                  at ([xshift=3pt, yshift=5.0pt] containerimg.south west) 
                  {DeiT-B proj};
        \end{tikzpicture}
        }%
    }
    \hspace{0.02\textwidth}
    \subcaptionbox{Token correlations in the FIM for FC1. \label{supp
    :fig:motivation_deit_proj_corr}}{%
        \parbox[b]{0.30\textwidth}{%
            \centering
        \begin{tikzpicture}
            \node[anchor=south west, inner sep=0] (containerimg) at (0,0) {%
                \includegraphics[width=\linewidth]{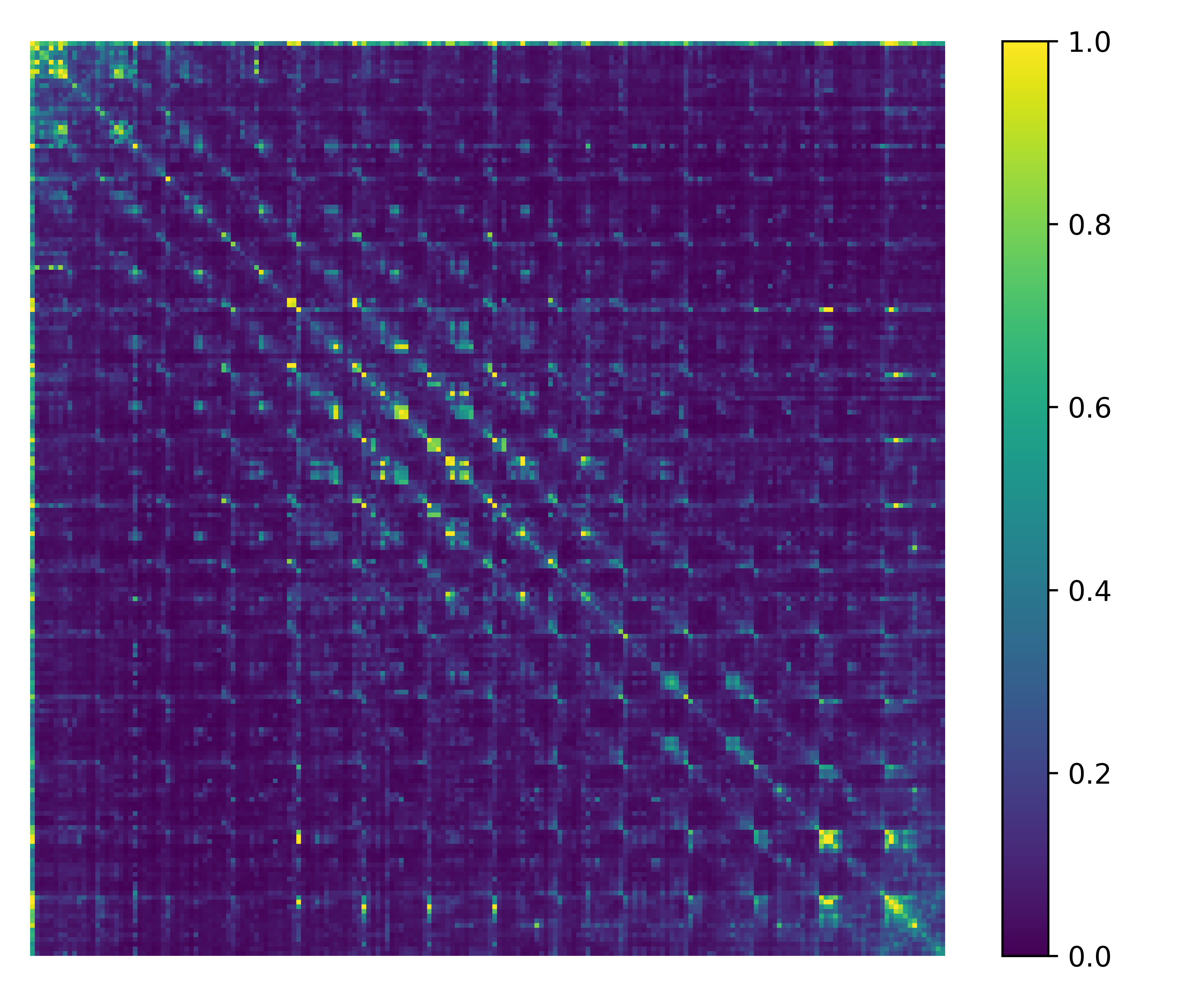}
            };
            \node[anchor=south west, text=white, font=\sffamily\tiny, inner sep=1pt] 
                  at ([xshift=3pt, yshift=5.0pt] containerimg.south west) 
                  {DeiT-B fc1};
        \end{tikzpicture}
        }%
    }

    \vspace{0.02\textwidth}
    \subcaptionbox{Activation–gradient correlations for QKV. \label{supp:fig:motivation_deit_proj_actgrad}}{%
        \parbox[b]{0.30\textwidth}{%
            \centering
            \input{figures/supplementary/FIM_subplots/deit_correlation_10qkv}
        }%
    }
    \hspace{0.02\textwidth}
    \subcaptionbox{Activation–gradient correlations for Proj. \label{fig:motivation_deit_qkv_actgrad}}{%
        \parbox[b]{0.30\textwidth}{%
            \centering
            \input{figures/supplementary/FIM_subplots/deit_correlation_10proj}
        }%
    }
    \hspace{0.02\textwidth}
    \subcaptionbox{Activation–gradient correlations for FC1. \label{fig:motivation_deit_fc1_actgrad}}{%
        \parbox[b]{0.30\textwidth}{%
            \centering
            \input{figures/supplementary/FIM_subplots/deit_correlation_10fc1}
        }%
    }

    \caption{
        Comparison of token correlations (top row) and activation–gradient correlations (bottom row)
        across QKV, Proj and FC1 layers. The top row shows cross-token Fisher information
        structures, while the bottom row the distance correlation between input activations
        and output gradients.
    }
    \label{supp:fig:extended_motivation_grid_deit_layertypes}
\end{figure*}

%% file: figures/supplementary/FIM_subplots/deit_correlation_10qkv.tex
\pgfplotstableread[col sep=comma]{figures/additional_images/supplement/deit/blocks_10_attn_qkv_act_grad_coup.csv}\dcorrdata

\begin{tikzpicture}
\begin{axis}[
    width=\linewidth,
    height=4.2cm,
    xlabel={Token index},
    ylabel={Distance correlation ($dCor$)},
    grid=both,
    grid style={gray!15},
    tick style={black},
    ymin=0.0,
    ymax=0.9,
    xmin=-5,
    xmax=205,
    line width=1pt,
    tick label style={font=\scriptsize},
    label style={font=\scriptsize},
    title style={font=\small},
    ylabel style={yshift=-0.5cm},
    xlabel style={yshift=+0.5cm},
    xticklabels={0, 0 , , , ,200},
]

\addplot[
    only marks,
    mark=x,
    mark size=.5pt,
    color=blue,
    fill=blue,
    forget plot,
] table [x=token, y=dcorr, filter discard warning=false, row predicate/.code={
    \pgfplotstablegetelem{#1}{significant}\of{\dcorrdata}
    \ifx\pgfplotsretval\empty
        \pgfplotstablerowfalse
    \else
        \edef\temp{\pgfplotsretval}
        \ifx\temp true\relax
            \pgfplotstablerowtrue
        \else
            \pgfplotstablerowfalse
        \fi
    \fi
}] {\dcorrdata};
\addplot[dashed, black, thick] coordinates {(-10,0.31) (220,0.31)};
\node[anchor=west, black] at (axis cs:100,0.4) {\scriptsize \textbf{Mean}};

\end{axis}
\end{tikzpicture}

%% file: figures/supplementary/FIM_subplots/deit_correlation_10proj.tex
\pgfplotstableread[col sep=comma]{figures/additional_images/supplement/deit/blocks_10_attn_proj_act_grad_coup.csv}\dcorrdata

\begin{tikzpicture}
\begin{axis}[
    width=\linewidth,
    height=4.2cm,
    xlabel={Token index},
    ylabel={Distance correlation ($dCor$)},
    grid=both,
    grid style={gray!15},
    tick style={black},
    ymin=0.0,
    ymax=0.9,
    xmin=-5,
    xmax=205,
    line width=1pt,
    tick label style={font=\scriptsize},
    label style={font=\scriptsize},
    title style={font=\small},
    ylabel style={yshift=-0.5cm},
    xlabel style={yshift=+0.5cm},
    xticklabels={0, 0 , , , ,200},
]
\addplot[
    only marks,
    mark=x,
    mark size=.5pt,
    color=blue,
    fill=blue,
    forget plot,
] table [x=token, y=dcorr, filter discard warning=false, row predicate/.code={
    \pgfplotstablegetelem{#1}{significant}\of{\dcorrdata}
    \ifx\pgfplotsretval\empty
        \pgfplotstablerowfalse
    \else
        \edef\temp{\pgfplotsretval}
        \ifx\temp true\relax
            \pgfplotstablerowtrue
        \else
            \pgfplotstablerowfalse
        \fi
    \fi
}] {\dcorrdata};
\addplot[dashed, black, thick] coordinates {(-10,0.28) (220,0.28)};
\node[anchor=west, black] at (axis cs:100,0.41) {\scriptsize \textbf{Mean}};

\end{axis}
\end{tikzpicture}

%% file: figures/supplementary/FIM_subplots/deit_correlation_10fc1.tex
\pgfplotstableread[col sep=comma]{figures/additional_images/supplement/deit/blocks_10_mlp_fc1_act_grad_coup.csv}\dcorrdata

\begin{tikzpicture}
\begin{axis}[
    width=\linewidth,
    height=4.2cm,
    xlabel={Token index},
    ylabel={Distance correlation ($dCor$)},
    grid=both,
    grid style={gray!15},
    tick style={black},
    ymin=0.0,
    ymax=0.9,
    xmin=-5,
    xmax=205,
    line width=1pt,
    tick label style={font=\scriptsize},
    label style={font=\scriptsize},
    title style={font=\small},
    ylabel style={yshift=-0.5cm},
    xlabel style={yshift=+0.5cm},
    xticklabels={0, 0 , , , ,200},
]
\addplot[
    only marks,
    mark=x,
    mark size=.5pt,
    color=blue,
    fill=blue,
    forget plot,
] table [x=token, y=dcorr, filter discard warning=false, row predicate/.code={
    \pgfplotstablegetelem{#1}{significant}\of{\dcorrdata}
    \ifx\pgfplotsretval\empty
        \pgfplotstablerowfalse
    \else
        \edef\temp{\pgfplotsretval}
        \ifx\temp true\relax
            \pgfplotstablerowtrue
        \else
            \pgfplotstablerowfalse
        \fi
    \fi
}] {\dcorrdata};
\addplot[dashed, black, thick] coordinates {(-10,0.41) (220,0.41)};
\node[anchor=west, black] at (axis cs:100,0.35) {\scriptsize \textbf{Mean}};

\end{axis}
\end{tikzpicture}

%% file: sec/supplementary/extended_throughput_eval.tex
While our main results demonstrate the efficacy of SVD for accelerating DeiT and Swin, we provide here a deeper analysis comparing our approach against alternative compression approaches.

\subsubsection{Benchmarking Semi-Structured Sparsity.}\label{supp:sec:extended_throughput_eval}
We start by benchmarking 2:4 semi-structured sparsity.
As shown in \cref{supp:fig:throughput_analysis_semi_and_memvit} (left), contrary to expectations, the throughput of the 2:4 sparse model is significantly lower than the dense baseline. On the A100, it achieves only 0.8$\times$ of the baseline throughput, and this degrades further to 0.5$\times$ on the H100.
While 2:4 sparsity is natively supported on Ampere (A100) and Hopper (H100) architectures~\cite{bai-2023}, its implementation appears to be highly optimized for specific operator types or workloads (e.g., large linear layers in NLP Transformers) and does not seem to be effective at layer sizes common in Vision Transformer architectures. The V100 GPU does not offer hardware support for 2:4 sparsity. CPU support is possible via third-party libraries such as DeepSparse~\cite{Neural_Magic_DeepSparse_2021}, but it is not a standard, natively accelerated feature, which is why we have not benchmarked it here.

\subsubsection{Comparing Standard SVD to MemViT.}
MemViT~\cite{azizi2024memoryViT} proposes enhancing decomposed layers with a low-rank residual module to reduce reconstruction error. We investigate the runtime implications of this addition in \cref{supp:fig:throughput_analysis_semi_and_memvit} (right).
We compare a DeiT-B model with standard SVD at 50\% compression to an equivalent model incorporating the MemViT low-rank extension (which adds 5\% of the original layer's parameters~\cite{azizi2024memoryViT}). The results show that the MemViT-enhanced model still improves throughput relative to the baseline. However, despite adding only a few parameters, the speedup is significantly lower. In particular, the MemViT variant achieves only $\approx$1.3$\times$ acceleration over the baseline, compared to $\approx$1.6$\times$ for standard SVD at a similar compression level. This demonstrates that standard SVD offers a superior acceleration-performance trade-off, motivating our choice of standard \gls{svd} over MemoryViT's \gls{svd} variant for this work.

\input{figures/throughput_analysis_supplementary}

%% file: figures/throughput_analysis_supplementary.tex
\begin{figure}[t]
    \centering
    \begin{subfigure}[t]{0.4\linewidth}
        \centering
        \begin{tikzpicture}
        \begin{axis}[
            ybar,
            bar width=7pt,
            width=\linewidth,
            height=5.0cm,
            ymin=0,
            ylabel={Throughput $\frac{img}{s}$},
            symbolic x coords={V100, A100, H100},
            xtick=data,
            tick label style={font=\scriptsize},
            legend style={
                draw=none,
                font=\scriptsize,
                at={(0.37,0.6)},
                anchor=south,
                legend columns=1},
            ymajorgrids=true,
            grid style=dashed,
            enlarge x limits=0.25,
            ylabel style={font=\scriptsize, yshift=-0.5cm},
            title={\scriptsize \textbf{Semi structured pruning}},
            title style={yshift=+0.2ex},
        ]

        \addplot+[ybar, fill=DarkBlue, draw=black, nodes near coords={}] %
            coordinates {(V100,1727) (A100,4574) (H100,10684) 
            }; 

        \addplot+[ybar, fill=DarkOrange, draw=black,
            point meta rel=per plot,
            visualization depends on={rawy/\thisrow{base} \as \ratio},
            nodes near coords={\text{\pgfmathprintnumber[fixed,precision=1]{\ratio}x}},
            every node near coord/.append style={font=\tiny, text=black, yshift=-1.2pt, xshift=2pt},
        ] table[row sep=\\, meta=base] {
            x      y     base  \\
            V100   0     1727   \\
            A100   3730  4574   \\
            H100   5157  10684  \\
        };

        \legend{Baseline, semi 2:4}
        \end{axis}
        \end{tikzpicture}
    \end{subfigure}
    \hspace{0.0\textwidth}
    \begin{subfigure}[t]{0.55\linewidth}
        \centering
        \begin{tikzpicture}
            \begin{axis}[
                ybar,
                bar width=7pt,
                width=\linewidth,
                height=5.0cm,
                ymin=0,
                symbolic x coords={CPU, V100, A100, H100},
                xtick=data,
                tick label style={font=\scriptsize},
                legend style={
                    draw=none,
                    font=\scriptsize,
                    at={(0.32,0.43)},
                    anchor=south,
                    legend columns=1},
                ymajorgrids=true,
                grid style=dashed,
                enlarge x limits=0.25,
                ylabel style={font=\small, yshift=-0.1cm},
                title={\scriptsize \textbf{SVD implemenations vs baseline}},
                title style={yshift=+0.2ex},
                scaled y ticks=base 10:-3,
                y tick label style={/pgf/number format/fixed},
            ]
            
            \addplot+[ybar, fill=DarkBlue, draw=black, nodes near coords={}]
                coordinates {(CPU,30.74) (V100,350) (A100,503) (H100,1269)};
            
            \addplot+[ybar, fill=DarkYellow, draw=black,
                point meta rel=per plot,
                visualization depends on={rawy/\thisrow{base} \as \ratio},
                nodes near coords={\text{\pgfmathprintnumber[fixed,precision=1]{\ratio}x}},
                every node near coord/.append style={
                    font=\tiny,
                    text=black,
                    yshift=-1pt,
                    anchor=south,
                    xshift=-3.5pt,
                },
            ] table[row sep=\\, meta=base] {
                x      y     base  \\
                CPU    38.23  30.74  \\
                V100   451    350   \\
                A100   709    503   \\
                H100   1658   1269  \\
            };
            
            \addplot+[ybar, fill=DarkGreen, draw=black,
                point meta rel=per plot,
                visualization depends on={rawy/\thisrow{base} \as \ratio},
                nodes near coords={\text{\pgfmathprintnumber[fixed,precision=1]{\ratio}x}},
                every node near coord/.append style={
                    font=\tiny,
                    text=black,
                    yshift=-1pt,
                    xshift=1pt,
                    anchor=south,
                },
            ] table[row sep=\\, meta=base] {
                x      y     base  \\
                CPU    47.72  30.74  \\
                V100   568    350   \\
                A100   851    503   \\
                H100   2005   1269  \\
            };
            
            \legend{Baseline, MemViT SVD, SVD (normal)}
            
            \end{axis}
        \end{tikzpicture}

    \end{subfigure}

    \caption{Extended throughput comparison on DeiT-B. The Semi structured pruning (left) cannot get any latency improvement over the baseline. The MemViT SVD approach is 20-30\% slower than the standard SVD approach as used by our work.}
    \label{supp:fig:throughput_analysis_semi_and_memvit}
\end{figure}

%% file: sec/supplementary/extended_downstream_results.tex
We provide expanded downstream evaluation results for semantic segmentation on the ADE20k dataset~\cite{ade20k}, utilizing UPerNet~\cite{xiao2018unified} models with both DeiT-B~\cite{deit} and Swin-B~\cite{swin} backbones. All experimental protocols regarding baselines and training schedules follow the setup detailed in \cref{supp:sec:extended_experiment_setup}.

The results are summarized in \cref{supp:tab:down_seg}.
For the \gls{deit}-B backbone, our method demonstrates exceptional zero-shot efficiency. Our model achieves 43.5 mIoU without any retraining, significantly outperforming the SVD-LLM baseline (38.1 mIoU). Notably, this zero-shot result also surpasses the fully retrained PELA method (42.4 mIoU), achieving superior accuracy while completely bypassing the computationally expensive multi-stage training pipeline required by the latter. With the addition of a short finetuning phase, our performance further improves to \textbf{44.0 mIoU}, recovering nearly the full performance of the uncompressed baseline (44.9 mIoU).

For the \textbf{\gls{swin}-B} backbone, we observe a similar dominance over the SVD-LLM baseline (46.2 vs. 37.7 mIoU). While the multi-stage PELA method initially leads with 48.2 mIoU, our approach effectively closes this gap with minimal compute. A brief finetuning phase (10k iterations, $<10\%$ of standard training) boosts our method (\textbf{\gls{ours}+ft}) to 48.1 mIoU, effectively matching PELA. Furthermore, when permitting a slightly more elaborate (yet still efficient) finetuning schedule, our method reaches 48.6 mIoU, approaching the uncompressed baseline of 49.4 mIoU.

In summary, our approach provides a high-performance, compute-efficient alternative to previous SVD-based methods. It delivers state-of-the-art compression trade-offs, outperforming expensive retraining pipelines in zero-shot settings or matching them with lower training cost.

\begin{table}[htbp]
    \centering
    \caption{Semantic segmentation mIoU on ADE20K dataset at 512x512 resolution.}
    \resizebox{0.65\linewidth}{!}{
    \begin{tabular}{c l c c c}
    \toprule
    \midrule
    Model & Method                             & Params(M) & GFLOPs & mIoU\\
    \midrule
    \multirow{5}{*}{DeiT-B}        
        & Base~\cite{deit}                     & 144.3 & 848.3 & 44.9 \\
        \cmidrule{2-5}
        & PELA~\cite{pela}                     & 101.8 & 761.1 & 42.4\\
        & SVD-LLM~\cite{wang2024svd-llm}       & 101.8 & 761.1 & 38.1 \\
        \rowcolor{m} &\textbf{\gls{ours}}      & 101.8 & 761.1 & \underline{43.5} \\
        \rowcolor{m} &\textbf{\gls{ours}+ft}   & 101.8 & 761.1 & \textbf{44.0} \\
    \midrule
    \multirow{5}{*}{Swin-B}
        & Base~\cite{swin}                     & 122.3 & 594.6 & 49.4\\
        \cmidrule{2-5}
        & PELA~\cite{pela}                     & 88.7  & 527.7 & \textbf{48.2}\\
        & SVD-LLM~\cite{wang2024svd-llm}       & 83.8  & 527.7 & 37.7\\
        \rowcolor{m} &\textbf{\gls{ours}}      & 86.8  & 527.7 & 46.2\\
        \rowcolor{m} &\textbf{\gls{ours}+ft}   & 86.8  & 527.7 & \underline{48.1}\\
    \bottomrule
    \bottomrule
    \end{tabular}}
    \label{supp:tab:down_seg}
\end{table}

%% file: sec/supplementary/extended_main_tables.tex
We extend the evaluation across models and \gls{svd} methods from the main content to additional compression rates. In particular, while the main content evaluated 50\% remaining linear layer FLOPs in \cref{tab:classification_50}, we extend it to 40\% and 60\% with results in \cref{tab:classification_40} and \cref{tab:classification_60}, respectively. Results resonate the same improvements seen in the main content, improving over prior works when using \gls{ours} with uniform compression, with larger gains when applying search on top of it.

\begin{table}[t]
    \centering
    \caption{Top-1 accuracy for various models on \gls{imagenet} at \textbf{uniform} compression (40\% linear layers FLOPs remaining) for related \gls{svd} methods. In addition to uniform compression, we report \gls{ours} with our \textbf{\gls{oursearch}} to isolate the effect of decomposition and search.}
    \resizebox{\textwidth}{!}{%
    \begin{tabular}{l  c c   c c   c c   c c }
    \toprule
    \toprule
    \multirow{2}{*}{Method} & \multicolumn{2}{c}{\textbf{DeiT-B}} & \multicolumn{2}{c}{\textbf{Swin-B}} & \multicolumn{2}{c}{\textbf{ConvNeXt-B}} & \multicolumn{2}{c}{\textbf{MambaVis.-B}} \\
    & GFLOPs & Top-1$\uparrow$ & GFLOPs & Top-1$\uparrow$ & GFLOPs & Top-1$\uparrow$ & GFLOPs & Top-1$\uparrow$ \\
    \midrule
    Baseline & 33.9 & 83.3 & 30.3 & 85.1 & 30.8 & 85.8 & 29.9 & 83.9 \\
    \cmidrule{2-9}
    PELA~\cite{pela} & 13.7 & 46.4 & 12.5 & 0.6 & 12.9 & 2.0 & 19.8 & 55.7 \\
    FW-SVD~\cite{hsu2022fwsvd} & 13.7 & 61.8 & 12.5 & 3.4 & 12.9 & 13.4 & 19.8 & 51.7 \\
    ASVD~\cite{yuan2024asvd} & 13.7 & 60.3 & 12.5 & 2.1 & 12.9 & 14.7 & 19.8 & 53.6 \\
    SVD-LLM~\cite{wang2024svd-llm} & 13.7 & 66.8 & 12.5 & 33.3 & 12.9 & 53.8 & 19.8 & 59.1 \\
    GFWSVD~\cite{chekalina2025generalizedfisherweightedsvdscalable} & 13.7 & 57.6 & 12.5 & 0.6 & 12.9 & 4.7 & 19.8 & 58.5 \\
    FLAR-SVD~\cite{2025_flar-svd} & 13.7 & 66.9 & 12.5 & 33.3 & 12.9 & 53.8 & 19.8 & 59.1 \\
    \rowcolor{m} \textbf{\gls{ours}} & 13.7 & \underline{71.6} & 12.5 & \underline{41.2} & 12.9 & \underline{62.1} & 19.8 & 60.7 \\
    \rowcolor{m} \textbf{\gls{ours}} $+$ \textbf{\gls{oursearch}} & 13.7 & \textbf{78.4} & 12.5 & \textbf{57.5} & 13.0 & \textbf{68.8} & 19.8 & \textbf{74.6} \\
    \bottomrule
    \bottomrule
    \end{tabular}
    }
    \label{tab:classification_40}
\end{table}

\begin{table}[t]
    \centering
    \caption{Top-1 accuracy for various models on \gls{imagenet} at \textbf{uniform} compression (60\% linear layers FLOPs remaining) for related \gls{svd} methods. In addition to uniform compression, we report \gls{ours} with our \textbf{\gls{oursearch}} to isolate the effect of decomposition and search.}
    \resizebox{\textwidth}{!}{%
    \begin{tabular}{l  c c   c c   c c   c c }
    \toprule
    \toprule
    \multirow{2}{*}{Method} & \multicolumn{2}{c}{\textbf{DeiT-B}} & \multicolumn{2}{c}{\textbf{Swin-B}} & \multicolumn{2}{c}{\textbf{ConvNeXt-B}} & \multicolumn{2}{c}{\textbf{MambaVis.-B}} \\
    & GFLOPs & Top-1$\uparrow$ & GFLOPs & Top-1$\uparrow$ & GFLOPs & Top-1$\uparrow$ & GFLOPs & Top-1$\uparrow$ \\
    \midrule
    Baseline & 33.9 & 83.3 & 30.3 & 85.1 & 30.8 & 85.8 & 29.9 & 83.9 \\
    \cmidrule{2-9}
    PELA~\cite{pela} & 20.4 & 75.3 & 18.4 & 46.0 & 18.9 & 56.0 & 23.2 & 76.3 \\
    FW-SVD~\cite{hsu2022fwsvd} & 20.4 & 78.0 & 18.4 & 61.0 & 18.9 & 67.7 & 23.2 & 75.9 \\
    ASVD~\cite{yuan2024asvd} & 20.4 & 77.6 & 18.4 & 59.6 & 18.9 & 72.2 & 23.2 & 76.7 \\
    SVD-LLM~\cite{wang2024svd-llm} & 20.4 & 78.8 & 18.4 & 73.9 & 18.9 & 79.7 & 23.2 & 77.4 \\
    GFWSVD~\cite{chekalina2025generalizedfisherweightedsvdscalable} & 20.4 & 77.3 & 18.4 & 52.2 & 18.9 & 49.0 & 23.2 & 77.9 \\
    FLAR-SVD~\cite{2025_flar-svd} & 20.4 & 78.8 & 18.4 & 73.9 & 18.9 & 79.7 & 23.2 & 77.4 \\
    \rowcolor{m} \textbf{\gls{ours}} & 20.4 & \underline{80.2} & 18.4 & \underline{76.6} & 18.9 & \underline{81.5} & 23.2 & \underline{78.1} \\
    \rowcolor{m} \textbf{\gls{ours}} $+$ \textbf{\gls{oursearch}} & 20.4 & \textbf{82.4} & 18.4 & \textbf{81.1} & 18.9 & \textbf{83.1} & 23.2 & \textbf{82.1} \\
    \bottomrule
    \bottomrule
    \end{tabular}
    }
    \label{tab:classification_60}
\end{table}

%% file: sec/supplementary/calibration_data_impact.tex
To evaluate the impact of the calibration set size, we test a range of differnt sets ranging from just 1024 images up to 8192 samples, which is half of what we have used for the evaluation in the main material. From the results the general accuracy increase observed with increasing calibration data samples shows, that all \gls{svd} methods benefit from additional data. MambaVision is the only exception with both GFWSVD and FLAR-SVD being almost immune to the loss of data. Moreover, the shrinkage used by FLAR-SVD gives it a clear edge over its non-shrinking SVD-LLM counterpart. However, overall, our approach remains competitive, even when fewer data is available. Moreover, adding search, recovers most of the performance across calibration set sizes, showing off the strong capabilities of the combination of \gls{ours} and \gls{oursearch}.
\begin{table}[]
    \centering
    \caption{Top-1 accuracy of different models for the best performing approaches under \textbf{uniform} compression with different calibration sample counts. In addition to uniform compression, \gls{ours} is reported in combination with \gls{oursearch}, denoted as \textbf{+s}}
    \label{supp:tab:calibration_impact}
    \begin{tabular}{c|c|c|c|c|c}
\toprule
\midrule
\multirow{2}{*}{Calib Cnt} & \multirow{2}{*}{Method} & \multicolumn{1}{c|}{DeiT-B} & \multicolumn{1}{c|}{Swin-B} & \multicolumn{1}{c|}{CoNxt-B} & \multicolumn{1}{c}{MamVis-B} \\
 &  & Top-1 $\uparrow$ & Top-1 $\uparrow$ & Top-1 $\uparrow$ & Top-1 $\uparrow$ \\
\midrule
\multirow{5}{*}{1024} & SVD-LLM~\cite{wang2024svd-llm} & 73.8 & 69.9 & 78.2 & 72.3 \\
& FLAR-SVD~\cite{2025_flar-svd} & \underline{74.4} & 71.0 & 78.7 & 74.6 \\
 & GFWSVD~\cite{chekalina2025generalizedfisherweightedsvdscalable} & 73.5 & 54.9 & 64.6 & \underline{77.8} \\
 \rowcolor{m} &\textbf{\ours} & 73.7 & \underline{71.6} & \underline{79.3} & 71.8 \\
 \rowcolor{m} &\textbf{\ours} \textbf{+s} & \textbf{81.0} & \textbf{80.6} & \textbf{82.8} & \textbf{81.7} \\
\midrule
\multirow{5}{*}{4096} & SVD-LLM~\cite{wang2024svd-llm} & 75.5 & 72.8 & 79.3 & 75.9 \\
 & FLAR-SVD~\cite{2025_flar-svd} & \underline{75.8} & 73.4 & 79.4 & 77.0 \\
 & GFWSVD~\cite{chekalina2025generalizedfisherweightedsvdscalable} & 73.9 & 52.7 & 60.2 & \underline{78.1} \\
 \rowcolor{m} &\textbf{\ours} & 75.7 & \underline{75.1} & \underline{80.7} & 76.0 \\
 \rowcolor{m} &\textbf{\ours} \textbf{+s} & \textbf{81.1} & \textbf{80.9} & \textbf{83.0} & \textbf{81.9} \\
\midrule
\multirow{5}{*}{8192} & SVD-LLM~\cite{wang2024svd-llm} & 76.0 & 73.5 & 79.7 & 76.8 \\
& FLAR-SVD~\cite{2025_flar-svd} & 76.3 & 73.9 & 79.8 & 77.5 \\
 & GFWSVD~\cite{chekalina2025generalizedfisherweightedsvdscalable} & 73.1 & 50.5 & 65.4 & \underline{78.1} \\
 \rowcolor{m} &\textbf{\ours} & \underline{76.4} & \underline{76.1} & \underline{81.0} & 77.2 \\
 \rowcolor{m} &\textbf{\ours} \textbf{+s} & \textbf{81.0} & \textbf{80.9} & \textbf{83.1} & \textbf{82.1} \\
\midrule
\bottomrule
\end{tabular}
\end{table}

%% file: tables/individual_contributions.tex
\begin{figure*}[t]
    \centering
    \begin{minipage}[t]{0.23\linewidth}
        \vspace{0pt}
        \centering
        \begin{tikzpicture}
            \node[anchor=south west, inner sep=0] (containerimg) at (0,0) {%
                \includegraphics[width=\linewidth]{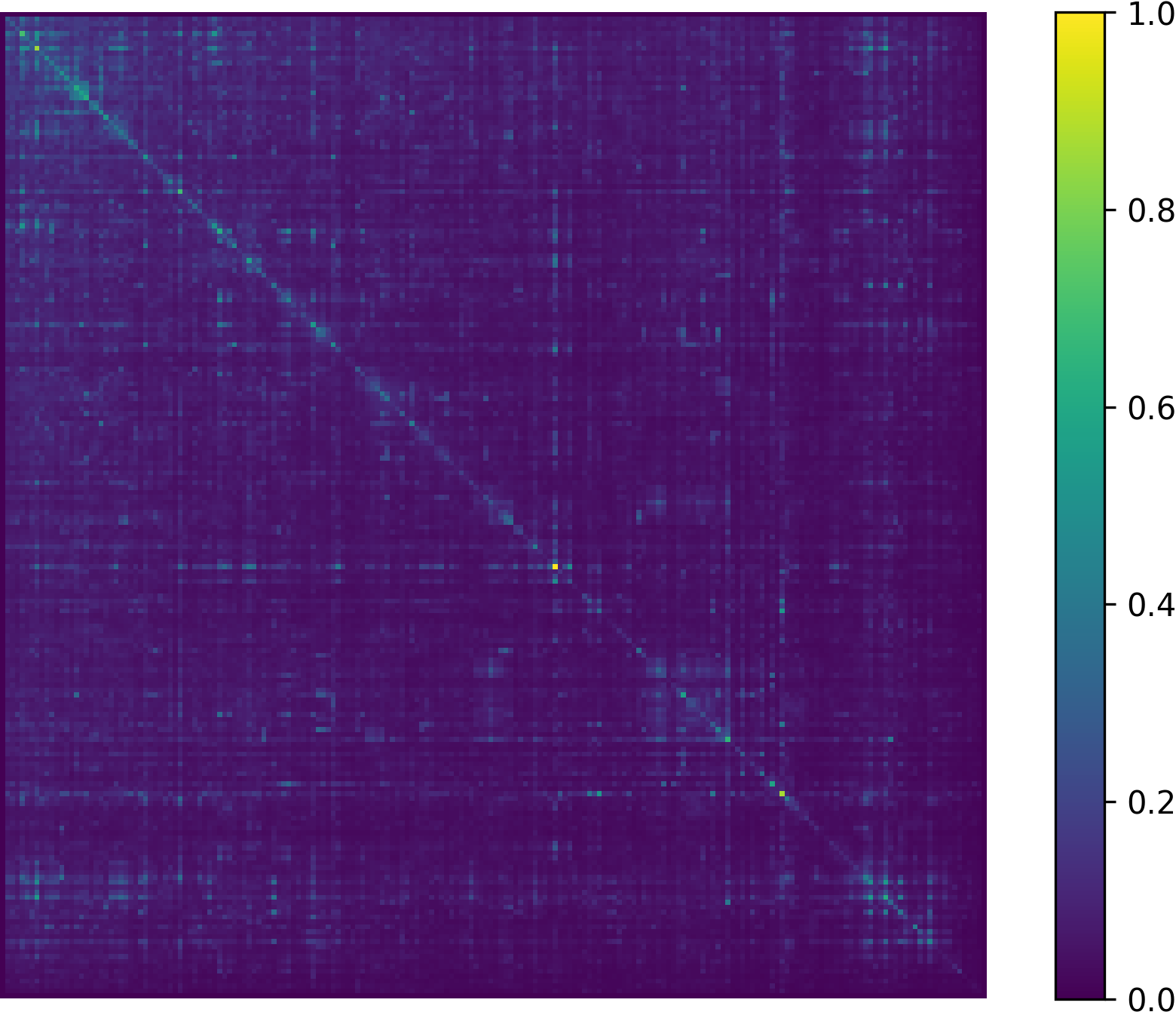}%
            };
            \node[anchor=south west, text=white, font=\sffamily\tiny, inner sep=1pt] 
                  at ([xshift=2pt, yshift=2pt] containerimg.south west) 
                  {Qwen3-1.7B};
        \end{tikzpicture}

        \vspace{0.2cm}
        \caption{Qualitative cross-token moments (Qwen3-1.7B).}
        \label{fig:qwen_moments}
    \end{minipage}%
    \hfill
    \begin{minipage}[t]{0.31\linewidth}
        \vspace{0pt}
        \centering
        \makeatletter\def\@captype{table}\makeatother 
        \caption{Performance comparison of \gls{ours} against other approaches on Qwen3-1.7B benchmark datasets.}
        \label{tab:qwen_performance}
        
        \vspace{0.15cm}
        \resizebox{0.78\linewidth}{!}{%
        \begin{tabular}{lcc}
            \toprule
            \toprule
            \textbf{Method} & ppl.$\downarrow$~ & Acc.$\uparrow$ \\ \midrule
            SVD-LLM & 57.7 & 35.4\\
            GFWSVD~~ & $>$1k & 31.3\\
            Shampoo$^2$ & 44.6 & 34.4 \\
            \rowcolor{m} \textbf{\gls{ours}}  & 40.8 & 36.5 \\ 
            \bottomrule
            \bottomrule
        \end{tabular}%
        }
    \end{minipage}%
    \hfill
    \begin{minipage}[t]{0.42\linewidth}
        \vspace{0pt}
        \centering
        \makeatletter\def\@captype{table}\makeatother 
        \caption{Individual contributions of our decomposition and search on Top-1 accuracy and total time (tt) in minutes for DeiT-B and Swin-B compressed to 50\%.}
        \label{tab:individual_contribution}
        
        \vspace{0.15cm}
        \resizebox{\linewidth}{!}{%
        \begin{tabular}{l c c c c}
        \toprule
        \midrule
                                       & \multicolumn{2}{c}{DeiT-B}  & \multicolumn{2}{c}{Swin-B}  \\\cmidrule{2-5}
                                       & Top-1$\uparrow$ & tt[m]$\downarrow$   & Top-1$\uparrow$ & tt[m]$\downarrow$  \\
        \midrule
        Full model                 & 81.8 & -    & 85.1      & - \\
        \cmidrule{1-5}
        \textbf{our} \ours~(uniform) & 77.5  & 8.3  & 76.6   & 10.9 \\
        + our ILP search           & 81.3 & 21.0 & 81.0 & 39.2 \\
        + fewer sensitivity tests  & 81.2 & 14.8 & 80.6   & 27.3 \\
        + sensitivity interpolation& 81.3 & 14.8 & 81.1 & 27.3 \\
        \bottomrule
        \bottomrule
        \end{tabular}%
        }
    \end{minipage}
\end{figure*}

%% file: sec/supplementary/llm_stuff.tex
Having validated \gls{ours} for vision despite reduced \gls{fim} modeling accuracy, we extend our analysis to Large Language Models (LLMs).\
We first examine the qualitative cross-token moments of Qwen3-1.7B in \cref{fig:qwen_moments}. Compared to the structured off-diagonal patterns observed in vision models (see \cref{fig:motivation:a} and Supplement~\cref{supp:fig:extended_motivation_grid}), the Qwen3 patterns appear less spatially regular. This is consistent with the modality difference: language tokens do not have fixed geometric positions across samples, so cross-token dependencies are more sequence- and context-dependent than the spatial correlations induced by image patches.

Despite these differences, \gls{ours} still outperforms \acrshort{svd-llm}, \acrshort{gfwsvd}, and Shampoo$^2$ at an identical compression ratio on Wikitext perplexity and zero-shot language benchmarks. As shown in \cref{tab:qwen_performance}, \gls{ours} achieves the best performance across both metrics, with a perplexity score of 40.8 and a zero-shot accuracy of 36.5\%. Whereas similar to the observations in \cref{sec:fisher_aprrox_main_content}, \acrshort{gfwsvd} is numerically unstable, leading to degraded performance (perplexity $>1$k), Shampoo$^2$ avoids this collapse (44.6 ppl, 34.4\% Acc). This suggests that the proposed combination of token-local aggregation and within-token activation-gradient coupling may remain beneficial beyond vision models, even when cross-token structure is less spatially regular. Supplement Section~\ref{supp:subsec:LLM_eval_details} provides additional evaluation details.

%% file: sec/supplementary/preliminaries.tex
\paragraph{KFAC-expand (token-local, decoupled).}
KFAC~\cite{kfac-martens15} in the weight-sharing setting (often termed \emph{KFAC-expand}~\cite{runa-kfacreduce})
combines two approximations:
(i) it discards cross-token mixed moments (drop $t\neq s$ in \cref{eq:ws_fisher_sum}), and
(ii) it assumes independence between activations and gradients at the same token.
This yields
\begin{equation}
\mathbf{F}
\;\approx\;
\mathbb{E}\!\left[\sum_{t=1}^{T}(\mathbf{x}_t\mathbf{x}_t^\top)\otimes(\mathbf{g}_t\mathbf{g}_t^\top)\right]
\;\approx\;
\mathbb{E}[\mathbf{x}\mathbf{x}^\top]\otimes \mathbb{E}[\mathbf{g}\mathbf{g}^\top],
\label{eq:kfac_expand}
\end{equation}
up to an immaterial scalar factor in $T$.
KFAC-expand is efficient and enforces a strong token-local prior, but it removes within-token activation-gradient coupling.

\paragraph{KFAC-reduce (cross-token in factors, still decoupled).}
Eschenhagen et al.~\cite{runa-kfacreduce} derive a second variant, \emph{KFAC-reduce}, motivated by approximating
a \emph{sum of Kronecker products} by a \emph{Kronecker product of sums} over the shared dimension.
In our notation, this corresponds to the replacement
\begin{equation}
\sum_{t=1}^{T} \mathbf{x}_t\otimes \mathbf{g}_t
\;\approx\;
\Big(\sum_{t=1}^{T}\mathbf{x}_t\Big)\otimes \Big(\sum_{t=1}^{T}\mathbf{g}_t\Big),
\label{eq:kfac_reduce_vec_approx}
\end{equation}
which induces the Kronecker factors
\begin{equation}
\mathbf{F}\;\approx\;
\mathbb{E}\!\Big[\Big(\sum\nolimits_t \mathbf{x}_t\Big)\Big(\sum\nolimits_s \mathbf{x}_s\Big)^\top\Big]
\;\otimes\;
\mathbb{E}\!\Big[\Big(\sum\nolimits_t \mathbf{g}_t\Big)\Big(\sum\nolimits_s \mathbf{g}_s\Big)^\top\Big],
\label{eq:kfac_reduce}
\end{equation}
again up to scaling by $T$.
Unlike \cref{eq:kfac_expand}, the activation and gradient factors in \cref{eq:kfac_reduce} include \emph{cross-token mixed moments}
(e.g., $\mathbb{E}[\mathbf{x}_t\mathbf{x}_s^\top]$ for $t\neq s$), but the approximation remains \emph{decoupled}:
it still does not model activation-gradient dependence.

\paragraph{Optimal rank-1 Kronecker approximation (rearrangement).}
A complementary approach chooses $(\mathbf{A},\mathbf{B})$ as the best rank-1 Kronecker fit in Frobenius norm,
\begin{equation}
\min_{\mathbf{A},\mathbf{B}}\;\big\|\mathbf{F}-\mathbf{A}\otimes\mathbf{B}\big\|_F^2,
\label{eq:oka}
\end{equation}
whose solution is given by the leading singular vectors of a rearranged matrix $\mathcal{R}(\mathbf{F})$~\cite{VanLoan1993}.
Since $\mathbf{F}$ cannot be formed explicitly, iterative methods compute this component using only matrix-vector products.
Lanczos-based estimators (e.g., used by GFWSVD~\cite{chekalina2025generalizedfisherweightedsvdscalable}) approximate the leading component more accurately,
while Shampoo$^2$~\cite{shampoo-squared} corresponds to a single power-iteration step from identity initialization.
By construction, these methods target the globally dominant energy of $\mathbf{F}$ in \cref{eq:ws_fisher_sum},
which can implicitly reflect cross-token mixed moments and activation-gradient couplings.

%% file: sec/supplementary/kroneker_fwsvd_closed_form_solution.tex
\label{supp:sec:fw-svd-objective}

Here we provide the full theorem and proof justifying that the optimal, loss-aware SVD compression problem reduces to finding the Kronecker factors $\mathbf{A}$ and $\mathbf{B}$ of the expected Fisher Information Matrix. This is a modified version of the theorem presented in related work~\cite{chekalina2025generalizedfisherweightedsvdscalable}.

\begin{theorem}
\label{theorem:kfacSVDmain_supp}
Let $\mathbf{W}$ be the weight matrix of a linear layer. Assume:
\begin{enumerate}
    \item The Hessian $\mathbf{H}$ is block-diagonal with respect to other layers.
    \item The expected Hessian equals the Fisher, $\mathbb{E}[\mathbf{H}_\mathbf{W}] =\mathbf{F}_\mathbf{W}$. This holds for MLE losses under standard regularity conditions.
    \item The expected Fisher factorizes as $\mathbb{E}[\mathbf{F}_\mathbf{W}] = \mathbf{A} \otimes \mathbf{B}$, with $\mathbf{A}, \mathbf{B}$ positive definite and $\mathbf{A} = \mathbf{L}_\mathbf{A}^\top \mathbf{L}_\mathbf{A}$, $\mathbf{B} = \mathbf{L}_\mathbf{B}^\top \mathbf{L}_\mathbf{B}$.
\end{enumerate}
The optimal rank-$k$ approximation $\widetilde{\mathbf{W}}_k$ minimizing the second-order loss $\mathbb{E}\!\left[\Delta \mathcal{L}\right]$ is
\[
\boxed{\widetilde{\mathbf{W}}_k = \mathbf{L}_\mathbf{B}^{-1} \, \mathbf{U}_k \, \boldsymbol{\Sigma}_k \, \mathbf{V}_k^\top \, \mathbf{L}_\mathbf{A}^{-\top}}
\]
where $\mathbf{U}_k \boldsymbol{\Sigma}_k \mathbf{V}_k^\top$ is the rank-$k$ truncated SVD of the "whitened" matrix $\mathbf{W}' = \mathbf{L}_\mathbf{B} \, \mathbf{W} \, \mathbf{L}_\mathbf{A}^\top$.
\end{theorem}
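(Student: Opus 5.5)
The plan is to reduce the loss-aware objective to an unweighted Frobenius low-rank problem by a change of variables. Then the Eckart--Young--Mirsky theorem applies directly.

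\textbf{Step 1: isolate the quadratic form.} I would expand $\mathbb{E}[\Delta\mathcal{L}]$ to second order around the trained weights, with $\Delta\mathbf{W}=\mathbf{W}-\widetilde{\mathbf{W}}_k$. The first-order term $\mathbb{E}[\nabla_{\mathbf{W}}\mathcal{L}]$ is taken to vanish, since the pretrained model is a stationary point of the expected loss. This is an implicit premise that I would state explicitly. Assumption~1 lets us ignore curvature shared with other layers, because only this layer's block is perturbed. Assumption~2 replaces $\mathbb{E}[\mathbf{H}_\mathbf{W}]$ by $\mathbf{F}_\mathbf{W}$. Assumption~3 then gives
\[
\mathbb{E}[\Delta\mathcal{L}]\approx\tfrac12\,\mathrm{vec}(\Delta\mathbf{W})^\top(\mathbf{A}\otimes\mathbf{B})\,\mathrm{vec}(\Delta\mathbf{W}).
\]

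\textbf{Step 2: Kronecker identity.} With column-major $\mathrm{vec}$ and the paper's convention ($\mathbf{A}\in\mathbb{R}^{n\times n}$ on the input side, $\mathbf{B}\in\mathbb{R}^{m\times m}$ on the output side), we have $(\mathbf{A}\otimes\mathbf{B})\mathrm{vec}(\mathbf{X})=\mathrm{vec}(\mathbf{B}\mathbf{X}\mathbf{A}^\top)=\mathrm{vec}(\mathbf{B}\mathbf{X}\mathbf{A})$, using the symmetry of $\mathbf{A}$. The quadratic form therefore equals $\mathrm{tr}(\mathbf{X}^\top\mathbf{B}\mathbf{X}\mathbf{A})$. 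Substituting $\mathbf{B}=\mathbf{L}_\mathbf{B}^\top\mathbf{L}_\mathbf{B}$ and $\mathbf{A}=\mathbf{L}_\mathbf{A}^\top\mathbf{L}_\mathbf{A}$ and applying cyclicity of the trace gives
\[
\mathrm{tr}(\mathbf{X}^\top\mathbf{B}\mathbf{X}\mathbf{A})=\|\mathbf{L}_\mathbf{B}\mathbf{X}\mathbf{L}_\mathbf{A}^\top\|_F^2 .
\]
Hence the objective is $\tfrac12\|\mathbf{W}'-\mathbf{L}_\mathbf{B}\widetilde{\mathbf{W}}\mathbf{L}_\mathbf{A}^\top\|_F^2$, where $\mathbf{W}'=\mathbf{L}_\mathbf{B}\mathbf{W}\mathbf{L}_\mathbf{A}^\top$.

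\textbf{Step 3: change of variables.} Define $\mathbf{Z}=\mathbf{L}_\mathbf{B}\widetilde{\mathbf{W}}\mathbf{L}_\mathbf{A}^\top$. Positive definiteness makes $\mathbf{L}_\mathbf{A}$ and $\mathbf{L}_\mathbf{B}$ invertible. The map $\widetilde{\mathbf{W}}\mapsto\mathbf{Z}$ is therefore a linear bijection of $\mathbb{R}^{m\times n}$, and $\mathrm{rank}(\mathbf{Z})=\mathrm{rank}(\widetilde{\mathbf{W}})$. The constraint set $\{\mathrm{rank}\le k\}$ is thus mapped onto itself. The problem becomes $\min_{\mathrm{rank}(\mathbf{Z})\le k}\|\mathbf{W}'-\mathbf{Z}\|_F$. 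By Eckart--Young--Mirsky this is minimized by $\mathbf{Z}^\star=\mathbf{U}_k\boldsymbol{\Sigma}_k\mathbf{V}_k^\top$. Inverting the change of variables yields $\widetilde{\mathbf{W}}_k=\mathbf{L}_\mathbf{B}^{-1}\mathbf{U}_k\boldsymbol{\Sigma}_k\mathbf{V}_k^\top\mathbf{L}_\mathbf{A}^{-\top}$, as boxed. Choosing symmetric roots $\mathbf{L}=\mathbf{A}^{1/2}$ and $\mathbf{L}=\mathbf{B}^{1/2}$ recovers the form stated in \cref{sec:fw-svd}.

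\textbf{Main obstacle.} The computations themselves are routine, so the difficulty lies in bookkeeping and premises. First, the $\mathrm{vec}$/Kronecker ordering must be handled consistently, so that $\mathbf{B}$ multiplies from the left and $\mathbf{A}$ from the right, and the transposes on $\mathbf{L}_\mathbf{A}$ land correctly. Second, the justification for dropping the first-order term must be stated clearly. I would handle the first point by fixing the convention once via the identity $\mathrm{vec}(\mathbf{P}\mathbf{X}\mathbf{Q})=(\mathbf{Q}^\top\otimes\mathbf{P})\mathrm{vec}(\mathbf{X})$. For the second, I would either state stationarity as part of the second-order model or note that the objective is by definition the quadratic term.
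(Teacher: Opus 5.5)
Your proposal is correct and follows essentially the same route as the paper: drop the first-order term by stationarity, rewrite $\tfrac12\mathrm{vec}(\Delta\mathbf{W})^\top(\mathbf{A}\otimes\mathbf{B})\,\mathrm{vec}(\Delta\mathbf{W})$ as $\tfrac12\|\mathbf{L}_\mathbf{B}\,\Delta\mathbf{W}\,\mathbf{L}_\mathbf{A}^\top\|_F^2$ via the vec/trace identities, apply Eckart--Young--Mirsky to $\mathbf{W}'$, and un-whiten. Your remark that $\widetilde{\mathbf{W}}\mapsto\mathbf{L}_\mathbf{B}\widetilde{\mathbf{W}}\mathbf{L}_\mathbf{A}^\top$ is a rank-preserving linear bijection is a small but useful tightening: the paper applies Eckart--Young--Mirsky to the whitened problem without stating this step.
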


\begin{proof}
Let $\Delta \mathbf{W} = \mathbf{W} - \widetilde{\mathbf{W}}_k$. A second-order Taylor expansion of the loss is
\begin{multline*}
\mathbb{E}[\Delta \mathcal{L}] \approx
\langle \mathbb{E}[\nabla \mathcal{L}], \operatorname{vec}(\Delta \mathbf{W}) \rangle
+ \tfrac{1}{2}\,\operatorname{vec}(\Delta \mathbf{W})^\top \mathbb{E}[\mathbf{H}_\mathbf{W}]\,\operatorname{vec}(\Delta \mathbf{W}).
\end{multline*}
At a stationary point, $\mathbb{E}[\nabla \mathcal{L}] = 0$. Using Assumptions (1--3), the objective simplifies to minimizing
\[
\mathbb{E}[\Delta \mathcal{L}]
\approx \tfrac{1}{2}\,\operatorname{vec}(\Delta \mathbf{W})^\top (\mathbf{A} \otimes \mathbf{B})\,\operatorname{vec}(\Delta \mathbf{W}).
\]
Using standard matrix identities (e.g., $\operatorname{vec}(\mathbf{X}\mathbf{Y}\mathbf{Z}) = (\mathbf{Z}^\top \otimes \mathbf{X})\operatorname{vec}(\mathbf{Y})$ and \\ $\operatorname{tr}(\mathbf{X}^\top \mathbf{Y}) = \operatorname{vec}(\mathbf{X})^\top \operatorname{vec}(\mathbf{Y})$), this is equivalent to minimizing the Frobenius norm of the whitened error:
\begin{align*}
& \operatorname{vec}(\Delta \mathbf{W})^\top (\mathbf{A} \otimes \mathbf{B}) \operatorname{vec}(\Delta \mathbf{W}) \\
&\quad = \operatorname{tr}(\Delta \mathbf{W}^\top \mathbf{B}\,\Delta \mathbf{W}\,\mathbf{A}) \\
&\quad = \operatorname{tr}(\Delta \mathbf{W}^\top \mathbf{L}_\mathbf{B}^\top \mathbf{L}_\mathbf{B}\,\Delta \mathbf{W}\mathbf{L}_\mathbf{A}^\top \mathbf{L}_\mathbf{A}) \\
&\quad = \operatorname{tr}\big( (\mathbf{L}_\mathbf{B}\,\Delta \mathbf{W}\,\mathbf{L}_\mathbf{A}^\top)^\top(\mathbf{L}_\mathbf{B}\,\Delta \mathbf{W}\,\mathbf{L}_\mathbf{A}^\top) \big) \\
&\quad = \left\lVert\, \mathbf{L}_\mathbf{B}\,(\mathbf{W} - \widetilde{\mathbf{W}}_k)\, \mathbf{L}_\mathbf{A}^\top \right\rVert_F^{2}.
\end{align*}
By the Eckart–Young–Mirsky theorem, this objective is minimized by taking the rank-$k$ SVD of the whitened matrix $\mathbf{W}'$:
\[
    \mathbf{W}' = \mathbf{L}_\mathbf{B}\,\mathbf{W}\,\mathbf{L}_\mathbf{A}^\top.
\]
Let its truncated SVD be $\mathbf{W}'_k = \mathbf{U}_k\,\boldsymbol{\Sigma}_k\,\mathbf{V}_k^\top$. The optimal $\widetilde{\mathbf{W}}_k$ is found by un-whitening:
\[
\begin{aligned}
\mathbf{L}_\mathbf{B}\,\widetilde{\mathbf{W}}_k\,\mathbf{L}_\mathbf{A}^\top &= \mathbf{U}_k\,\boldsymbol{\Sigma}_k\,\mathbf{V}_k^\top \\
\implies \quad \widetilde{\mathbf{W}}_k &= \mathbf{L}_\mathbf{B}^{-1}\,\mathbf{U}_k\,\boldsymbol{\Sigma}_k\,\mathbf{V}_k^\top\,\mathbf{L}_\mathbf{A}^{-\top}.
\end{aligned}
\]
(Note: $\mathbf{L}^{-\top} = (\mathbf{L}^\top)^{-1}$).
\end{proof}

%% file: sec/supplementary/method_algorithms.tex
\subsubsection{Algorithmic Description of \ours{}}\label{supp:sec:method_pseudo_code}
We provide the detailed pseudocode for \ours{} in two parts. \cref{supp:algo:our_framework} outlines the main compression procedure, and \cref{supp:algo:our_whitening} details the computation of the FIM Kronecker factors $\mathbf{A}$ and $\mathbf{B}$ used for whitening the weight matrix before decomposition. For clarity, the factors $\mathbf{A}$ and $\mathbf{B}$ from \cref{sec:methodology} correspond to the \textit{column scaling} and \textit{row scaling} terms, respectively, in the pseudocode.

\begin{algorithm}[ht]
\caption{Pseudocode of \ours{}}
\begin{algorithmic}[1] 
\State \textbf{Input:} $M$: Original model, $C$: Calibration Data, $k$: Target rank
\State \textbf{Input:} $\alpha_R=0.7, \alpha_C=0.1$: Row and column regularization strengths
\State \textbf{Output:} $M'$: Compressed model
\Procedure{\ours{}}{$M, C, k$}
    \State $\text{Set}_R, \text{Set}_C \gets \textproc{ComputeFIMKronFacts}(M, C)$ \Comment{See \cref{supp:algo:our_whitening}}
    \For{$W$ \textbf{in} $M$ \text{to compress}}
        \State $R \gets \text{Set}_R(W)$, $C \gets \text{Set}_C(W)$ \# Get layer-specific FIM factors

        \State $R_{\text{reg}} \gets (1 - \alpha_R)R + \alpha_R \cdot \text{mean}(\text{diag}(R)) \cdot I$ \# Regularize w/ shrinkage
        \State $C_{\text{reg}} \gets (1 - \alpha_C)C + \alpha_C \cdot \text{mean}(\text{diag}(C)) \cdot I$

        \State \# Compute whitening matrices
        \State $S_R \gets \operatorname{Cholesky}(R_{\text{reg}})$
        \State $S_C \gets \operatorname{Cholesky}(C_{\text{reg}})$

        \State \# Apply two-sided whitening
        \State $W_S \gets S_R W S_C$

        \State $U, \Sigma, V \gets \operatorname{SVD}(W_S)$
        \State $\Sigma_k \gets \operatorname{Trunc.}(\Sigma, k)$

        \State \# Compute two matrices \& unwhiten
        \State $W'_u \gets S_R^{-1} U (\Sigma_k)^{1/2}$ 
        \State $W'_v \gets (\Sigma_k)^{1/2} V^T S_C^{-1}$ 

        \State \# Replace $W$ with $W'_u$ and $W'_v$
        \State $M'(W) \gets W'_u, W'_v$ 
    \EndFor
    \State \Return{$M'$}
\EndProcedure
\end{algorithmic}
\label{supp:algo:our_framework}
\end{algorithm}
\begin{algorithm}[ht]
\caption{Pseudocode of \gls{ours}'s Kronecker Factor Computation}
\begin{algorithmic}[1] 
\State \textbf{Input:} $M$: Original model, $C$: Calibration Data
\State \textbf{Output:} $\text{Set}_R, \text{Set}_C$: Sets of row and col. FIM factors
\Procedure{ComputeFIMKronFacts}{$M,C$}
    \State Initialize $R_W \gets 0$, $C_W \gets 0$ for all $W$
    
    \For{$(data, target)$ \textbf{in} $C$}
        \State \# Generate $X_W, G_W$ for each layer
        \State $out \gets M(data)$
        \State $loss \gets \text{CrossEntropyLoss}(out, target)$
        \State $loss.backward()$
        
        \For{$W$ \textbf{in} $M$ \text{to compress}}
            \State $X \gets \text{InputActivation}(W)$
            \State $G \gets \text{OutputGradient}(W)$
            \State $G' \gets \text{grad\_clip}(G)$
            \State $\Delta R_{\text{batch}} \gets G'^T \text{diag}(XX^T) G'$ \# Row factor computation
            \State $\Delta C_{\text{batch}} \gets X^T \text{diag}(G'G'^T) X$ \# Column factor computation
            \State $\Delta C_{\text{batch}} \gets \text{Normalize}(\Delta C_{\text{batch}})$ \# Normalize

            \State \# Accumulate
            \State $R_W \gets R_W + \text{mean}_{\text{batch}}(\Delta R_{\text{batch}})$ 
            \State $C_W \gets C_W + \text{mean}_{\text{batch}}(\Delta C_{\text{batch}})$
        \EndFor
    \EndFor
    
    \State $\text{Set}_R \gets \{R_W\}$, $\text{Set}_C \gets \{C_W\}$
    \State \Return $\text{Set}_R, \text{Set}_C$
\EndProcedure
\end{algorithmic}
\label{supp:algo:our_whitening}
\end{algorithm}

%% file: sec/supplementary/search_extensions.tex
\label{supp:sec:interpolation_details}
Our search algorithm relies on discrete cost-error profiles $\mathcal{D}_i = \{(c_j, e_j)\}$ generated by sampling sensitivity at fixed intervals (e.g., $0.1, 0.2 \dots 0.9$). To enable the Integer Linear Programming (ILP) solver to select fine-grained ranks, we interpolate these profiles into a denser set $\widetilde{\mathcal{D}}_i$.

\input{figures/supp_sensitivity_interpolation}

\paragraph{Interpolation Challenges.} As shown in \cref{supp:fig:sensitivity_analysis:qkv_sensitivity} (red curve), cubic splines can create artificial local minima that mislead the optimizer. Differently, a piecewise-linear interpolation of $\mathcal{D}_i$, preserves monotonicity and avoids such artifacts. However, since the cost-error curve is approximately convex, linear interpolation ignores the curvature and may therefore wrongly approximate between points.

Combining both perspectives, we introduce a locally consistent interpolation (Algorithm~\ref{alg:interpolate}) to capture local curvature without compromising stability. Concretely, for sliding windows of three points, we fit local cubic splines and retain a spline only if it is monotonous. 
In this way, the resulting profile is smooth and well reflects the non-linear error frontier (Blue Curve, \cref{supp:fig:sensitivity_analysis:qkv_sensitivity}). Nevertheless, we observe only marginal performance gains over the simple linear interpolation, making the latter a simple, practical option for future use.

\paragraph{Interpolation Algorithm} To achieve locally consistent and artifact-free interpolation, we adopt the sliding-window strategy detailed in Algorithm~\ref{alg:interpolate}. Specifically, for a profile $\mathcal{D}_i$, we perform:

\begin{enumerate}
    \item \textbf{Window Generation:} We generate overlapping windows of three adjacent measurement points across the profile.
    \item \textbf{Local Fitting:} For each triplet, we fit a local cubic spline $f_{\text{cubic}}$ and define a corresponding reference piecewise-linear function $f_{\text{linear}}$.
    \item \textbf{Deviation Check:} We evaluate both functions at fine-grained query points $c_q \in [c_0, c_2]$ to compute the absolute deviation $|f_{\text{cubic}}(c_q) - f_{\text{linear}}(c_q)|$.
    \item \textbf{Selection:} For any query point $c_q$ covered by multiple windows, we select the cubic prediction that yields the smallest deviation from the linear baseline.
\end{enumerate}

\begin{algorithm}[t]
\caption{Locally Consistent Interpolation}
\label{alg:interpolate}
\begin{algorithmic}[1]
\Require Discrete profile $\mathcal{D}_i = \{(c_{ij}, e_{ij})\}_{j=1}^{n_i}$ (sorted), query resolution $\delta$
\Ensure Interpolated profile measured profile $\widetilde{\mathcal{D}}_i$

\State Initialize $E_{\text{best}} \gets \emptyset$
\State Initialize $M_{\text{error}} \gets \emptyset$ 

\State Let $n' = |\mathcal{D}'_i|$

\For{$j = 1$ to $n' - 2$}
    \State \# Define window
    \State $(c_0,e_0),(c_1,e_1),(c_2,e_2) \gets \mathcal{D}'_i[j{:}j{+}2]$
    \State Fit cubic spline $f_{\text{cubic}}$ to the window
    \State Define piecewise-linear $f_{\text{linear}}$ over $[c_0, c_2]$
    
    \State $C_{\text{query}} \gets \{c_q \mid c_q \in [c_0, c_2] \text{ with step } \delta\}$
    
    \State \# Calculate the total L1 error for this window's fit
    \State $d_{\text{wid}} \gets \sum_{c_q \in C_{\text{query}}} |f_{\text{cubic}}(c_q) - f_{\text{linear}}(c_q)|$
    
    \State \# Update all points in window if its fit is best
    \For{each $c_q \in C_{\text{query}}$}
        \If{$c_q \notin \text{keys}(M_{\text{error}})$ \textbf{or} $d_{\text{wid}} < M_{\text{error}}[c_q]$}
            \State \# Store/update prediction and best error
            \State $M_{\text{error}}[c_q] \gets d_{\text{wid}}$
            \State $E_{\text{best}}[c_q] \gets f_{\text{cubic}}(c_q)$
        \EndIf
    \EndFor
\EndFor

\State \# Convert to original format, readd no compression
\State $\widetilde{\mathcal{D}}_i \gets \{(c, e) \mid c \in \text{keys}(E_{\text{best}}), e = E_{\text{best}}[c]\}$
\State $\widetilde{\mathcal{D}}_i \gets \widetilde{\mathcal{D}}_i \cup \{(1.0, 0)\}$
\State \Return $\widetilde{\mathcal{D}}_i$
\end{algorithmic}
\end{algorithm}

%% file: figures/supp_sensitivity_interpolation.tex
\begin{figure*}[htb!]
\centering

\begin{subfigure}[b]{0.47\textwidth}
    \centering
    \begin{tikzpicture}[trim axis left, trim axis right]
    \begin{axis}[
        width=0.95\textwidth,
        height=5cm,
        title={attn.proj},
        title style={font=\small},
        label style={font=\small},
        legend style={font=\footnotesize}, 
        xlabel={Compression Ratio},
        ylabel={Relative Error},
        grid=both,
        grid style={dashed, line width=.2pt, opacity=0.5},
        legend pos=north east,
        ymax=1e-2,
        tick label style={font=\footnotesize},
        ylabel style={font=\footnotesize, yshift=-0.5cm},
    ]
    \addplot [
        dashed, mark=square*, mark size=1.0pt, color=blue, mark options={solid}
    ] coordinates {
        (0.10416666666666667, 0.02015491129754486) (0.125, 0.016145987159688957) (0.14583333333333334, 0.012619665669060648) (0.16666666666666666, 0.009575946825659938) (0.1875, 0.007014830629486824) (0.20833333333333334, 0.005236813357745025) (0.22916666666666666, 0.004153112019442537) (0.25, 0.0032105402860906906) (0.2708333333333333, 0.00240909815768949) (0.2916666666666667, 0.001748785634238932) (0.3125, 0.0013666199765793863) (0.3333333333333333, 0.0011299336781828771) (0.3541666666666667, 0.0009256331269271969) (0.375, 0.000753718322812347) (0.3958333333333333, 0.0006141892658383266) (0.4166666666666667, 0.0005566175645476) (0.4375, 0.0005159404827281833) (0.4583333333333333, 0.0004766668927105558) (0.4791666666666667, 0.0004387967944947175) (0.5, 0.00040233018808066845) (0.5208333333333334, 0.0003672670734684086) (0.5416666666666666, 0.00033360745065793814) (0.5625, 0.00030135131964925677) (0.5833333333333334, 0.00027049868044236455) (0.6041666666666666, 0.00024273915698813653) (0.625, 0.0002229441747658711) (0.6458333333333334, 0.00020422360705651946) (0.6666666666666666, 0.00018657745386008173) (0.6875, 0.0001700057151765577) (0.7083333333333334, 0.00015454461319879754) (0.7291666666666666, 0.00014018344591527746) (0.75, 0.0001268555315618869) (0.7708333333333334, 0.00011456087013862592) (0.7916666666666666, 0.00010329946164549462) (0.8125, 9.307130608249284e-05) (0.8333333333333334, 8.387640344962064e-05) (0.8541666666666666, 7.571475374687809e-05) (0.875, 6.858635697426509e-05) (0.8958333333333334, 6.249121313178168e-05)
    };
    \addplot [
        dashed, mark=triangle*, mark size=1.5pt, color=red, mark options={solid}
    ] coordinates {
        (0.1, 0.021014608442783356) (0.11, 0.018721834586361362) (0.12000000000000001, 0.016622967623779548) (0.13, 0.014708816972380249) (0.14, 0.012970192049505804) (0.15000000000000002, 0.011397902272498558) (0.16, 0.009982757058700855) (0.17, 0.008715565825455024) (0.18, 0.007587137990103413) (0.19, 0.006588282969988359) (0.2, 0.005709810182452202) (0.21000000000000002, 0.004942529044837283) (0.22, 0.004277248974485945) (0.23, 0.0037047793887405246) (0.24000000000000002, 0.0032159297049433625) (0.25, 0.002801509340436801) (0.26, 0.0024523277125631775) (0.27, 0.002159194238664835) (0.28, 0.0019129183360841124) (0.29000000000000004, 0.0017043094221633505) (0.30000000000000004, 0.0015241769142448893) (0.31, 0.0013646532734346687) (0.32, 0.0012231631358930212) (0.33, 0.0010984541815438806) (0.33999999999999997, 0.0009892740903111794) (0.35, 0.0008943705421188495) (0.36, 0.0008124912168908241) (0.37, 0.0007423837945510359) (0.38, 0.0006827959550234179) (0.39, 0.0006324753782319021) (0.4, 0.0005901697441004217) (0.41000000000000003, 0.0005546966379086371) (0.42000000000000004, 0.0005251532663591207) (0.43000000000000005, 0.0005007067415101727) (0.44000000000000006, 0.0004805241754200932) (0.45000000000000007, 0.0004637726801471824) (0.45999999999999996, 0.0004496193677497409) (0.47, 0.0004372313502860684) (0.48, 0.0004257757398144654) (0.49, 0.00041441964839323197) (0.5, 0.00040233018808066845) (0.51, 0.0003888634330694912) (0.52, 0.00037413130609008165) (0.53, 0.00035843469200723713) (0.54, 0.0003420744756857553) (0.55, 0.0003253515419904335) (0.56, 0.00030856677578606936) (0.5700000000000001, 0.00029202106193746037) (0.58, 0.00027601528530940415) (0.59, 0.0002608503307666979) (0.6, 0.00024682708317413926) (0.61, 0.0002341714226276963) (0.62, 0.00022280921014801908) (0.63, 0.00021259130198692823) (0.64, 0.00020336855439624435) (0.65, 0.00019499182362778809) (0.66, 0.00018731196593338) (0.67, 0.0001801798375648407) (0.6799999999999999, 0.00017344629477399092) (0.69, 0.0001669621938126511) (0.7, 0.00016057839093264192) (0.71, 0.0001541753814628464) (0.72, 0.00014775221704039723) (0.73, 0.0001413375883794894) (0.74, 0.00013496018619431802) (0.75, 0.00012864870119907814) (0.76, 0.0001224318241079648) (0.77, 0.00011633824563517311) (0.78, 0.00011039665649489811) (0.79, 0.0001046357474013348) (0.8, 9.908420906867832e-05) (0.8099999999999999, 9.377073221112374e-05) (0.82, 8.872400754286603e-05) (0.83, 8.39727257781003e-05) (0.84, 7.95455776310216e-05) (0.85, 7.5471253815825e-05) (0.86, 7.177844504670557e-05) (0.87, 6.849584203785833e-05) (0.88, 6.565213550347838e-05) (0.89, 6.327601615776077e-05) (0.9, 6.139617471490055e-05)
    };
    \addplot [
        only marks, mark=*, mark size=2pt, color=black, line width=0.8pt, mark options={solid}
    ] coordinates {
        (0.1, 0.021014608442783356) (0.2, 0.005709810182452202) (0.3, 0.0015241769142448902) (0.4, 0.0005901697441004217) (0.5, 0.00040233018808066845) (0.6, 0.00024682708317413926) (0.7, 0.00016057839093264192) (0.8, 9.908420906867832e-05) (0.9, 6.139617471490055e-05)
    };
    
    \legend{
        {Window interp. (ours)},
        {Global cubic interp.},
        {Measured errors}
    }
    
    \end{axis}
    \end{tikzpicture}
    \caption{Sensitivity for \texttt{blocks.1.attn.proj}}
    \label{supp:fig:sensitivity_analysis:proj_sensitivity}
\end{subfigure}
\begin{subfigure}[b]{0.47\textwidth}
    \centering
    \begin{tikzpicture}[trim axis left, trim axis right]
    \begin{axis}[
        width=0.95\textwidth,
        height=5cm,
        title={attn.qkv},
        title style={font=\small},
        label style={font=\small},
        draw=none, 
        xlabel={Compression Ratio},
        yticklabels={,,},
        grid=both,
        grid style={dashed, line width=.2pt, opacity=0.5},
        tick label style={font=\footnotesize},
        ymax=2e-3,
    ]

    \addplot [
        dashed, mark=square*, mark size=1.0pt, color=blue, mark options={solid}
    ] coordinates {
        (0.1111111111111111, 0.0063541161484246) (0.125, 0.005084632106445497) (0.1388888888888889, 0.003943987809269642) (0.1527777777777778, 0.0029321832568970367) (0.16666666666666666, 0.002049218449327681) (0.18055555555555555, 0.0012950933865615716) (0.19444444444444445, 0.0006698080685987118) (0.20833333333333334, 0.0004193195149784717) (0.2222222222222222, 0.0003622654789697989) (0.2361111111111111, 0.0003098304708252985) (0.25, 0.0002620144905449706) (0.2638888888888889, 0.00021881753812881522) (0.2777777777777778, 0.00018023961357683236) (0.2916666666666667, 0.00014628071688902198) (0.3055555555555556, 0.00012147455330098936) (0.3194444444444444, 0.00010575445067691667) (0.3333333333333333, 9.13197691261303e-05) (0.3472222222222222, 7.817050864863027e-05) (0.3611111111111111, 6.630666924441664e-05) (0.375, 5.572825091348933e-05) (0.3888888888888889, 4.643525365584841e-05) (0.4027777777777778, 3.916681406452218e-05) (0.4166666666666667, 3.5506796267428494e-05) (0.4305555555555556, 3.2076290125009286e-05) (0.4444444444444444, 2.8875295637264558e-05) (0.4583333333333333, 2.5903812804194283e-05) (0.4722222222222222, 2.3161841625798487e-05) (0.4861111111111111, 2.064938210207716e-05) (0.5, 1.8366434233030304e-05) (0.5138888888888888, 1.6714428960259616e-05) (0.5277777777777778, 1.5162441490033806e-05) (0.5416666666666666, 1.3710471822352907e-05) (0.5555555555555556, 1.2358519957216888e-05) (0.5694444444444444, 1.1106585894625779e-05) (0.5833333333333334, 9.954669634579554e-06) (0.5972222222222222, 8.90277117707823e-06) (0.6111111111111112, 8.067914958375668e-06) (0.625, 7.3211912479109725e-06) (0.6388888888888888, 6.628772669579503e-06) (0.6527777777777778, 5.9906592233812555e-06) (0.6666666666666666, 5.406850909316239e-06) (0.6805555555555556, 4.877347727384447e-06) (0.6944444444444444, 4.402149677585884e-06) (0.7083333333333334, 4.063786595528047e-06) (0.7222222222222222, 3.8014031478841955e-06) (0.7361111111111112, 3.551643097218267e-06) (0.75, 3.314506443530263e-06) (0.7638888888888888, 3.089993186820182e-06) (0.7777777777777778, 2.8781033270880236e-06) (0.7916666666666666, 2.6788368643337903e-06) (0.8055555555555556, 2.4921937985574783e-06) (0.8194444444444444, 2.318174129759091e-06) (0.8333333333333334, 2.1567778579386267e-06) (0.8472222222222222, 2.008004983096086e-06) (0.8611111111111112, 1.8718555052314686e-06) (0.875, 1.748329424344775e-06) (0.8888888888888888, 1.637426740436005e-06) 
    };

    \addplot [
        dashed, mark=triangle*, mark size=1.5pt, color=red, mark options={solid}
    ] coordinates {
        (0.1, 0.00746246799826622) (0.11, 0.006230071433590482) (0.12000000000000001, 0.005137466228989572) (0.13, 0.004176541180422512) (0.14, 0.003339185083848319) (0.15000000000000002, 0.0026172867352260132) (0.16, 0.0020027349305146154) (0.17, 0.0014874184656731419) (0.18, 0.001063226136660617) (0.19, 0.0007220467394360558) (0.2, 0.0004557690699584782) (0.21000000000000002, 0.00025628192418690525) (0.22, 0.00011547409808035615) (0.23, 2.5234387597849564e-05) (0.24000000000000002, -2.2548411301594618e-05) (0.25, -3.598550265895701e-05) (0.26, -2.3188090515217983e-05) (0.27, 7.73262108864144e-06) (0.28, 4.866542811164144e-05) (0.29000000000000004, 9.149912651280076e-05) (0.30000000000000004, 0.00012812251225113885) (0.31, 0.00015209598336053445) (0.32, 0.00016366634617430133) (0.33, 0.00016475200910061184) (0.33999999999999997, 0.00015727138054763855) (0.35, 0.00014314286892355406) (0.36, 0.00012428488263653074) (0.37, 0.00010261583009474138) (0.38, 8.005411970635838e-05) (0.39, 5.851815987955442e-05) (0.4, 3.9926359022501856e-05) (0.41000000000000003, 2.5777505034020233e-05) (0.42000000000000004, 1.58919037755158e-05) (0.43000000000000005, 9.670240599041647e-06) (0.44000000000000006, 6.513200856650876e-06) (0.45000000000000007, 5.821469900396567e-06) (0.45999999999999996, 6.9957330823317906e-06) (0.47, 9.436675754509698e-06) (0.48, 1.2544983268983352e-05) (0.49, 1.572134097780588e-05) (0.5, 1.8366434233030304e-05) (0.51, 2.0005904181042605e-05) (0.52, 2.0665215145559967e-05) (0.53, 2.0494787244632406e-05) (0.54, 1.964504059630994e-05) (0.55, 1.826639531864259e-05) (0.56, 1.6509271529680387e-05) (0.5700000000000001, 1.4524089347473334e-05) (0.58, 1.2461268890071484e-05) (0.59, 1.047123027552481e-05) (0.6, 8.704393621883355e-06) (0.61, 7.279001762185544e-06) (0.62, 6.184588389423413e-06) (0.63, 5.378509911577403e-06) (0.64, 4.8181227366279555e-06) (0.65, 4.4607832725555105e-06) (0.66, 4.26384792734051e-06) (0.67, 4.184673108963395e-06) (0.6799999999999999, 4.180615225404603e-06) (0.69, 4.209030684644579e-06) (0.7, 4.227275894663762e-06) (0.71, 4.200803825616416e-06) (0.72, 4.127453696352089e-06) (0.73, 4.013161287894152e-06) (0.74, 3.863862381265975e-06) (0.75, 3.685492757490931e-06) (0.76, 3.4839881975923876e-06) (0.77, 3.2652844825937177e-06) (0.78, 3.0353173935182917e-06) (0.79, 2.8000227113894787e-06) (0.8, 2.5653362172306515e-06) (0.8099999999999999, 2.337193692065182e-06) (0.82, 2.121530916916436e-06) (0.83, 1.924283672807787e-06) (0.84, 1.7513877407626054e-06) (0.85, 1.608778901804262e-06) (0.86, 1.5023929369561278e-06) (0.87, 1.4381656272415727e-06) (0.88, 1.4220327536839683e-06) (0.89, 1.4599300973066848e-06) (0.9, 1.557793439133093e-06)
    };

    \addplot [
        only marks, mark=*, mark size=2.0pt, color=black, line width=0.8pt, mark options={solid}
    ] coordinates {
        (0.1, 0.00746246799826622) (0.2, 0.0004557690699584782) (0.3, 0.0001281225122511387) (0.4, 3.9926359022501856e-05) (0.5, 1.8366434233030304e-05) (0.6, 8.704393621883355e-06) (0.7, 4.227275894663762e-06) (0.8, 2.5653362172306515e-06) (0.9, 1.5577934391330928e-06)
    };
    
    \end{axis}
    \end{tikzpicture}
    \caption{Sensitivity for \texttt{blocks.1.attn.qkv}}
    \label{supp:fig:sensitivity_analysis:qkv_sensitivity}
\end{subfigure}

\caption{Comparison of interpolation methods for two layer profiles from the DeiT model. Black points show the discrete, measured (cost, error) data points. Red curves illustrate the artifacts (e.g., artificial minima in the \texttt{qkv} plot) created by a naive, global cubic spline interpolation. Blue curves show the results of our robust, sliding-window method (Algorithm~\ref{alg:interpolate}), which provides a locally consistent fit and avoids such artifacts.}
\label{supp:fig:sensitivity_analysis}

\end{figure*}

%% file: egbib.bib
@String(CVPR= {IEEE Conf. Comput. Vis. Pattern Recog.})

@String(ICCV= {Int. Conf. Comput. Vis.})

@String(ECCV= {Eur. Conf. Comput. Vis.})

@String(NIPS= {Adv. Neural Inform. Process. Syst.})

@String(ICLR = {Int. Conf. Learn. Represent.})

@String(AAAI = {AAAI})

@String(CVPR  = {CVPR})

@String(ICCV  = {ICCV})

@String(ECCV  = {ECCV})

@String(NIPS  = {NeurIPS})

@String(ICLR  = {ICLR})

@inproceedings{
hsu2022fwsvd,
title={Language model compression with weighted low-rank factorization},
author={Yen-Chang Hsu and Ting Hua and Sungen Chang and Qian Lou and Yilin Shen and Hongxia Jin},
booktitle=ICLR,
year={2022},
url={https://openreview.net/forum?id=uPv9Y3gmAI5}
}

@misc{yuan2024asvd,
      title={ASVD: Activation-aware Singular Value Decomposition for Compressing Large Language Models}, 
      author={Zhihang Yuan and Yuzhang Shang and Yue Song and Qiang Wu and Yan Yan and Guangyu Sun},
      year={2024},
      eprint={2312.05821},
      archivePrefix={arXiv},
      primaryClass={cs.CL},
      url={https://arxiv.org/abs/2312.05821}, 
}

@inproceedings{wang2024svd-llm,
  title={{SVD}-{LLM}: Truncation-aware Singular Value Decomposition for Large Language Model Compression},
  author={Xin Wang and Yu Zheng and Zhongwei Wan and Mi Zhang},
  booktitle=ICLR,
  year={2025},
  url={https://openreview.net/forum?id=LNYIUouhdt}
}

@inproceedings{2025-svd-llmv2,
    title = "{SVD}-{LLM} V2: Optimizing Singular Value Truncation for Large Language Model Compression",
    author = "Wang, Xin  and
      Alam, Samiul  and
      Wan, Zhongwei  and
      Shen, Hui  and
      Zhang, Mi",
    editor = "Chiruzzo, Luis  and
      Ritter, Alan  and
      Wang, Lu",
    booktitle = "Proceedings of the 2025 Conference of the Nations of the Americas Chapter of the Association for Computational Linguistics: Human Language Technologies (Volume 1: Long Papers)",
    month = apr,
    year = "2025",
    address = "Albuquerque, New Mexico",
    publisher = "Association for Computational Linguistics",
    url = "https://aclanthology.org/2025.naacl-long.217/",
    doi = "10.18653/v1/2025.naacl-long.217",
    pages = "4287--4296",
    ISBN = "979-8-89176-189-6",
}

@INPROCEEDINGS{DeepCompress,
  author={Ahmed, Sabbir and Arafat, Abdullah Al and Najafi, Deniz and Mahmood, Akhlak and Rizve, Mamshad Nayeem and Al Nahian, Mohaiminul and Zhou, Ranyang and Angizi, Shaahin and Rakin, Adnan Siraj},
  booktitle={2025 IEEE/CVF Conference on Computer Vision and Pattern Recognition (CVPR)}, 
  title={DeepCompress-ViT: Rethinking Model Compression to Enhance Efficiency of Vision Transformers at the Edge}, 
  year={2025},
  volume={},
  number={},
  pages={30147-30156},
  doi={10.1109/CVPR52734.2025.02806}
  }

@InProceedings{2025_flar-svd,
    author    = {Thoma, Moritz and Villasante, Jorge and Aghajanzadeh, Emad and Sampath, Shambhavi Balamuthu and Mori, Pierpaolo and Groetzinger, Maximilian and Dylkin, Daniil and Vemparala, Manoj-Rohit and Fasfous, Nael and Frickenstein, Alexander and Mueller-Gritschneder, Daniel and Schlichtmann, Ulf},
    title     = {FLAR-SVD: Fast and Latency-Aware Singular Value Decomposition for Model Compression},
    booktitle = {Proceedings of the Computer Vision and Pattern Recognition Conference (CVPR) Workshops},
    month     = {June},
    year      = {2025},
    pages     = {1898-1907}
}

@misc{chekalina2025generalizedfisherweightedsvdscalable,
      title={Generalized Fisher-Weighted SVD: Scalable Kronecker-Factored Fisher Approximation for Compressing Large Language Models}, 
      author={Viktoriia Chekalina and Daniil Moskovskiy and Daria Cherniuk and Maxim Kurkin and Andrey Kuznetsov and Evgeny Frolov},
      year={2025},
      eprint={2505.17974},
      archivePrefix={arXiv},
      primaryClass={cs.LG},
      url={https://arxiv.org/abs/2505.17974}, 
}

@inproceedings{
    qinsi2025dobisvd,
    title={Dobi-{SVD}: Differentiable {SVD} for {LLM} Compression and Some New Perspectives},
    author={Wang Qinsi and Jinghan Ke and Masayoshi Tomizuka and Kurt Keutzer and Chenfeng Xu},
    booktitle={The Thirteenth International Conference on Learning Representations},
    year={2025},
    url={https://openreview.net/forum?id=kws76i5XB8}
}

@inproceedings{pela,
  author       = {Yangyang Guo and Guangzhi Wang and Mohan Kankanhalli},
  title        = {PELA: Learning Parameter-Efficient Models with Low-Rank Approximation},
  booktitle    = {CVPR},
  year         = {2024}
}

@InProceedings{azizi2024memoryViT,
author="Azizi, Seyedarmin
and Nazemi, Mahdi
and Pedram, Massoud",
editor="Del Bue, Alessio
and Canton, Cristian
and Pont-Tuset, Jordi
and Tommasi, Tatiana",
title="Memory-Efficient Vision Transformers: An Activation-Aware Mixed-Rank Compression Strategy",
booktitle="Computer Vision -- ECCV 2024 Workshops",
year="2025",
publisher="Springer Nature Switzerland",
address="Cham",
pages="55--66",
isbn="978-3-031-91979-4"
}

@INPROCEEDINGS{Luo2024FastLRD,
  author={Luo, Yuan-June and Tai, Yu-Shan and Lin, Ming-Guang and Wu, An-Yeu Andy},
  booktitle={2024 IEEE International Symposium on Circuits and Systems (ISCAS)}, 
  title={Similarity-Aware Fast Low-Rank Decomposition Framework for Vision Transformers}, 
  year={2024},
  volume={},
  number={},
  pages={1-5},
  doi={10.1109/ISCAS58744.2024.10557934}}

@InProceedings{xiao2023comcat,
  title = 	 {{COMCAT}: Towards Efficient Compression and Customization of Attention-Based Vision Models},
  author =       {Xiao, Jinqi and Yin, Miao and Gong, Yu and Zang, Xiao and Ren, Jian and Yuan, Bo},
  booktitle = {Proceedings of the 40th International Conference on Machine Learning},
  pages = 	 {38125--38136},
  year = 	 {2023},
  editor = 	 {Krause, Andreas and Brunskill, Emma and Cho, Kyunghyun and Engelhardt, Barbara and Sabato, Sivan and Scarlett, Jonathan},
  volume = 	 {202},
  series = 	 {Proceedings of Machine Learning Research},
  month = 	 {23--29 Jul},
  publisher =    {PMLR},
  url = 	 {https://proceedings.mlr.press/v202/xiao23e.html}
}

@INPROCEEDINGS{Chang2024FLoRA,
  author={Chang, Chi-Chih and Sung, Yuan-Yao and Yu, Shixing and Huang, Ning-Chi and Marculescu, Diana and Wu, Kai-Chiang},
  booktitle={2024 IEEE/CVF Winter Conference on Applications of Computer Vision}, 
  title={FLORA: Fine-grained Low-Rank Architecture Search for Vision Transformer}, 
  year={2024},
  volume={},
  number={},
  pages={2470-2479},
  doi={10.1109/WACV57701.2024.00247}
}

@inproceedings{
    thoma2026advancing,
    title={Advancing {SVD}-based {LLM} Compression via Layer-Wise Error Model Search},
    author={Moritz Thoma and Maximilian Groezinger and Maximilian Forstenh{\"a}usler and Emad Aghajanzadeh and Manoj Rohit Vemparala and Christos Anagnostopoulos and Pierpaolo Mori and Nael Fasfous and Alexander Frickenstein and Daniel Mueller-Gritschneder and Ulf Schlichtmann},
    booktitle={Forty-third International Conference on Machine Learning},
    year={2026},
    url={https://openreview.net/forum?id=IjIgNPFuCt}
}

@inproceedings{gao2024adaptive_ARS,
  title={Adaptive rank selections for low-rank approximation of language models},
  author={Gao, Shangqian and Hua, Ting and Hsu, Yen-Chang and Shen, Yilin and Jin, Hongxia},
  booktitle={Proceedings of the 2024 Conference of the North American Chapter of the Association for Computational Linguistics: Human Language Technologies (Volume 1: Long Papers)},
  pages={227--241},
  year={2024}
}

@Inbook{VanLoan1993,
author="Van Loan, C. F.
and Pitsianis, N.",
editor="Moonen, Marc S.
and Golub, Gene H.
and De Moor, Bart L. R.",
title="Approximation with Kronecker Products",
bookTitle="Linear Algebra for Large Scale and Real-Time Applications",
year="1993",
publisher="Springer Netherlands",
address="Dordrecht",
pages="293--314",
isbn="978-94-015-8196-7",
doi="10.1007/978-94-015-8196-7_17",
url="https://doi.org/10.1007/978-94-015-8196-7_17"
}

@InProceedings{kfac-martens15,
  title = 	 {Optimizing Neural Networks with Kronecker-factored Approximate Curvature},
  author = 	 {Martens, James and Grosse, Roger},
  booktitle = 	 {Proceedings of the 32nd International Conference on Machine Learning},
  pages = 	 {2408--2417},
  year = 	 {2015},
  editor = 	 {Bach, Francis and Blei, David},
  volume = 	 {37},
  series = 	 {Proceedings of Machine Learning Research},
  address = 	 {Lille, France},
  month = 	 {07--09 Jul},
  publisher =    {PMLR},
  url = 	 {https://proceedings.mlr.press/v37/martens15.html}
}

@inproceedings{runa-kfacreduce,
 author = {Eschenhagen, Runa and Immer, Alexander and Turner, Richard and Schneider, Frank and Hennig, Philipp},
 booktitle = {Advances in Neural Information Processing Systems},
 editor = {A. Oh and T. Naumann and A. Globerson and K. Saenko and M. Hardt and S. Levine},
 pages = {33624--33655},
 publisher = {Curran Associates, Inc.},
 title = {Kronecker-Factored Approximate Curvature for Modern Neural Network Architectures},
 url = {https://proceedings.neurips.cc/paper_files/paper/2023/file/6a6679e3d5b9f7d5f09cdb79a5fc3fd8-Paper-Conference.pdf},
 volume = {36},
 year = {2023}
}

@InProceedings{shampoo-gupta,
  title = 	 {Shampoo: Preconditioned Stochastic Tensor Optimization},
  author =       {Gupta, Vineet and Koren, Tomer and Singer, Yoram},
  booktitle = 	 {Proceedings of the 35th International Conference on Machine Learning},
  pages = 	 {1842--1850},
  year = 	 {2018},
  editor = 	 {Dy, Jennifer and Krause, Andreas},
  volume = 	 {80},
  series = 	 {Proceedings of Machine Learning Research},
  month = 	 {10--15 Jul},
  publisher =    {PMLR},
  url = 	 {https://proceedings.mlr.press/v80/gupta18a.html}
}

@inproceedings{
shampoo-squared,
title={A New Perspective on Shampoo's Preconditioner},
author={Depen Morwani and Itai Shapira and Nikhil Vyas and eran malach and Sham M. Kakade and Lucas Janson},
booktitle={The Thirteenth International Conference on Learning Representations},
year={2025},
url={https://openreview.net/forum?id=c6zI3Cp8c6}
}

@article{fisherhessianmartens,
  author  = {James Martens},
  title   = {New Insights and Perspectives on the Natural Gradient Method},
  journal = {Journal of Machine Learning Research},
  year    = {2020},
  volume  = {21},
  number  = {146},
  pages   = {1--76},
  url     = {http://jmlr.org/papers/v21/17-678.html}
}

@INPROCEEDINGS{imagenet,
  author={Deng, Jia and Dong, Wei and Socher, Richard and Li, Li-Jia and Kai Li and Li Fei-Fei},
  booktitle=CVPR, 
  title={ImageNet: A large-scale hierarchical image database}, 
  year={2009},
  volume={},
  number={},
  pages={248-255},
  doi={10.1109/CVPR.2009.5206848}}

@INPROCEEDINGS{ade20k,
  author={Zhou, Bolei and Zhao, Hang and Puig, Xavier and Fidler, Sanja and Barriuso, Adela and Torralba, Antonio},
  booktitle={2017 IEEE Conference on Computer Vision and Pattern Recognition (CVPR)}, 
  title={Scene Parsing through ADE20K Dataset}, 
  year={2017},
  volume={},
  number={},
  pages={5122-5130},
  doi={10.1109/CVPR.2017.544}}

@inproceedings{COCO,
  title={Microsoft coco: Common objects in context},
  author={Lin, Tsung-Yi and Maire, Michael and Belongie, Serge and Hays, James and Perona, Pietro and Ramanan, Deva and Doll{\'a}r, Piotr and Zitnick, C Lawrence},
  booktitle={Computer vision--ECCV 2014: 13th European conference, zurich, Switzerland, September 6-12, 2014, proceedings, part v 13},
  pages={740--755},
  year={2014},
  organization={Springer}
}

@article{zellers2019hellaswag,
  title={Hellaswag: Can a machine really finish your sentence?},
  author={Zellers, Rowan and Holtzman, Ari and Bisk, Yonatan and Farhadi, Ali and Choi, Yejin},
  journal={arXiv preprint arXiv:1905.07830},
  year={2019}
}

@article{mihaylov2018openbookqa,
  title={Can a suit of armor conduct electricity? a new dataset for open book question answering},
  author={Mihaylov, Todor and Clark, Peter and Khot, Tushar and Sabharwal, Ashish},
  journal={arXiv preprint arXiv:1809.02789},
  year={2018}
}

@article{sakaguchi2021winogrande,
  title={Winogrande: An adversarial winograd schema challenge at scale},
  author={Sakaguchi, Keisuke and Bras, Ronan Le and Bhagavatula, Chandra and Choi, Yejin},
  journal={Communications of the ACM},
  volume={64},
  number={9},
  pages={99--106},
  year={2021},
  publisher={ACM New York, NY, USA}
}

@article{clark2018arc,
  title={Think you have solved question answering? try arc, the ai2 reasoning challenge},
  author={Clark, Peter and Cowhey, Isaac and Etzioni, Oren and Khot, Tushar and Sabharwal, Ashish and Schoenick, Carissa and Tafjord, Oyvind},
  journal={arXiv preprint arXiv:1803.05457},
  year={2018}
}

@inproceedings{bisk2020piqa,
  title={Piqa: Reasoning about physical commonsense in natural language},
  author={Bisk, Yonatan and Zellers, Rowan and Gao, Jianfeng and Choi, Yejin and others},
  booktitle={Proceedings of the AAAI conference on artificial intelligence},
  volume={34},
  number={05},
  pages={7432--7439},
  year={2020}
}

@inproceedings{
merity2017pointer_wikitext2,
title={Pointer Sentinel Mixture Models},
author={Stephen Merity and Caiming Xiong and James Bradbury and Richard Socher},
booktitle={International Conference on Learning Representations},
year={2017},
url={https://openreview.net/forum?id=Byj72udxe}
}

@InProceedings{deit,
  title =     {Training data-efficient image transformers \& distillation through attention},
  author =    {Touvron, Hugo and Cord, Matthieu and Douze, Matthijs and Massa, Francisco and Sablayrolles, Alexandre and Jegou, Herve},
  booktitle = {Proceedings of the 38th International Conference on Machine Learning},
  pages =     {10347--10357},
  year =      {2021},
  volume =    {139},
  month =     {July}
}

@article{convnext,
  author  = {Zhuang Liu and Hanzi Mao and Chao-Yuan Wu and Christoph Feichtenhofer and Trevor Darrell and Saining Xie},
  title   = {A ConvNet for the 2020s},
  journal = CVPR,
  year    = {2022},
}

@inproceedings{swin,
  title={Swin Transformer: Hierarchical Vision Transformer using Shifted Windows},
  author={Liu, Ze and Lin, Yutong and Cao, Yue and Hu, Han and Wei, Yixuan and Zhang, Zheng and Lin, Stephen and Guo, Baining},
  booktitle={Proceedings of the IEEE/CVF International Conference on Computer Vision},
  year={2021}
}

@inproceedings{mambavision,
  title={Mambavision: A hybrid mamba-transformer vision backbone},
  author={Hatamizadeh, Ali and Kautz, Jan},
  booktitle={Proceedings of the Computer Vision and Pattern Recognition Conference},
  pages={25261--25270},
  year={2025}
}

@inproceedings{
    vit,
    title={An Image is Worth 16x16 Words: Transformers for Image Recognition at Scale},
    author={Alexey Dosovitskiy and Lucas Beyer and Alexander Kolesnikov and Dirk Weissenborn and Xiaohua Zhai and Thomas Unterthiner and Mostafa Dehghani and Matthias Minderer and Georg Heigold and Sylvain Gelly and Jakob Uszkoreit and Neil Houlsby},
    booktitle={International Conference on Learning Representations},
    year={2021},
    url={https://openreview.net/forum?id=YicbFdNTTy}
}

@INPROCEEDINGS{maskrcnn,
  author={He, Kaiming and Gkioxari, Georgia and Dollár, Piotr and Girshick, Ross},
  booktitle={2017 IEEE International Conference on Computer Vision (ICCV)}, 
  title={Mask R-CNN}, 
  year={2017},
  volume={},
  number={},
  pages={2980-2988},
  doi={10.1109/ICCV.2017.322}}

@inproceedings{xiao2018unified,
  title={Unified perceptual parsing for scene understanding},
  author={Xiao, Tete and Liu, Yingcheng and Zhou, Bolei and Jiang, Yuning and Sun, Jian},
  booktitle={Proceedings of the European conference on computer vision (ECCV)},
  pages={418--434},
  year={2018}
}

@software{john_forrest_2024_13347261_cbc,
  author       = {John Forrest and
                  Ted Ralphs and
                  Stefan Vigerske and
                  Haroldo Gambini Santos and
                  John Forrest and
                  Lou Hafer and
                  Bjarni Kristjansson and
                  jpfasano and
                  EdwinStraver and
                  Jan-Willem and
                  Miles Lubin and
                  rlougee and
                  a-andre and
                  jpgoncal1 and
                  Samuel Brito and
                  h-i-gassmann and
                  Cristina and
                  Matthew Saltzman and
                  tosttost and
                  Bruno Pitrus and
                  Fumiaki MATSUSHIMA and
                  Patrick Vossler and
                  Ron @ SWGY and
                  to-st},
  title        = {coin-or/Cbc: Release releases/2.10.12},
  month        = aug,
  year         = 2024,
  publisher    = {Zenodo},
  version      = {releases/2.10.12},
  doi          = {10.5281/zenodo.13347261},
  url          = {https://doi.org/10.5281/zenodo.13347261},
}

@Manual{pulp,
  title        = {PuLP: A Linear Programming Toolkit for Python},
  author       = {Stuart Mitchell and Michael O'Sullivan and Iain Dunning},
  year         = {2011},
  note         = {Version 3.3.0, \url{https://coin-or.github.io/pulp/}},
  organization = {COIN-OR},
  url          = {https://coin-or.github.io/pulp/}
}

@misc{eval-harness,
  author       = {Gao, Leo and Tow, Jonathan and Abbasi, Baber and Biderman, Stella and Black, Sid and DiPofi, Anthony and Foster, Charles and Golding, Laurence and Hsu, Jeffrey and Le Noac'h, Alain and Li, Haonan and McDonell, Kyle and Muennighoff, Niklas and Ociepa, Chris and Phang, Jason and Reynolds, Laria and Schoelkopf, Hailey and Skowron, Aviya and Sutawika, Lintang and Tang, Eric and Thite, Anish and Wang, Ben and Wang, Kevin and Zou, Andy},
  title        = {The Language Model Evaluation Harness},
  month        = 07,
  year         = 2024,
  publisher    = {Zenodo},
  version      = {v0.4.3},
  doi          = {10.5281/zenodo.12608602},
  url          = {https://zenodo.org/records/12608602}
}

@InProceedings{Yang_2023_Nvit,
    author    = {Yang, Huanrui and Yin, Hongxu and Shen, Maying and Molchanov, Pavlo and Li, Hai and Kautz, Jan},
    title     = {Global Vision Transformer Pruning With Hessian-Aware Saliency},
    booktitle = CVPR,
    month     = {June},
    year      = {2023},
    pages     = {18547-18557}
}

@inproceedings{fang2024isomorphicpruning,
  title={Isomorphic pruning for vision models},
  author={Fang, Gongfan and Ma, Xinyin and Mi, Michael Bi and Wang, Xinchao},
  booktitle=ECCV,
  pages={232--250},
  year={2024},
  organization={Springer}
}

@article{gao2024disp,
  title={Disp-llm: Dimension-independent structural pruning for large language models},
  author={Gao, Shangqian and Lin, Chi-Heng and Hua, Ting and Tang, Zheng and Shen, Yilin and Jin, Hongxia and Hsu, Yen-Chang},
  journal=NIPS,
  volume={37},
  pages={72219--72244},
  year={2024}
}

@inproceedings{
    sparsegpt,
    author = {Frantar, Elias and Alistarh, Dan},
    title = {SparseGPT: massive language models can be accurately pruned in one-shot},
    year = {2023},
    publisher = {JMLR.org},
    booktitle = {Proceedings of the 40th International Conference on Machine Learning},
    articleno = {414},
    numpages = {15},
    location = {Honolulu, Hawaii, USA},
    series = {ICML'23}
}

@article{fang2024maskllm,
  title={Maskllm: Learnable semi-structured sparsity for large language models},
  author={Fang, Gongfan and Yin, Hongxu and Muralidharan, Saurav and Heinrich, Greg and Pool, Jeff and Kautz, Jan and Molchanov, Pavlo and Wang, Xinchao},
  journal={Advances in Neural Information Processing Systems},
  volume={37},
  pages={7736--7758},
  year={2024}
}

@InProceedings{Agarwal_2024_CVPR,
    author    = {Agarwal, Parakh and Mathew, Manu and Patel, Kunal Ranjan and Tripathi, Varun and Swami, Pramod},
    title     = {Prune Efficiently by Soft Pruning},
    booktitle = {Proceedings of the IEEE/CVF Conference on Computer Vision and Pattern Recognition (CVPR) Workshops},
    month     = {June},
    year      = {2024},
    pages     = {2210-2217}
}

@inproceedings{luo2025icp,
  title={ICP: Immediate Compensation Pruning for Mid-to-high Sparsity},
  author={Luo, Xin and Fu, Xueming and Jiang, Zihang and Zhou, S Kevin},
  booktitle={Proceedings of the Computer Vision and Pattern Recognition Conference},
  pages={9487--9496},
  year={2025}
}

@software{Wightman_PyTorch_Image_Models,
author = {Wightman, Ross},
doi = {10.5281/zenodo.4414861},
license = {Apache 2.0},
title = {{PyTorch Image Models}},
url = {https://github.com/huggingface/pytorch-image-models},
version = {1.0.11}
}

@article{mmdetection,
  title   = {{MMDetection}: Open MMLab Detection Toolbox and Benchmark},
  author  = {Chen, Kai and Wang, Jiaqi and Pang, Jiangmiao and Cao, Yuhang and
             Xiong, Yu and Li, Xiaoxiao and Sun, Shuyang and Feng, Wansen and
             Liu, Ziwei and Xu, Jiarui and Zhang, Zheng and Cheng, Dazhi and
             Zhu, Chenchen and Cheng, Tianheng and Zhao, Qijie and Li, Buyu and
             Lu, Xin and Zhu, Rui and Wu, Yue and Dai, Jifeng and Wang, Jingdong
             and Shi, Jianping and Ouyang, Wanli and Loy, Chen Change and Lin, Dahua},
  journal= {arXiv preprint arXiv:1906.07155},
  year={2019}
}

@misc{mmseg2020,
    title={{MMSegmentation}: OpenMMLab Semantic Segmentation Toolbox and Benchmark},
    author={MMSegmentation Contributors},
    howpublished = {\url{https://github.com/open-mmlab/mmsegmentation}},
    year={2020}
}

@software{Neural_Magic_DeepSparse_2021,
author = {Neural Magic},
month = feb,
title = {{DeepSparse}},
url = {https://github.com/neuralmagic/deepsparse},
year = {2021}
}

@article{bai-2023,
	author = {Bai, Hongxiao and Bai, Hongxiao},
	month = {7},
	title = {{Structured sparsity in the NVIDIA Ampere architecture and applications in search engines}},
	year = {2023},
	url = {https://developer.nvidia.com/blog/structured-sparsity-in-the-nvidia-ampere-architecture-and-applications-in-search-engines/},
}

@INPROCEEDINGS {runddontwalk,
author = { Chen, Jierun and Kao, Shiu-hong and He, Hao and Zhuo, Weipeng and Wen, Song and Lee, Chul-Ho and Chan, S.-H. Gary },
booktitle = { 2023 IEEE/CVF Conference on Computer Vision and Pattern Recognition (CVPR) },
title = {{ Run, Don't Walk: Chasing Higher FLOPS for Faster Neural Networks }},
year = {2023},
volume = {},
ISSN = {},
pages = {12021-12031},
doi = {10.1109/CVPR52729.2023.01157},
url = {https://doi.ieeecomputersociety.org/10.1109/CVPR52729.2023.01157},
publisher = {IEEE Computer Society},
address = {Los Alamitos, CA, USA},
month =Jun}

@InProceedings{Fu_2025_CVPR,
    author    = {Fu, Minghao and Yu, Hao and Shao, Jie and Zhou, Junjie and Zhu, Ke and Wu, Jianxin},
    title     = {Quantization without Tears},
    booktitle = {Proceedings of the IEEE/CVF Conference on Computer Vision and Pattern Recognition (CVPR)},
    month     = {June},
    year      = {2025},
    pages     = {4462-4472}
}

@inproceedings{
    frantar2023optq,
    title={{OPTQ}: Accurate Quantization for Generative Pre-trained Transformers},
    author={Elias Frantar and Saleh Ashkboos and Torsten Hoefler and Dan Alistarh},
    booktitle={The Eleventh International Conference on Learning Representations },
    year={2023},
    url={https://openreview.net/forum?id=tcbBPnfwxS}
}

@inproceedings{
liu2018darts,
title={{DARTS}: Differentiable Architecture Search},
author={Hanxiao Liu and Karen Simonyan and Yiming Yang},
booktitle={International Conference on Learning Representations},
year={2019},
url={https://openreview.net/forum?id=S1eYHoC5FX},
}

@article{szekely2007dcorr,
  title = {Measuring and testing dependence by correlation of distances},
  volume = {35},
  ISSN = {0090-5364},
  url = {http://dx.doi.org/10.1214/009053607000000505},
  DOI = {10.1214/009053607000000505},
  number = {6},
  journal = {The Annals of Statistics},
  publisher = {Institute of Mathematical Statistics},
  author = {Székely,  Gábor J. and Rizzo,  Maria L. and Bakirov,  Nail K.},
  year = {2007},
  month = dec 
}

@article{panda2019hyppo,
  title={hyppo: A multivariate hypothesis testing Python package},
  author={Panda, Sambit and Palaniappan, Satish and Xiong, Junhao and Bridgeford, Eric W and Mehta, Ronak and Shen, Cencheng and Vogelstein, Joshua T},
  journal={arXiv preprint arXiv:1907.02088},
  year={2019}
}

@article{avron2011trace, author = {Avron, Haim and Toledo, Sivan}, title = {Randomized algorithms for estimating the trace of an implicit symmetric positive semi-definite matrix}, year = {2011}, issue_date = {April 2011}, publisher = {Association for Computing Machinery}, address = {New York, NY, USA}, volume = {58}, number = {2}, issn = {0004-5411}, url = {https://doi.org/10.1145/1944345.1944349}, doi = {10.1145/1944345.1944349}, journal = {J. ACM}, month = apr, articleno = {8}, numpages = {34} }

@article{hutchinson1989stochastic,
  title={A stochastic estimator of the trace of the influence matrix for Laplacian smoothing splines},
  author={Hutchinson, Michael F},
  journal={Communications in Statistics-Simulation and Computation},
  volume={18},
  number={3},
  pages={1059--1076},
  year={1989},
  publisher={Taylor \& Francis}
}

@book{deboor1978practical,
  author    = {de Boor, Carl},
  title     = {A Practical Guide to Splines},
  series    = {Applied Mathematical Sciences},
  volume    = {27},
  publisher = {Springer-Verlag},
  address   = {New York},
  year      = {1978}
}

@article{fritsch1980monotone,
  author  = {Fritsch, F. N. and Carlson, R. E.},
  title   = {Monotone Piecewise Cubic Interpolation},
  journal = {SIAM Journal on Numerical Analysis},
  volume  = {17}, number = {2}, pages = {238--246}, year = {1980},
  doi     = {10.1137/0717021}
}

@book{nemhauser1988integer,
  author    = {Nemhauser, George L. and Wolsey, Laurence A.},
  title     = {Integer and Combinatorial Optimization},
  series    = {Wiley-Interscience Series in Discrete Mathematics and Optimization},
  publisher = {John Wiley \& Sons},
  address   = {New York},
  year      = {1988},
  isbn      = {0-471-82819-X}
}
